\documentclass{article}

\usepackage{microtype}
\usepackage{graphicx}
\usepackage{subcaption}
\usepackage{booktabs} %

\usepackage{hyperref}

\usepackage[accepted]{icml2026}

\usepackage{amsmath}
\usepackage{amssymb}
\usepackage{mathtools}
\usepackage{amsthm}
\usepackage[dvipsnames,table]{xcolor}

\usepackage[normalem]{ulem}

\usepackage{hyperref}
\usepackage{xurl}
\hypersetup{
  breaklinks=true
}

\usepackage{multicol}
\usepackage{multirow}
\usepackage{pifont}
\definecolor{lightblue}{RGB}{220,235,250}
\definecolor{lightgray}{gray}{0.91}

\usepackage{amsmath,amsfonts,bm}

\def\eqref#1{equation~\ref{#1}}

\def\1{\bm{1}}

\DeclareMathAlphabet{\mathsfit}{\encodingdefault}{\sfdefault}{m}{sl}
\SetMathAlphabet{\mathsfit}{bold}{\encodingdefault}{\sfdefault}{bx}{n}

\usepackage[most,skins,theorems]{tcolorbox}
\usepackage{tcolorbox} 
\usepackage{listings}
\newtcolorbox{promptbox}[1][]{
  enhanced, breakable,
  colback=gray!1,      
  colframe=gray!60,    
  coltitle=black,      
  boxrule=2pt,
  arc=10pt,
  left=6pt, right=6pt, top=6pt, bottom=6pt,
  title={#1}, fonttitle=\bfseries,
  attach boxed title to top left={yshift*=-3mm},
  boxed title style={colback=gray!10}
}

\tcbset{
  aibox/.style={
    width=\linewidth,
    top=8pt,
    bottom=4pt,
    colback=inftythink-red!15,
    colframe=inftythink-red,
    colbacktitle=inftythink-red!90!black,
    enhanced,
    center,
    attach boxed title to top left={yshift=-0.1in,xshift=0.15in},
    boxed title style={boxrule=0pt,colframe=white,},
  }
}
\newtcolorbox{AIbox}[2][]{aibox,title=#2,#1}

\usepackage[capitalize,noabbrev]{cleveref}

\theoremstyle{plain}
\newtheorem{theorem}{Theorem}[section]
\newtheorem{proposition}[theorem]{Proposition}

\theoremstyle{definition}
\newtheorem{definition}[theorem]{Definition}

\theoremstyle{remark}
\newtheorem{remark}[theorem]{Remark}

\usepackage[textsize=tiny]{todonotes}

\usepackage{xspace}
\newcommand{\methodname}{InftyThink$^{+}$\xspace}

\icmltitlerunning{InftyThink+: Effective and Efficient Infinite-Horizon Reasoning via Reinforcement Learning}

\begin{document}

\usetikzlibrary{patterns,patterns.meta,positioning,backgrounds,calc}

\definecolor{mila-purple}{RGB}{102,46,125}
\definecolor{mila-khaki}{RGB}{245,231,206}
\definecolor{mila-khaki-dark}{RGB}{185,174,154}
\definecolor{mila-yellow}{HTML}{F9B40D}
\definecolor{mila-yellow-dark}{HTML}{DD950B}
\definecolor{mila-greenblue}{RGB}{130,207,190}
\definecolor{plot-blue}{RGB}{132,182,206}
\definecolor{plot-red}{RGB}{248,191,199}
\definecolor{plot-green}{RGB}{157,207,127}
\definecolor{mila-purple-1}{HTML}{EFE8F1}
\definecolor{mila-purple-2}{HTML}{D8C9DD}
\definecolor{mila-purple-3}{HTML}{C1AACA}
\definecolor{mila-purple-4}{HTML}{AA8BB7}
\definecolor{mila-purple-5}{HTML}{936BA3}
\definecolor{mila-purple-6}{HTML}{7C4D90}
\definecolor{mila-purple-dark}{HTML}{451059}
\definecolor{mila-purple-superdark}{HTML}{340040}
\definecolor{delethink-blue}{HTML}{3ba1ff}
\definecolor{delethink-purple}{HTML}{6441D2}
\definecolor{delethink-dark-purple}{HTML}{3C2880}

\definecolor{inftythink-red}{RGB}{248,191,199}
\definecolor{inftythink-blue}{RGB}{59,161,255}
\definecolor{inftythink-green}{RGB}{157,207,127}
\definecolor{inftythink-yellow}{RGB}{249,180,13}

\newlength{\sz}         \setlength{\sz}{0.7cm}
\newlength{\sw}         \setlength{\sw}{0.4cm}
\newlength{\sww}        \setlength{\sww}{0.37cm}
\newlength{\rectr}         \setlength{\rectr}{5pt}
\newlength{\rectw}         \setlength{\rectw}{2cm}
\newlength{\rectwconclu}   \setlength{\rectwconclu}{1cm}
\newlength{\rectwlongcot}  \setlength{\rectwlongcot}{15.5cm}
\newlength{\segap}      \setlength{\segap}{1.2cm}
\newlength{\segaparrow} \setlength{\segaparrow}{0.1cm}

\newlength{\BORDER}
\setlength{\BORDER}{1pt}

\tikzset{
  border/.style={line width=\BORDER},
  centerline/.style={dash pattern=on 1pt off 2pt, line cap=round, line width=\BORDER},
  arrowline/.style={->, line width=\BORDER},
  curvedarrow/.style={->, draw=black!30, line width=0.7pt, shorten >=2pt, shorten <=4pt}
}

\tikzset{
  segment marker shape/.is choice,
  segment marker shape/circle/.code={%
    \def\drawmarker##1{\fill[white] (##1) circle[radius=1.2pt];}%
  },
  segment marker shape/square/.code={%
    \def\drawmarker##1{\fill[white]
      ($(##1)+(-1.1pt,-1.1pt)$) rectangle ($(##1)+(1.1pt,1.1pt)$);}%
  },
  segment marker shape/diamond/.code={%
    \def\drawmarker##1{\fill[white]
      ($(##1)+(0,1.4pt)$)--($(##1)+(1.4pt,0)$)--($(##1)+(0,-1.4pt)$)--($(##1)+(-1.4pt,0)$)--cycle;}%
  },
  segment marker shape/triangle/.code={%
    \def\drawmarker##1{\fill[white]
      ($(##1)+(-1.2pt,-1.2pt)$)--($(##1)+(-1.2pt,1.2pt)$)--($(##1)+(1.6pt,0)$)--cycle;}%
  },
  segment marker shape/trangle/.style = {segment marker shape/triangle},
}

\tikzset{
  pics/inftythink_iter_1/.style n args={3}{
    code={
      \begin{scope}[node distance=0pt]
        \begin{scope}[local bounding box=sq]
          \path[
            draw=#1, line width=1.4\BORDER, dashed,
            preaction={fill=#1!5}, pattern color=black!30
          ]
          (\rectr,0) -- (\sw,0) -- (\sw,\sz) -- (\rectr,\sz)
          arc (90:180:\rectr) -- (0,\rectr) arc (180:270:\rectr) -- cycle;
        \end{scope}
        \node[font=\bfseries\large] at (sq.center) {$\mathrm{q}$};

        \begin{scope}[local bounding box=rect, xshift=3pt+\sw]
          \path[
            draw=#2, line width=1.4\BORDER,
            preaction={fill=#2!10}, pattern color=black!30
          ]
          (0.8\rectw,0) -- (0,0) -- (0,\sz) -- (0.8\rectw,\sz);
        \end{scope}

        \begin{scope}[xshift=3pt+\sw+0.8\rectw, local bounding box=summary]
          \path[
            draw=#3, line width=1.4\BORDER,
            preaction={fill=#3!10}, pattern color=black!30
          ]
          (0,\sz) -- (0.4\rectw,\sz) -- (0.4\rectw,0) -- (0,0);
        \end{scope}

        \begin{scope}[on background layer]
          \fill[#2!10] (rect.south west) rectangle (rect.north east);
          \fill[#3!10] (summary.south west) rectangle (summary.north east);
        \end{scope}

        \begin{scope}
          \clip (rect.south west) rectangle (rect.north east);
        
          \def\dia{1.8pt}
        
          \coordinate (gxW) at ($(rect.west)!0.19!(rect.east)$);
          \coordinate (gxE) at ($(rect.west)!0.81!(rect.east)$);
        
          \coordinate (gy1) at ($(rect.south)!0.75!(rect.north)$);
          \coordinate (p1)  at ($(gxW|-gy1)$);
          \fill[#2!60]
            ($(p1)+(-\dia,0)$) -- ($(p1)+(0,\dia)$) -- ($(p1)+(\dia,0)$) -- ($(p1)+(0,-\dia)$) -- cycle;
          \draw[#2!60, line width=\BORDER, line cap=round]
            ($(p1)+(\dia+0.6pt,0)$) -- ([xshift=-2pt]gxE|-gy1);
        
          \coordinate (gy2) at ($(rect.south)!0.50!(rect.north)$);
          \coordinate (p2)  at ($(gxW|-gy2)$);
          \fill[#2!60]
            ($(p2)+(-\dia,0)$) -- ($(p2)+(0,\dia)$) -- ($(p2)+(\dia,0)$) -- ($(p2)+(0,-\dia)$) -- cycle;
          \draw[#2!60, line width=\BORDER, line cap=round]
            ($(p2)+(\dia+0.6pt,0)$) -- ([xshift=-5pt]gxE|-gy2);
        
          \coordinate (gy3) at ($(rect.south)!0.25!(rect.north)$);
          \coordinate (p3)  at ($(gxW|-gy3)$);
          \fill[#2!60]
            ($(p3)+(-\dia,0)$) -- ($(p3)+(0,\dia)$) -- ($(p3)+(\dia,0)$) -- ($(p3)+(0,-\dia)$) -- cycle;
          \draw[#2!60, line width=\BORDER, line cap=round]
            ($(p3)+(\dia+0.6pt,0)$) -- (gxE|-gy3);
        \end{scope}

        \begin{scope}
          \clip (summary.south west) rectangle (summary.north east);
        
          \path coordinate (gxW) at ($(summary.west)!0.18!(summary.east)$);
          \path coordinate (gxE) at ($(summary.west)!0.75!(summary.east)$);
        
          \path coordinate (gy1) at ($(summary.south)!0.75!(summary.north)$);
          \fill[#3!60] (gxW|-gy1) circle[radius=1.2pt];
          \draw[#3!60, line width=\BORDER, line cap=round]
               ($(gxW|-gy1)+(1.6pt,0)$) -- ([xshift=-0pt]gxE|-gy1);
        
          \path coordinate (gy2) at ($(summary.south)!0.50!(summary.north)$);
          \fill[#3!60] (gxW|-gy2) circle[radius=1.2pt];
          \draw[#3!60, line width=\BORDER, line cap=round]
               ($(gxW|-gy2)+(1.6pt,0)$) -- ([xshift=-3pt]gxE|-gy2);
        
          \path coordinate (gy3) at ($(summary.south)!0.25!(summary.north)$);
          \fill[#3!60] (gxW|-gy3) circle[radius=1.2pt];
          \draw[#3!60, line width=\BORDER, line cap=round]
               ($(gxW|-gy3)+(1.6pt,0)$) -- ([xshift=-6pt]gxE|-gy3);
        \end{scope}

        \coordinate (-sqwest)   at (sq.west);
        \coordinate (-rectwest) at (rect.west);
        \coordinate (-recteast) at (rect.east);
        \coordinate (-east)     at (summary.east);
        \coordinate (-north) at ([xshift=1pt]summary.north);
        \coordinate (-rectnorth) at ([xshift=0.25\rectw]rect.north);
        \coordinate (-rectsouth) at (rect.south);
      \end{scope}
    }
  },
  pics/inftythink_iter_1/.default={mila-purple-1}{mila-purple-2}
}

\tikzset{
  pics/inftythink_iter_n-1/.style n args={3}{
    code={
      \begin{scope}[node distance=0pt]
        \begin{scope}[local bounding box=sq]
          \path[draw=none, preaction={fill=#1!10}, pattern color=black!30]
            (\rectr,0) -- (\sww,0) -- (\sww,\sz) -- (\rectr,\sz)
            arc (90:180:\rectr) -- (0,\rectr) arc (180:270:\rectr) -- cycle;
          \draw[#1, dashed, line width=1.4\BORDER]
            (\sww,\sz) -- (\rectr,\sz) arc (90:180:\rectr) -- (0,\rectr)
            arc (180:270:\rectr) -- (\rectr,0) -- (\sww,0);
          \path[use as bounding box] (0,0) rectangle (\sww,\sz);
        \end{scope}
        \node[font=\bfseries\large] at (sq.center) {$\mathrm{q}$};

        \node[
          draw=none, minimum height=\sz, minimum width=0.4\rectw,
          inner sep=0pt, outer sep=0pt, right=0pt of sq
        ] (rectprmpt) {};
        \begin{scope}[on background layer]
          \fill[#3!10] (rectprmpt.south west) rectangle (rectprmpt.north east);
        \end{scope}
        \draw[#3, line width=1.4\BORDER, dashed]
          (rectprmpt.north west) -- (rectprmpt.north east)
          (rectprmpt.north east) -- (rectprmpt.south east)
          (rectprmpt.south east) -- (rectprmpt.south west);

        \begin{scope}
          \clip (rectprmpt.south west) rectangle (rectprmpt.north east);
        
          \path coordinate (gxW) at ($(rectprmpt.west)!0.19!(rectprmpt.east)$);
          \path coordinate (gxE) at ($(rectprmpt.west)!0.81!(rectprmpt.east)$);
        
          \path coordinate (gy1) at ($(rectprmpt.south)!0.75!(rectprmpt.north)$);
          \fill[#3!60] (gxW|-gy1) circle[radius=1.2pt];
          \draw[#3!60, line width=\BORDER, line cap=round]
               ($(gxW|-gy1)+(1.6pt,0)$) -- ([xshift=-0pt]gxE|-gy1);
        
          \path coordinate (gy2) at ($(rectprmpt.south)!0.50!(rectprmpt.north)$);
          \fill[#3!60] (gxW|-gy2) circle[radius=1.2pt];
          \draw[#3!60, line width=\BORDER, line cap=round]
               ($(gxW|-gy2)+(1.6pt,0)$) -- ([xshift=-3pt]gxE|-gy2);
        
          \path coordinate (gy3) at ($(rectprmpt.south)!0.25!(rectprmpt.north)$);
          \fill[#3!60] (gxW|-gy3) circle[radius=1.2pt];
          \draw[#3!60, line width=\BORDER, line cap=round]
               ($(gxW|-gy3)+(1.6pt,0)$) -- ([xshift=-6pt]gxE|-gy3);
        \end{scope}

        \begin{scope}[local bounding box=rect, xshift=3pt+\sw+0.4\rectw]
          \path[
            draw=#2, line width=1.4\BORDER,
            preaction={fill=#2!10}, pattern color=black!30
          ]
          (0.8\rectw,0) -- (0,0) -- (0,\sz) -- (0.8\rectw,\sz);
        \end{scope}

        \begin{scope}[xshift=3pt+\sw+1.2\rectw, local bounding box=summary]
          \path[
            draw=#3, line width=1.4\BORDER,
            preaction={fill=#3!10}, pattern color=black!30
          ]
          (0,\sz) -- (0.4\rectw,\sz) -- (0.4\rectw,0) -- (0,0);
        \end{scope}

        \begin{scope}[on background layer]
          \fill[#2!10] (rect.south west) rectangle (rect.north east);
          \fill[#3!10] (summary.south west) rectangle (summary.north east);
        \end{scope}
        
        \begin{scope}
          \clip (rect.south west) rectangle (rect.north east);
        
          \def\dia{1.8pt}
        
          \coordinate (gxW) at ($(rect.west)!0.19!(rect.east)$);
          \coordinate (gxE) at ($(rect.west)!0.81!(rect.east)$);
        
          \coordinate (gy1) at ($(rect.south)!0.75!(rect.north)$);
          \coordinate (p1)  at ($(gxW|-gy1)$);
          \fill[#2!60]
            ($(p1)+(-\dia,0)$) -- ($(p1)+(0,\dia)$) -- ($(p1)+(\dia,0)$) -- ($(p1)+(0,-\dia)$) -- cycle;
          \draw[#2!60, line width=\BORDER, line cap=round]
            ($(p1)+(\dia+0.6pt,0)$) -- ([xshift=-2pt]gxE|-gy1);
        
          \coordinate (gy2) at ($(rect.south)!0.50!(rect.north)$);
          \coordinate (p2)  at ($(gxW|-gy2)$);
          \fill[#2!60]
            ($(p2)+(-\dia,0)$) -- ($(p2)+(0,\dia)$) -- ($(p2)+(\dia,0)$) -- ($(p2)+(0,-\dia)$) -- cycle;
          \draw[#2!60, line width=\BORDER, line cap=round]
            ($(p2)+(\dia+0.6pt,0)$) -- ([xshift=-5pt]gxE|-gy2);
        
          \coordinate (gy3) at ($(rect.south)!0.25!(rect.north)$);
          \coordinate (p3)  at ($(gxW|-gy3)$);
          \fill[#2!60]
            ($(p3)+(-\dia,0)$) -- ($(p3)+(0,\dia)$) -- ($(p3)+(\dia,0)$) -- ($(p3)+(0,-\dia)$) -- cycle;
          \draw[#2!60, line width=\BORDER, line cap=round]
            ($(p3)+(\dia+0.6pt,0)$) -- (gxE|-gy3);
        \end{scope}

        \begin{scope}
          \clip (summary.south west) rectangle (summary.north east);
        
          \path coordinate (gxW) at ($(summary.west)!0.18!(summary.east)$);
          \path coordinate (gxE) at ($(summary.west)!0.75!(summary.east)$);
        
          \path coordinate (gy1) at ($(summary.south)!0.75!(summary.north)$);
          \fill[#3!60] (gxW|-gy1) circle[radius=1.2pt];
          \draw[#3!60, line width=\BORDER, line cap=round]
               ($(gxW|-gy1)+(1.6pt,0)$) -- ([xshift=-0pt]gxE|-gy1);
        
          \path coordinate (gy2) at ($(summary.south)!0.50!(summary.north)$);
          \fill[#3!60] (gxW|-gy2) circle[radius=1.2pt];
          \draw[#3!60, line width=\BORDER, line cap=round]
               ($(gxW|-gy2)+(1.6pt,0)$) -- ([xshift=-3pt]gxE|-gy2);
        
          \path coordinate (gy3) at ($(summary.south)!0.25!(summary.north)$);
          \fill[#3!60] (gxW|-gy3) circle[radius=1.2pt];
          \draw[#3!60, line width=\BORDER, line cap=round]
               ($(gxW|-gy3)+(1.6pt,0)$) -- ([xshift=-6pt]gxE|-gy3);
        \end{scope}
        \coordinate (-sqwest)   at (sq.west);
        \coordinate (-rectwest) at (rect.west);
        \coordinate (-recteast) at (rect.east);
        \coordinate (-east)     at (summary.east);
        \coordinate (-north) at (summary.north);
        \coordinate (-rectnorth) at (rect.north);
        \coordinate (-rectsouth) at (rectprmpt.south);
      \end{scope}
    }
  },
  pics/inftythink_iter_n-1/.default={mila-purple-1}{mila-purple-2}
}

\tikzset{
  pics/inftythink_iter_n/.style n args={4}{
    code={
      \begin{scope}[node distance=0pt]
        \begin{scope}[local bounding box=sq]
          \path[draw=none, preaction={fill=#1!10}, pattern color=black!30]
            (\rectr,0) -- (\sww,0) -- (\sww,\sz) -- (\rectr,\sz)
            arc (90:180:\rectr) -- (0,\rectr) arc (180:270:\rectr) -- cycle;
          \draw[#1, dashed, line width=1.4\BORDER]
            (\sww,\sz) -- (\rectr,\sz) arc (90:180:\rectr) -- (0,\rectr)
            arc (180:270:\rectr) -- (\rectr,0) -- (\sww,0);
          \path[use as bounding box] (0,0) rectangle (\sww,\sz);
        \end{scope}
        \node[font=\bfseries\large] at (sq.center) {$\mathrm{q}$};

        \node[
          draw=none, minimum height=\sz, minimum width=0.4\rectw,
          inner sep=0pt, outer sep=0pt, right=0pt of sq
        ] (rectprmpt) {};
        \begin{scope}[on background layer]
          \fill[#3!10] (rectprmpt.south west) rectangle (rectprmpt.north east);
        \end{scope}
        \draw[#3, line width=1.4\BORDER, dashed]
          (rectprmpt.north west) -- (rectprmpt.north east)
          (rectprmpt.north east) -- (rectprmpt.south east)
          (rectprmpt.south east) -- (rectprmpt.south west);

        \begin{scope}
          \clip (rectprmpt.south west) rectangle (rectprmpt.north east);
        
          \path coordinate (gxW) at ($(rectprmpt.west)!0.19!(rectprmpt.east)$);
          \path coordinate (gxE) at ($(rectprmpt.west)!0.81!(rectprmpt.east)$);
        
          \path coordinate (gy1) at ($(rectprmpt.south)!0.75!(rectprmpt.north)$);
          \fill[#3!60] (gxW|-gy1) circle[radius=1.2pt];
          \draw[#3!60, line width=\BORDER, line cap=round]
               ($(gxW|-gy1)+(1.6pt,0)$) -- ([xshift=-0pt]gxE|-gy1);
        
          \path coordinate (gy2) at ($(rectprmpt.south)!0.50!(rectprmpt.north)$);
          \fill[#3!60] (gxW|-gy2) circle[radius=1.2pt];
          \draw[#3!60, line width=\BORDER, line cap=round]
               ($(gxW|-gy2)+(1.6pt,0)$) -- ([xshift=-3pt]gxE|-gy2);
        
          \path coordinate (gy3) at ($(rectprmpt.south)!0.25!(rectprmpt.north)$);
          \fill[#3!60] (gxW|-gy3) circle[radius=1.2pt];
          \draw[#3!60, line width=\BORDER, line cap=round]
               ($(gxW|-gy3)+(1.6pt,0)$) -- ([xshift=-6pt]gxE|-gy3);
        \end{scope}

        \begin{scope}[local bounding box=rect, xshift=3pt+\sw+0.4\rectw]
          \path[
            draw=#2, line width=1.4\BORDER,
            preaction={fill=#2!10}, pattern color=black!30
          ]
          (0.8\rectw,0) -- (0,0) -- (0,\sz) -- (0.8\rectw,\sz);
        \end{scope}

        \begin{scope}[xshift=3pt+\sw+1.2\rectw, local bounding box=conclusion]
          \path[
            draw=#4, line width=1.4\BORDER,
            preaction={fill=#4!10}, pattern color=black!30
          ]
          (0,\sz) -- (\rectwconclu,\sz) 
          arc (90:0:\rectr) -- (\rectwconclu+\rectr,\rectr) arc (0:-90:\rectr) -- (\rectwconclu,0) -- (0,0);
        \end{scope}
        
        \begin{scope}[on background layer]
          \fill[#2!10] (rect.south west) rectangle (rect.north east);
        \end{scope}
        
        \begin{scope}
          \clip (rect.south west) rectangle (rect.north east);
        
          \def\dia{1.8pt}
        
          \coordinate (gxW) at ($(rect.west)!0.19!(rect.east)$);
          \coordinate (gxE) at ($(rect.west)!0.81!(rect.east)$);
        
          \coordinate (gy1) at ($(rect.south)!0.75!(rect.north)$);
          \coordinate (p1)  at ($(gxW|-gy1)$);
          \fill[#2!60]
            ($(p1)+(-\dia,0)$) -- ($(p1)+(0,\dia)$) -- ($(p1)+(\dia,0)$) -- ($(p1)+(0,-\dia)$) -- cycle;
          \draw[#2!60, line width=\BORDER, line cap=round]
            ($(p1)+(\dia+0.6pt,0)$) -- ([xshift=-2pt]gxE|-gy1);
        
          \coordinate (gy2) at ($(rect.south)!0.50!(rect.north)$);
          \coordinate (p2)  at ($(gxW|-gy2)$);
          \fill[#2!60]
            ($(p2)+(-\dia,0)$) -- ($(p2)+(0,\dia)$) -- ($(p2)+(\dia,0)$) -- ($(p2)+(0,-\dia)$) -- cycle;
          \draw[#2!60, line width=\BORDER, line cap=round]
            ($(p2)+(\dia+0.6pt,0)$) -- ([xshift=-5pt]gxE|-gy2);
        
          \coordinate (gy3) at ($(rect.south)!0.25!(rect.north)$);
          \coordinate (p3)  at ($(gxW|-gy3)$);
          \fill[#2!60]
            ($(p3)+(-\dia,0)$) -- ($(p3)+(0,\dia)$) -- ($(p3)+(\dia,0)$) -- ($(p3)+(0,-\dia)$) -- cycle;
          \draw[#2!60, line width=\BORDER, line cap=round]
            ($(p3)+(\dia+0.6pt,0)$) -- (gxE|-gy3);
        \end{scope}

        \begin{scope}
          \clip (conclusion.south west) rectangle (conclusion.north east);
        
          \def\triH{1.2pt}   %
          \def\triL{1.8pt}   %
        
          \coordinate (gxW) at ($(conclusion.west)!0.18!(conclusion.east)$);
          \coordinate (gxE) at ($(conclusion.west)!0.73!(conclusion.east)$);
        
          \coordinate (gy1) at ($(conclusion.south)!0.75!(conclusion.north)$);
          \coordinate (p1)  at ($(gxW|-gy1)$);
          \fill[#4!60]
            ($(p1)+(-\triL,-\triH)$) --
            ($(p1)+(-\triL, \triH)$) --
            ($(p1)+(\triL,0)$) -- cycle;
          \draw[#4!60, line width=\BORDER, line cap=round]
            ($(p1)+(\triL+0.6pt,0)$) -- ([xshift=-3pt]gxE|-gy1);
        
          \coordinate (gy2) at ($(conclusion.south)!0.50!(conclusion.north)$);
          \coordinate (p2)  at ($(gxW|-gy2)$);
          \fill[#4!60]
            ($(p2)+(-\triL,-\triH)$) --
            ($(p2)+(-\triL, \triH)$) --
            ($(p2)+(\triL,0)$) -- cycle;
          \draw[#4!60, line width=\BORDER, line cap=round]
            ($(p2)+(\triL+0.6pt,0)$) -- ([xshift=-5pt]gxE|-gy2);
        
          \coordinate (gy3) at ($(conclusion.south)!0.25!(conclusion.north)$);
          \coordinate (p3)  at ($(gxW|-gy3)$);
          \fill[#4!60]
            ($(p3)+(-\triL,-\triH)$) --
            ($(p3)+(-\triL, \triH)$) --
            ($(p3)+(\triL,0)$) -- cycle;
          \draw[#4!60, line width=\BORDER, line cap=round]
            ($(p3)+(\triL+0.6pt,0)$) -- (gxE|-gy3);
        \end{scope}

        \coordinate (-sqwest)   at (sq.west);
        \coordinate (-rectwest) at (rect.west);
        \coordinate (-recteast) at (rect.east);
        \coordinate (-east)     at (conclusion.east);
        \coordinate (-north) at (conclusion.north);
        \coordinate (-rectnorth) at (rect.north);
        \coordinate (-rectsouth) at (rectprmpt.south);
      \end{scope}
    }
  },
  pics/inftythink_iter_n/.default={mila-purple-1}{mila-purple-2}
}

\tikzset{
  pics/segment/.style n args={2}{
    code={
      \begin{scope}[node distance=0pt]
        \begin{scope}[local bounding box=sq]
          \path[
            draw=#1, line width=1.4\BORDER, dashed,
            preaction={fill=#1!10}, pattern color=black!30
          ]
          (\rectr,0) -- (\sw,0) -- (\sw,\sz) -- (\rectr,\sz)
          arc (90:180:\rectr) -- (0,\rectr) arc (180:270:\rectr) -- cycle;
        \end{scope}
        \node[font=\bfseries\large] at (sq.center) {$\mathrm{q}$};

        \node[
          draw=#2, line width=1.4\BORDER,
          minimum height=\sz, minimum width=1.2\rectw, inner sep=0pt, right=3pt of sq
        ] (rect) {};

        \draw[dashed, line cap=round,
      shorten >=-3pt, shorten <=-3pt](rect.north) -- (rect.south);

        \begin{scope}[on background layer]
          \fill[#2!10] (rect.south west) rectangle (rect.north);
          \fill[#2!10] (rect.south) rectangle (rect.north east);
        \end{scope}

        \begin{scope}
          \clip (rect.south west) rectangle (rect.north east);
        
          \def\dia{1.8pt}
        
          \coordinate (gxW) at ($(rect.west)!0.12!(rect.east)$);
          \coordinate (gxE) at ($(rect.west)!0.42!(rect.east)$);
        
          \coordinate (gy1) at ($(rect.south)!0.75!(rect.north)$);
          \coordinate (p1)  at ($(gxW|-gy1)$);
          \fill[#2!60]
            ($(p1)+(-\dia,0)$) -- ($(p1)+(0,\dia)$) -- ($(p1)+(\dia,0)$) -- ($(p1)+(0,-\dia)$) -- cycle;
          \draw[#2!60, line width=\BORDER, line cap=round]
            ($(p1)+(\dia+0.6pt,0)$) -- ([xshift=-2pt]gxE|-gy1);
        
          \coordinate (gy2) at ($(rect.south)!0.50!(rect.north)$);
          \coordinate (p2)  at ($(gxW|-gy2)$);
          \fill[#2!60]
            ($(p2)+(-\dia,0)$) -- ($(p2)+(0,\dia)$) -- ($(p2)+(\dia,0)$) -- ($(p2)+(0,-\dia)$) -- cycle;
          \draw[#2!60, line width=\BORDER, line cap=round]
            ($(p2)+(\dia+0.6pt,0)$) -- ([xshift=-5pt]gxE|-gy2);
        
          \coordinate (gy3) at ($(rect.south)!0.25!(rect.north)$);
          \coordinate (p3)  at ($(gxW|-gy3)$);
          \fill[#2!60]
            ($(p3)+(-\dia,0)$) -- ($(p3)+(0,\dia)$) -- ($(p3)+(\dia,0)$) -- ($(p3)+(0,-\dia)$) -- cycle;
          \draw[#2!60, line width=\BORDER, line cap=round]
            ($(p3)+(\dia+0.6pt,0)$) -- (gxE|-gy3);
        \end{scope}

        \begin{scope}
          \clip (rect.south west) rectangle (rect.north east);
        
          \def\dia{1.8pt}
        
          \coordinate (gxW) at ($(rect.west)!0.585!(rect.east)$);
          \coordinate (gxE) at ($(rect.west)!0.89!(rect.east)$);
        
          \coordinate (gy1) at ($(rect.south)!0.75!(rect.north)$);
          \coordinate (p1)  at ($(gxW|-gy1)$);
          \fill[#2!60]
            ($(p1)+(-\dia,0)$) -- ($(p1)+(0,\dia)$) -- ($(p1)+(\dia,0)$) -- ($(p1)+(0,-\dia)$) -- cycle;
          \draw[#2!60, line width=\BORDER, line cap=round]
            ($(p1)+(\dia+0.6pt,0)$) -- ([xshift=-7pt]gxE|-gy1);
        
          \coordinate (gy2) at ($(rect.south)!0.50!(rect.north)$);
          \coordinate (p2)  at ($(gxW|-gy2)$);
          \fill[#2!60]
            ($(p2)+(-\dia,0)$) -- ($(p2)+(0,\dia)$) -- ($(p2)+(\dia,0)$) -- ($(p2)+(0,-\dia)$) -- cycle;
          \draw[#2!60, line width=\BORDER, line cap=round]
            ($(p2)+(\dia+0.6pt,0)$) -- ([xshift=-1pt]gxE|-gy2);
        
          \coordinate (gy3) at ($(rect.south)!0.25!(rect.north)$);
          \coordinate (p3)  at ($(gxW|-gy3)$);
          \fill[#2!60]
            ($(p3)+(-\dia,0)$) -- ($(p3)+(0,\dia)$) -- ($(p3)+(\dia,0)$) -- ($(p3)+(0,-\dia)$) -- cycle;
          \draw[#2!60, line width=\BORDER, line cap=round]
            ($(p3)+(\dia+0.6pt,0)$) -- ([xshift=-5pt]gxE|-gy3);
        \end{scope}

        \coordinate (-sqwest)   at (sq.west);
        \coordinate (-rectwest) at (rect.west);
        \coordinate (-recteast) at (rect.east);
        \coordinate (-east)     at (rect.east);
        \coordinate (-rectnorth) at ([xshift=0.25\rectw]rect.north);
        \coordinate (-rectsouth) at (rect.south);
      \end{scope}
    }
  },
  pics/segment/.default={mila-purple-1}{mila-purple-2}
}

\tikzset{
  pics/segmentlongcot/.style n args={3}{
    code={
      \begin{scope}[node distance=0pt]
        \begin{scope}[local bounding box=sq]
          \path[
            draw=#1, line width=1.4\BORDER, dashed,
            preaction={fill=#1!10}, pattern color=black!30
          ]
          (\rectr,0) -- (\sw,0) -- (\sw,\sz) -- (\rectr,\sz)
          arc (90:180:\rectr) -- (0,\rectr) arc (180:270:\rectr) -- cycle;
        \end{scope}
        \node[font=\bfseries\large] at (sq.center) {$\mathrm{q}$};

        \begin{scope}[local bounding box=rect, xshift=3pt+\sw]
          \path[
            draw=#2, line width=1.4\BORDER,
            preaction={fill=#2!10}, pattern color=black!30
          ]
          (\rectwlongcot,0) -- (0,0) -- (0,\sz) -- (\rectwlongcot,\sz);
        \end{scope}

        \begin{scope}[xshift=3pt+\sw+\rectwlongcot, local bounding box=conclusion]
          \path[
            draw=#3, line width=1.4\BORDER,
            preaction={fill=#3!10}, pattern color=black!30
          ]
          (0,\sz) -- (\rectwconclu,\sz) 
          arc (90:0:\rectr) -- (\rectwconclu+\rectr,\rectr) arc (0:-90:\rectr) -- (\rectwconclu,0) -- (0,0);
        \end{scope}

        \begin{scope}[on background layer]
          \fill[#2!10] (rect.south west) rectangle (rect.north east);
        \end{scope}

        \begin{scope}
          \clip (rect.south west) rectangle (rect.north east);
        
          \def\dia{1.8pt}
        
          \coordinate (gxW) at ($(rect.west)!0.019!(rect.east)$);
          \coordinate (gxE) at ($(rect.west)!0.065!(rect.east)$);
        
          \coordinate (gy1) at ($(rect.south)!0.75!(rect.north)$);
          \coordinate (p1)  at ($(gxW|-gy1)$);
          \fill[delethink-blue!60]
            ($(p1)+(-\dia,0)$) -- ($(p1)+(0,\dia)$) -- ($(p1)+(\dia,0)$) -- ($(p1)+(0,-\dia)$) -- cycle;
          \draw[delethink-blue!60, line width=\BORDER, line cap=round]
            ($(p1)+(\dia+0.6pt,0)$) -- ([xshift=-2pt]gxE|-gy1);
        
          \coordinate (gy2) at ($(rect.south)!0.50!(rect.north)$);
          \coordinate (p2)  at ($(gxW|-gy2)$);
          \fill[delethink-blue!60]
            ($(p2)+(-\dia,0)$) -- ($(p2)+(0,\dia)$) -- ($(p2)+(\dia,0)$) -- ($(p2)+(0,-\dia)$) -- cycle;
          \draw[delethink-blue!60, line width=\BORDER, line cap=round]
            ($(p2)+(\dia+0.6pt,0)$) -- ([xshift=-5pt]gxE|-gy2);
        
          \coordinate (gy3) at ($(rect.south)!0.25!(rect.north)$);
          \coordinate (p3)  at ($(gxW|-gy3)$);
          \fill[delethink-blue!60]
            ($(p3)+(-\dia,0)$) -- ($(p3)+(0,\dia)$) -- ($(p3)+(\dia,0)$) -- ($(p3)+(0,-\dia)$) -- cycle;
          \draw[delethink-blue!60, line width=\BORDER, line cap=round]
            ($(p3)+(\dia+0.6pt,0)$) -- ([xshift=-0pt]gxE|-gy3);
        \end{scope}

        \begin{scope}
          \clip (rect.south west) rectangle (rect.north east);
        
          \def\dia{1.8pt}
        
          \coordinate (gxW) at ($(rect.west)!0.085!(rect.east)$);
          \coordinate (gxE) at ($(rect.west)!0.130!(rect.east)$);
        
          \coordinate (gy1) at ($(rect.south)!0.75!(rect.north)$);
          \coordinate (p1)  at ($(gxW|-gy1)$);
          \fill[delethink-blue!50]
            ($(p1)+(-\dia,0)$) -- ($(p1)+(0,\dia)$) -- ($(p1)+(\dia,0)$) -- ($(p1)+(0,-\dia)$) -- cycle;
          \draw[delethink-blue!50, line width=\BORDER, line cap=round]
            ($(p1)+(\dia+0.6pt,0)$) -- ([xshift=-6pt]gxE|-gy1);
        
          \coordinate (gy2) at ($(rect.south)!0.50!(rect.north)$);
          \coordinate (p2)  at ($(gxW|-gy2)$);
          \fill[delethink-blue!50]
            ($(p2)+(-\dia,0)$) -- ($(p2)+(0,\dia)$) -- ($(p2)+(\dia,0)$) -- ($(p2)+(0,-\dia)$) -- cycle;
          \draw[delethink-blue!50, line width=\BORDER, line cap=round]
            ($(p2)+(\dia+0.6pt,0)$) -- ([xshift=-0pt]gxE|-gy2);
        
          \coordinate (gy3) at ($(rect.south)!0.25!(rect.north)$);
          \coordinate (p3)  at ($(gxW|-gy3)$);
          \fill[delethink-blue!50]
            ($(p3)+(-\dia,0)$) -- ($(p3)+(0,\dia)$) -- ($(p3)+(\dia,0)$) -- ($(p3)+(0,-\dia)$) -- cycle;
          \draw[delethink-blue!50, line width=\BORDER, line cap=round]
            ($(p3)+(\dia+0.6pt,0)$) -- ([xshift=-4pt]gxE|-gy3);
        \end{scope}

        \begin{scope}
          \clip (rect.south west) rectangle (rect.north east);
        
          \def\dia{1.8pt}
        
          \coordinate (gxW) at ($(rect.west)!0.15!(rect.east)$);
          \coordinate (gxE) at ($(rect.west)!0.19!(rect.east)$);
        
          \coordinate (gy1) at ($(rect.south)!0.75!(rect.north)$);
          \coordinate (p1)  at ($(gxW|-gy1)$);
          \fill[delethink-blue!40]
            ($(p1)+(-\dia,0)$) -- ($(p1)+(0,\dia)$) -- ($(p1)+(\dia,0)$) -- ($(p1)+(0,-\dia)$) -- cycle;
          \draw[delethink-blue!40, line width=\BORDER, line cap=round]
            ($(p1)+(\dia+0.6pt,0)$) -- ([xshift=-0pt]gxE|-gy1);
        
          \coordinate (gy2) at ($(rect.south)!0.50!(rect.north)$);
          \coordinate (p2)  at ($(gxW|-gy2)$);
          \fill[delethink-blue!40]
            ($(p2)+(-\dia,0)$) -- ($(p2)+(0,\dia)$) -- ($(p2)+(\dia,0)$) -- ($(p2)+(0,-\dia)$) -- cycle;
          \draw[delethink-blue!40, line width=\BORDER, line cap=round]
            ($(p2)+(\dia+0.6pt,0)$) -- ([xshift=-9pt]gxE|-gy2);
        
          \coordinate (gy3) at ($(rect.south)!0.25!(rect.north)$);
          \coordinate (p3)  at ($(gxW|-gy3)$);
          \fill[delethink-blue!40]
            ($(p3)+(-\dia,0)$) -- ($(p3)+(0,\dia)$) -- ($(p3)+(\dia,0)$) -- ($(p3)+(0,-\dia)$) -- cycle;
          \draw[delethink-blue!40, line width=\BORDER, line cap=round]
            ($(p3)+(\dia+0.6pt,0)$) -- ([xshift=-6pt]gxE|-gy3);
        \end{scope}

        \begin{scope}
          \clip (rect.south west) rectangle (rect.north east);
        
          \def\dia{1.8pt}
        
          \coordinate (gxW) at ($(rect.west)!0.21!(rect.east)$);
          \coordinate (gxE) at ($(rect.west)!0.26!(rect.east)$);
        
          \coordinate (gy1) at ($(rect.south)!0.75!(rect.north)$);
          \coordinate (p1)  at ($(gxW|-gy1)$);
          \fill[delethink-blue!30]
            ($(p1)+(-\dia,0)$) -- ($(p1)+(0,\dia)$) -- ($(p1)+(\dia,0)$) -- ($(p1)+(0,-\dia)$) -- cycle;
          \draw[delethink-blue!30, line width=\BORDER, line cap=round]
            ($(p1)+(\dia+0.6pt,0)$) -- ([xshift=-5pt]gxE|-gy1);
        
          \coordinate (gy2) at ($(rect.south)!0.50!(rect.north)$);
          \coordinate (p2)  at ($(gxW|-gy2)$);
          \fill[delethink-blue!30]
            ($(p2)+(-\dia,0)$) -- ($(p2)+(0,\dia)$) -- ($(p2)+(\dia,0)$) -- ($(p2)+(0,-\dia)$) -- cycle;
          \draw[delethink-blue!30, line width=\BORDER, line cap=round]
            ($(p2)+(\dia+0.6pt,0)$) -- ([xshift=-6pt]gxE|-gy2);
        
          \coordinate (gy3) at ($(rect.south)!0.25!(rect.north)$);
          \coordinate (p3)  at ($(gxW|-gy3)$);
          \fill[delethink-blue!30]
            ($(p3)+(-\dia,0)$) -- ($(p3)+(0,\dia)$) -- ($(p3)+(\dia,0)$) -- ($(p3)+(0,-\dia)$) -- cycle;
          \draw[delethink-blue!30, line width=\BORDER, line cap=round]
            ($(p3)+(\dia+0.6pt,0)$) -- ([xshift=-0pt]gxE|-gy3);
        \end{scope}

        \begin{scope}
          \clip (conclusion.south west) rectangle (conclusion.north east);
        
          \def\triH{1.2pt}   %
          \def\triL{1.8pt}   %
        
          \coordinate (gxW) at ($(conclusion.west)!0.18!(conclusion.east)$);
          \coordinate (gxE) at ($(conclusion.west)!0.73!(conclusion.east)$);
        
          \coordinate (gy1) at ($(conclusion.south)!0.75!(conclusion.north)$);
          \coordinate (p1)  at ($(gxW|-gy1)$);
          \fill[#3!60]
            ($(p1)+(-\triL,-\triH)$) --
            ($(p1)+(-\triL, \triH)$) --
            ($(p1)+(\triL,0)$) -- cycle;
          \draw[#3!60, line width=\BORDER, line cap=round]
            ($(p1)+(\triL+0.6pt,0)$) -- ([xshift=-3pt]gxE|-gy1);
        
          \coordinate (gy2) at ($(conclusion.south)!0.50!(conclusion.north)$);
          \coordinate (p2)  at ($(gxW|-gy2)$);
          \fill[#3!60]
            ($(p2)+(-\triL,-\triH)$) --
            ($(p2)+(-\triL, \triH)$) --
            ($(p2)+(\triL,0)$) -- cycle;
          \draw[#3!60, line width=\BORDER, line cap=round]
            ($(p2)+(\triL+0.6pt,0)$) -- ([xshift=-5pt]gxE|-gy2);
        
          \coordinate (gy3) at ($(conclusion.south)!0.25!(conclusion.north)$);
          \coordinate (p3)  at ($(gxW|-gy3)$);
          \fill[#3!60]
            ($(p3)+(-\triL,-\triH)$) --
            ($(p3)+(-\triL, \triH)$) --
            ($(p3)+(\triL,0)$) -- cycle;
          \draw[#3!60, line width=\BORDER, line cap=round]
            ($(p3)+(\triL+0.6pt,0)$) -- (gxE|-gy3);
        \end{scope}

        \coordinate (-sqwest)   at (sq.west);
        \coordinate (-rectwest) at (rect.west);
        \coordinate (-recteast) at (rect.east);
        \coordinate (-east)     at (conclusion.east);
        \coordinate (-rectnorth) at ([xshift=0.25\rectw]rect.north);
        \coordinate (-rectsouth) at (rect.south);
      \end{scope}
    }
  },
  pics/segmentlongcot/.default={mila-purple-1}{mila-purple}
}

\tikzset{
  pics/segmentintermediate/.style n args={3}{
    code={
      \begin{scope}[node distance=0pt]
          \begin{scope}[local bounding box=sq]
          \path[draw=none, preaction={fill=#1!10}, pattern color=black!30]
            (\rectr,0) -- (\sww,0) -- (\sww,\sz) -- (\rectr,\sz)
            arc (90:180:\rectr) -- (0,\rectr) arc (180:270:\rectr) -- cycle;
          \draw[#1, dashed, line width=1.4\BORDER]
            (\sww,\sz) -- (\rectr,\sz) arc (90:180:\rectr) -- (0,\rectr)
            arc (180:270:\rectr) -- (\rectr,0) -- (\sww,0);
          \path[use as bounding box] (0,0) rectangle (\sww,\sz);
        \end{scope}
        \node[font=\bfseries\large] at (sq.center) {$\mathrm{q}$};

        \node[
          draw=none, minimum height=\sz, minimum width=0.42\rectw,
          inner sep=0pt, outer sep=0pt, right=0pt of sq
        ] (rectprmpt) {};
        \begin{scope}[on background layer]
          \fill[#1] (rectprmpt.south west) rectangle (rectprmpt.north east);
        \end{scope}
        \draw[#2, line width=1.4\BORDER, dashed]
          (rectprmpt.north west) -- (rectprmpt.north east)
          (rectprmpt.north east) -- (rectprmpt.south east)
          (rectprmpt.south east) -- (rectprmpt.south west);

        \node[
          draw=#2, line width=1.4\BORDER,
          minimum height=\sz, minimum width=1.2\rectw,
          inner sep=0pt, outer sep=0pt, right=4pt of rectprmpt
        ] (rect) {};
        \begin{scope}[on background layer]
          \fill[#2] (rect.south west) rectangle (rect.north east);
        \end{scope}

        \begin{scope}
          \clip (rect.south west) rectangle (rect.north east);
        
          \path coordinate (gxW) at ($(rect.west)!0.18!(rect.east)$);
          \path coordinate (gxE) at ($(rect.west)!0.80!(rect.east)$);
        
          \path coordinate (gy1) at ($(rect.south)!0.75!(rect.north)$);
          \fill[white!60] (gxW|-gy1) circle[radius=1.2pt];
          \draw[white!60, line width=\BORDER, line cap=round]
               ($(gxW|-gy1)+(1.6pt,0)$) -- (gxE|-gy1);
        
          \path coordinate (gy2) at ($(rect.south)!0.50!(rect.north)$);
          \fill[white!60] (gxW|-gy2) circle[radius=1.2pt];
          \draw[white!60, line width=\BORDER, line cap=round]
               ($(gxW|-gy2)+(1.6pt,0)$) -- (gxE|-gy2);
        
          \path coordinate (gy3) at ($(rect.south)!0.25!(rect.north)$);
          \fill[white!60] (gxW|-gy3) circle[radius=1.2pt];
          \draw[white!60, line width=\BORDER, line cap=round]
               ($(gxW|-gy3)+(1.6pt,0)$) -- (gxE|-gy3);
        \end{scope}

        \coordinate (-sqwest)   at (sq.west);
        \coordinate (-rectwest) at (rect.west);
        \coordinate (-recteast) at (rect.east);
        \coordinate (-east)     at (rect.east);
        \coordinate (-rectnorth) at (rect.north);
        \coordinate (-rectsouth) at (rectprmpt.south);
      \end{scope}
    }
  },
  pics/segmentintermediate/.default={mila-purple-1}{mila-purple-2}{mila-purple}
}

\tikzset{
  pics/segmentintermediatecircle/.style n args={3}{
    code={
      \begin{scope}[node distance=0pt]
          \begin{scope}[local bounding box=sq]
          \path[draw=none, preaction={fill=#1!10}, pattern color=black!30]
            (\rectr,0) -- (\sww,0) -- (\sww,\sz) -- (\rectr,\sz)
            arc (90:180:\rectr) -- (0,\rectr) arc (180:270:\rectr) -- cycle;
          \draw[#1, dashed, line width=1.4\BORDER]
            (\sww,\sz) -- (\rectr,\sz) arc (90:180:\rectr) -- (0,\rectr)
            arc (180:270:\rectr) -- (\rectr,0) -- (\sww,0);
          \path[use as bounding box] (0,0) rectangle (\sww,\sz);
        \end{scope}
        \node[font=\bfseries\large] at (sq.center) {$\mathrm{q}$};

        \node[
          draw=none, minimum height=\sz, minimum width=0.4\rectw,
          inner sep=0pt, outer sep=0pt, right=0pt of sq
        ] (rectprmpt) {};
        \begin{scope}[on background layer]
          \fill[#2!10] (rectprmpt.south west) rectangle (rectprmpt.north east);
        \end{scope}
        \draw[#2, line width=1.4\BORDER, dashed]
          (rectprmpt.north west) -- (rectprmpt.north east)
          (rectprmpt.north east) -- (rectprmpt.south east)
          (rectprmpt.south east) -- (rectprmpt.south west);

        \begin{scope}
          \clip (rectprmpt.south west) rectangle (rectprmpt.north east);
        
          \def\dia{1.8pt}
        
          \coordinate (gxW) at ($(rectprmpt.west)!0.19!(rectprmpt.east)$);
          \coordinate (gxE) at ($(rectprmpt.west)!0.82!(rectprmpt.east)$);
        
          \coordinate (gy1) at ($(rectprmpt.south)!0.75!(rectprmpt.north)$);
          \coordinate (p1)  at ($(gxW|-gy1)$);
          \fill[#2!60]
            ($(p1)+(-\dia,0)$) -- ($(p1)+(0,\dia)$) -- ($(p1)+(\dia,0)$) -- ($(p1)+(0,-\dia)$) -- cycle;
          \draw[#2!60, line width=\BORDER, line cap=round]
            ($(p1)+(\dia+0.6pt,0)$) -- ([xshift=-6pt]gxE|-gy1);
        
          \coordinate (gy2) at ($(rectprmpt.south)!0.50!(rectprmpt.north)$);
          \coordinate (p2)  at ($(gxW|-gy2)$);
          \fill[#2!60]
            ($(p2)+(-\dia,0)$) -- ($(p2)+(0,\dia)$) -- ($(p2)+(\dia,0)$) -- ($(p2)+(0,-\dia)$) -- cycle;
          \draw[#2!60, line width=\BORDER, line cap=round]
            ($(p2)+(\dia+0.6pt,0)$) -- ([xshift=-0pt]gxE|-gy2);
        
          \coordinate (gy3) at ($(rectprmpt.south)!0.25!(rectprmpt.north)$);
          \coordinate (p3)  at ($(gxW|-gy3)$);
          \fill[#2!60]
            ($(p3)+(-\dia,0)$) -- ($(p3)+(0,\dia)$) -- ($(p3)+(\dia,0)$) -- ($(p3)+(0,-\dia)$) -- cycle;
          \draw[#2!60, line width=\BORDER, line cap=round]
            ($(p3)+(\dia+0.6pt,0)$) -- ([xshift=-4pt]gxE|-gy3);
        \end{scope}

        \begin{scope}[local bounding box=rect, xshift=3pt+\sw+0.4\rectw]
          \path[
            draw=#2, line width=1.4\BORDER,
            preaction={fill=#2!10}, pattern color=black!30
          ]
          (0.8\rectw,0) -- (0,0) -- (0,\sz) -- (0.8\rectw,\sz);
        \end{scope}

        \node[
          draw=#2, line width=1.4\BORDER,
          minimum height=\sz, minimum width=1.2\rectw,
          inner sep=0pt, outer sep=0pt, right=4pt of rectprmpt
        ] (rect) {};
        \begin{scope}[on background layer]
          \fill[#2!10] (rect.south west) rectangle (rect.north east);
        \end{scope}

        \begin{scope}
          \clip (rect.south west) rectangle (rect.north east);
        
          \path coordinate (gxW) at ($(rect.west)!0.18!(rect.east)$);
          \path coordinate (gxE) at ($(rect.west)!0.75!(rect.east)$);
        
          \path coordinate (gy1) at ($(rect.south)!0.75!(rect.north)$);
          \fill[#2!60] (gxW|-gy1) circle[radius=1.2pt];
          \draw[#2!60, line width=\BORDER, line cap=round]
               ($(gxW|-gy1)+(1.6pt,0)$) -- ([xshift=-0pt]gxE|-gy1);
        
          \path coordinate (gy2) at ($(rect.south)!0.50!(rect.north)$);
          \fill[#2!60] (gxW|-gy2) circle[radius=1.2pt];
          \draw[#2!60, line width=\BORDER, line cap=round]
               ($(gxW|-gy2)+(1.6pt,0)$) -- ([xshift=-3pt]gxE|-gy2);
        
          \path coordinate (gy3) at ($(rect.south)!0.25!(rect.north)$);
          \fill[#2!60] (gxW|-gy3) circle[radius=1.2pt];
          \draw[#2!60, line width=\BORDER, line cap=round]
               ($(gxW|-gy3)+(1.6pt,0)$) -- ([xshift=-6pt]gxE|-gy3);
        \end{scope}

        \coordinate (-sqwest)   at (sq.west);
        \coordinate (-rectwest) at (rect.west);
        \coordinate (-recteast) at (rect.east);
        \coordinate (-east)     at (rect.east);
        \coordinate (-rectnorth) at (rect.north);
        \coordinate (-rectsouth) at (rectprmpt.south);
      \end{scope}
    }
  },
  pics/segmentintermediatecircle/.default={mila-purple-1}{mila-purple-2}{mila-purple}
}

\tikzset{
  pics/segmentintermediatesquare/.style n args={3}{
    code={
      \begin{scope}[node distance=0pt]
        \begin{scope}[local bounding box=sq]
          \path[draw=none, preaction={fill=#1!10}, pattern color=black!30]
            (\rectr,0) -- (\sww,0) -- (\sww,\sz) -- (\rectr,\sz)
            arc (90:180:\rectr) -- (0,\rectr) arc (180:270:\rectr) -- cycle;
          \draw[#1, dashed, line width=1.4\BORDER]
            (\sww,\sz) -- (\rectr,\sz) arc (90:180:\rectr) -- (0,\rectr)
            arc (180:270:\rectr) -- (\rectr,0) -- (\sww,0);
          \path[use as bounding box] (0,0) rectangle (\sww,\sz);
        \end{scope}
        \node[font=\bfseries\large] at (sq.center) {$\mathrm{q}$};

        \node[
          draw=none, minimum height=\sz, minimum width=0.4\rectw,
          inner sep=0pt, outer sep=0pt, right=0pt of sq
        ] (rectprmpt) {};
        \begin{scope}[on background layer]
          \fill[#2!10] (rectprmpt.south west) rectangle (rectprmpt.north east);
        \end{scope}
        \draw[#2, line width=1.4\BORDER, dashed]
          (rectprmpt.north west) -- (rectprmpt.north east)
          (rectprmpt.north east) -- (rectprmpt.south east)
          (rectprmpt.south east) -- (rectprmpt.south west);

        \begin{scope}
          \clip (rectprmpt.south west) rectangle (rectprmpt.north east);
        
          \def\triH{1.2pt}   %
          \def\triL{1.8
          pt}   %
        
          \coordinate (gxW) at ($(rectprmpt.west)!0.19!(rectprmpt.east)$);
          \coordinate (gxE) at ($(rectprmpt.west)!0.82!(rectprmpt.east)$);
        
          \coordinate (gy1) at ($(rectprmpt.south)!0.75!(rectprmpt.north)$);
          \coordinate (p1)  at ($(gxW|-gy1)$);
          \fill[#2!60]
            ($(p1)+(-\triL,-\triH)$) --
            ($(p1)+(-\triL, \triH)$) --
            ($(p1)+(\triL,0)$) -- cycle;
          \draw[#2!60, line width=\BORDER, line cap=round]
            ($(p1)+(\triL+0.6pt,0)$) -- ([xshift=-3pt]gxE|-gy1);
        
          \coordinate (gy2) at ($(rectprmpt.south)!0.50!(rectprmpt.north)$);
          \coordinate (p2)  at ($(gxW|-gy2)$);
          \fill[#2!60]
            ($(p2)+(-\triL,-\triH)$) --
            ($(p2)+(-\triL, \triH)$) --
            ($(p2)+(\triL,0)$) -- cycle;
          \draw[#2!60, line width=\BORDER, line cap=round]
            ($(p2)+(\triL+0.6pt,0)$) -- ([xshift=-5pt]gxE|-gy2);
        
          \coordinate (gy3) at ($(rectprmpt.south)!0.25!(rectprmpt.north)$);
          \coordinate (p3)  at ($(gxW|-gy3)$);
          \fill[#2!60]
            ($(p3)+(-\triL,-\triH)$) --
            ($(p3)+(-\triL, \triH)$) --
            ($(p3)+(\triL,0)$) -- cycle;
          \draw[#2!60, line width=\BORDER, line cap=round]
            ($(p3)+(\triL+0.6pt,0)$) -- (gxE|-gy3);
        \end{scope}

        \begin{scope}[local bounding box=rect, xshift=3pt+\sw+0.4\rectw]
          \path[
            draw=#2, line width=1.4\BORDER,
            preaction={fill=#2!10}, pattern color=black!30
          ]
          (0.8\rectw,0) -- (0,0) -- (0,\sz) -- (0.8\rectw,\sz);
        \end{scope}

        \begin{scope}[xshift=3pt+\sw+1.2\rectw, local bounding box=conclusion]
          \path[
            draw=#3, line width=1.4\BORDER,
            preaction={fill=#3!10}, pattern color=black!30
          ]
          (0,\sz) -- (\rectwconclu,\sz) 
          arc (90:0:\rectr) -- (\rectwconclu+\rectr,\rectr) arc (0:-90:\rectr) -- (\rectwconclu,0) -- (0,0);
        \end{scope}

        \begin{scope}[on background layer]
          \fill[#2!10] (rect.south west) rectangle (rect.north east);
        \end{scope}
        
        \begin{scope}
          \clip (rect.south west) rectangle (rect.north east);
        
          \def\dia{1.8pt}
        
          \coordinate (gxW) at ($(rect.west)!0.19!(rect.east)$);
          \coordinate (gxE) at ($(rect.west)!0.81!(rect.east)$);
        
          \coordinate (gy1) at ($(rect.south)!0.75!(rect.north)$);
          \coordinate (p1)  at ($(gxW|-gy1)$);
          \fill[#2!60]
            ($(p1)+(-\dia,0)$) -- ($(p1)+(0,\dia)$) -- ($(p1)+(\dia,0)$) -- ($(p1)+(0,-\dia)$) -- cycle;
          \draw[#2!60, line width=\BORDER, line cap=round]
            ($(p1)+(\dia+0.6pt,0)$) -- ([xshift=-2pt]gxE|-gy1);
        
          \coordinate (gy2) at ($(rect.south)!0.50!(rect.north)$);
          \coordinate (p2)  at ($(gxW|-gy2)$);
          \fill[#2!60]
            ($(p2)+(-\dia,0)$) -- ($(p2)+(0,\dia)$) -- ($(p2)+(\dia,0)$) -- ($(p2)+(0,-\dia)$) -- cycle;
          \draw[#2!60, line width=\BORDER, line cap=round]
            ($(p2)+(\dia+0.6pt,0)$) -- ([xshift=-5pt]gxE|-gy2);
        
          \coordinate (gy3) at ($(rect.south)!0.25!(rect.north)$);
          \coordinate (p3)  at ($(gxW|-gy3)$);
          \fill[#2!60]
            ($(p3)+(-\dia,0)$) -- ($(p3)+(0,\dia)$) -- ($(p3)+(\dia,0)$) -- ($(p3)+(0,-\dia)$) -- cycle;
          \draw[#2!60, line width=\BORDER, line cap=round]
            ($(p3)+(\dia+0.6pt,0)$) -- (gxE|-gy3);
        \end{scope}

        \begin{scope}
          \clip (conclusion.south west) rectangle (conclusion.north east);
        
          \def\triH{1.2pt}   %
          \def\triL{1.8pt}   %
        
          \coordinate (gxW) at ($(conclusion.west)!0.18!(conclusion.east)$);
          \coordinate (gxE) at ($(conclusion.west)!0.73!(conclusion.east)$);
        
          \coordinate (gy1) at ($(conclusion.south)!0.75!(conclusion.north)$);
          \coordinate (p1)  at ($(gxW|-gy1)$);
          \fill[#3!60]
            ($(p1)+(-\triL,-\triH)$) --
            ($(p1)+(-\triL, \triH)$) --
            ($(p1)+(\triL,0)$) -- cycle;
          \draw[#3!60, line width=\BORDER, line cap=round]
            ($(p1)+(\triL+0.6pt,0)$) -- ([xshift=-3pt]gxE|-gy1);
        
          \coordinate (gy2) at ($(conclusion.south)!0.50!(conclusion.north)$);
          \coordinate (p2)  at ($(gxW|-gy2)$);
          \fill[#3!60]
            ($(p2)+(-\triL,-\triH)$) --
            ($(p2)+(-\triL, \triH)$) --
            ($(p2)+(\triL,0)$) -- cycle;
          \draw[#3!60, line width=\BORDER, line cap=round]
            ($(p2)+(\triL+0.6pt,0)$) -- ([xshift=-5pt]gxE|-gy2);
        
          \coordinate (gy3) at ($(conclusion.south)!0.25!(conclusion.north)$);
          \coordinate (p3)  at ($(gxW|-gy3)$);
          \fill[#3!60]
            ($(p3)+(-\triL,-\triH)$) --
            ($(p3)+(-\triL, \triH)$) --
            ($(p3)+(\triL,0)$) -- cycle;
          \draw[#3!60, line width=\BORDER, line cap=round]
            ($(p3)+(\triL+0.6pt,0)$) -- (gxE|-gy3);
        \end{scope}

        \coordinate (-sqwest)   at (sq.west);
        \coordinate (-rectwest) at (rect.west);
        \coordinate (-recteast) at (rect.east);
        \coordinate (-east)     at (conclusion.east);
        \coordinate (-rectnorth) at (rect.north);
        \coordinate (-rectsouth) at (rectprmpt.south);
      \end{scope}
    }
  },
  pics/segmentintermediatesquare/.default={mila-purple-1}{mila-purple-2}{mila-purple}
}

\tikzset{
  pics/segmentintermediatetriangle/.style n args={3}{
    code={
      \begin{scope}[node distance=0pt]
        \begin{scope}[local bounding box=sq]
          \path[draw=none, preaction={fill=#1!10}, pattern color=black!30]
            (\rectr,0) -- (\sww,0) -- (\sww,\sz) -- (\rectr,\sz)
            arc (90:180:\rectr) -- (0,\rectr) arc (180:270:\rectr) -- cycle;
          \draw[#1, dashed, line width=1.4\BORDER]
            (\sww,\sz) -- (\rectr,\sz) arc (90:180:\rectr) -- (0,\rectr)
            arc (180:270:\rectr) -- (\rectr,0) -- (\sww,0);
          \path[use as bounding box] (0,0) rectangle (\sww,\sz);
        \end{scope}
        \node[font=\bfseries\large] at (sq.center) {$\mathrm{q}$};

        \node[
          draw=none, minimum height=\sz, minimum width=0.4\rectw,
          inner sep=0pt, outer sep=0pt, right=0pt of sq
        ] (rectprmpt) {};
        \begin{scope}[on background layer]
          \fill[#2!10] (rectprmpt.south west) rectangle (rectprmpt.north east);
        \end{scope}
        \draw[#2, line width=1.4\BORDER, dashed]
          (rectprmpt.north west) -- (rectprmpt.north east)
          (rectprmpt.north east) -- (rectprmpt.south east)
          (rectprmpt.south east) -- (rectprmpt.south west);

        \begin{scope}
          \clip (rectprmpt.south west) rectangle (rectprmpt.north east);
        
          \path coordinate (gxW) at ($(rectprmpt.west)!0.19!(rectprmpt.east)$);
          \path coordinate (gxE) at ($(rectprmpt.west)!0.81!(rectprmpt.east)$);
        
          \path coordinate (gy1) at ($(rectprmpt.south)!0.75!(rectprmpt.north)$);
          \fill[#2!60] (gxW|-gy1) circle[radius=1.2pt];
          \draw[#2!60, line width=\BORDER, line cap=round]
               ($(gxW|-gy1)+(1.6pt,0)$) -- ([xshift=-0pt]gxE|-gy1);
        
          \path coordinate (gy2) at ($(rectprmpt.south)!0.50!(rectprmpt.north)$);
          \fill[#2!60] (gxW|-gy2) circle[radius=1.2pt];
          \draw[#2!60, line width=\BORDER, line cap=round]
               ($(gxW|-gy2)+(1.6pt,0)$) -- ([xshift=-3pt]gxE|-gy2);
        
          \path coordinate (gy3) at ($(rectprmpt.south)!0.25!(rectprmpt.north)$);
          \fill[#2!60] (gxW|-gy3) circle[radius=1.2pt];
          \draw[#2!60, line width=\BORDER, line cap=round]
               ($(gxW|-gy3)+(1.6pt,0)$) -- ([xshift=-6pt]gxE|-gy3);
        \end{scope}

        \node[
          draw=#2, line width=1.4\BORDER,
          minimum height=\sz, minimum width=1.2\rectw,
          inner sep=0pt, outer sep=0pt, right=4pt of rectprmpt
        ] (rect) {};
        \begin{scope}[on background layer]
          \fill[#2!10] (rect.south west) rectangle (rect.north east);
        \end{scope}

        \begin{scope}
          \clip (rect.south west) rectangle (rect.north east);
        
          \def\triH{1.2pt}   %
          \def\triL{1.8
          pt}   %
        
          \coordinate (gxW) at ($(rect.west)!0.18!(rect.east)$);
          \coordinate (gxE) at ($(rect.west)!0.73!(rect.east)$);
        
          \coordinate (gy1) at ($(rect.south)!0.75!(rect.north)$);
          \coordinate (p1)  at ($(gxW|-gy1)$);
          \fill[#2!60]
            ($(p1)+(-\triL,-\triH)$) --
            ($(p1)+(-\triL, \triH)$) --
            ($(p1)+(\triL,0)$) -- cycle;
          \draw[#2!60, line width=\BORDER, line cap=round]
            ($(p1)+(\triL+0.6pt,0)$) -- ([xshift=-3pt]gxE|-gy1);
        
          \coordinate (gy2) at ($(rect.south)!0.50!(rect.north)$);
          \coordinate (p2)  at ($(gxW|-gy2)$);
          \fill[#2!60]
            ($(p2)+(-\triL,-\triH)$) --
            ($(p2)+(-\triL, \triH)$) --
            ($(p2)+(\triL,0)$) -- cycle;
          \draw[#2!60, line width=\BORDER, line cap=round]
            ($(p2)+(\triL+0.6pt,0)$) -- ([xshift=-5pt]gxE|-gy2);
        
          \coordinate (gy3) at ($(rect.south)!0.25!(rect.north)$);
          \coordinate (p3)  at ($(gxW|-gy3)$);
          \fill[#2!60]
            ($(p3)+(-\triL,-\triH)$) --
            ($(p3)+(-\triL, \triH)$) --
            ($(p3)+(\triL,0)$) -- cycle;
          \draw[#2!60, line width=\BORDER, line cap=round]
            ($(p3)+(\triL+0.6pt,0)$) -- (gxE|-gy3);
        \end{scope}

        \coordinate (-sqwest)   at (sq.west);
        \coordinate (-rectwest) at (rect.west);
        \coordinate (-recteast) at (rect.east);
        \coordinate (-east)     at (rect.east);
        \coordinate (-rectnorth) at (rect.north);
        \coordinate (-rectsouth) at (rectprmpt.south);
      \end{scope}
    }
  },
  pics/segmentintermediatetriangle/.default={mila-purple-1}{mila-purple-2}{mila-purple}
}

\tikzset{
  pics/segmentlegend/.style n args={4}{
    code={
      \begin{scope}[node distance=0pt]
        \begin{scope}[local bounding box=sq]
          \path[draw=none, pattern color=black!30]
            (\rectr,0) -- (\sww,0) -- (\sww,0.5\sz) -- (\rectr,0.5\sz)
            arc (90:180:\rectr) -- (0,\rectr) arc (180:270:\rectr) -- cycle;
          \draw[black!70, dashed, line width=0.8\BORDER]
            (\sww,0.5\sz) -- (\rectr,0.5\sz) arc (90:180:\rectr) -- (0,\rectr)
            arc (180:270:\rectr) -- (\rectr,0) -- (\sww,0);
          \path[use as bounding box] (0,0) rectangle (\sww,0.5\sz);
        \end{scope}

        \node[
          draw=none, minimum height=0.5\sz, minimum width=0.04\rectw,
          inner sep=0pt, outer sep=0pt, right=0pt of sq
        ] (rectprmpt) {};
        \draw[black!70, line width=0.8\BORDER, dashed]
          (rectprmpt.north west) -- (rectprmpt.north east)
          (rectprmpt.north east) -- (rectprmpt.south east)
          (rectprmpt.south east) -- (rectprmpt.south west);
        \node[right=1pt of rectprmpt] (txtprompt)
          {\footnotesize \sffamily \textcolor{black!50}{\texttt{Prompt}}};

        \node[
          draw=black!50, line width=1.3\BORDER,
          minimum height=0.5\sz, minimum width=0.25\rectw,
          inner sep=0pt, outer sep=0pt, right=8pt of txtprompt
        ] (rect) {};
        \node[right=1pt of rect] (response)
          {\footnotesize \sffamily \textcolor{black!50}{\texttt{Response}}};

        \node[
          draw=#1, line width=1.3\BORDER, preaction={fill=#1!10},
          minimum height=0.5\sz, minimum width=0.25\rectw,
          inner sep=0pt, outer sep=0pt, right=15pt of response
        ] (query_rect) {};
        \node[right=1pt of query_rect] (query)
          {\footnotesize \sffamily \textcolor{#1}{\texttt{Query}}};

        \node[
          draw=#2, line width=1.3\BORDER, preaction={fill=#2!10},
          minimum height=0.5\sz, minimum width=0.25\rectw,
          inner sep=0pt, outer sep=0pt, right=8pt of query
        ] (reasoning_rect) {};
        \node[right=1pt of reasoning_rect] (reasoning)
          {\footnotesize \sffamily \textcolor{#2}{\texttt{Reasoning}}};

        \node[
          draw=#3, line width=1.3\BORDER, preaction={fill=#3!10},
          minimum height=0.5\sz, minimum width=0.25\rectw,
          inner sep=0pt, outer sep=0pt, right=8pt of reasoning
        ] (summary_rect) {};
        \node[right=1pt of summary_rect] (summary)
          {\footnotesize \sffamily \textcolor{#3}{\texttt{Summary}}};

        \node[
          draw=#4, line width=1.3\BORDER, preaction={fill=#4!10},
          minimum height=0.5\sz, minimum width=0.25\rectw,
          inner sep=0pt, outer sep=0pt, right=8pt of summary
        ] (conclusion_rect) {};
        \node[right=1pt of conclusion_rect] (conclusion)
          {\footnotesize \sffamily \textcolor{#4}{\texttt{Conclusion}}};
      \end{scope}
    }
  },
  pics/segmentlegend/.default={mila-purple-1}{mila-purple-2}{mila-purple}
}

\newcommand{\subfiglongCoT}{
\begin{tikzpicture}
  \path (0,0) pic (seg1) {segmentlongcot={mila-yellow}{delethink-blue}{plot-green}};
  
  \path[fill=black, fill opacity=0, draw opacity=0]
    ([xshift=-0.6cm, yshift=-0.5\sz]seg1-sqwest)
    rectangle ([yshift=0.5\sz]seg1-sqwest);

  \coordinate (mid23) at ($(seg1-east)!0.5!(seg1-sqwest)$);
  \node[anchor=center, fill=none, draw=none, inner xsep=8pt, minimum height=3cm] at (mid23) {};

  \draw[|-|, line width=0.5, draw=black!100, line cap=round]
    ([xshift=1pt, yshift=-25pt]seg1-sqwest) --
    node[midway, below=7pt, align=center, text width=10cm]
      {
       }
    ([xshift=1pt, yshift=-25pt]seg1-east);

  \node[anchor=south west]
    at ([xshift=5pt, yshift=-23pt]current bounding box.north west)
    {\sffamily \large \textbf{Vanilla Reasoning Paradigm}
     \textcolor{black!60}{}};

  \coordinate (NW) at (current bounding box.north west);
  \draw[opacity=0, line width=0pt] (NW) -- ([xshift=16.5cm]NW);
\end{tikzpicture}
}

\newcommand{\subfiginftythink}{
\begin{tikzpicture}
  \path (0,0) pic (seg1) {inftythink_iter_1={inftythink-yellow}{inftythink-blue}{inftythink-red}};
  
  \path ([yshift=-0.5\sz, xshift=\segap]seg1-east)
    pic (seg2) {inftythink_iter_n-1={inftythink-yellow}{inftythink-blue}{inftythink-red}};
  \path ([yshift=-0.5\sz, xshift=\segap]seg2-east)
    pic (seg3) {inftythink_iter_n-1={inftythink-yellow}{inftythink-blue}{inftythink-red}};
  \path ([yshift=-0.5\sz, xshift=\segap]seg3-east)
    pic (seg4) {inftythink_iter_n={inftythink-yellow}{inftythink-blue}{inftythink-red}{inftythink-green}};

  \draw[curvedarrow]
    ([yshift=1pt]seg1-north) to[out=40, in=-130, looseness=1.4] ([yshift=-1pt]seg2-rectsouth);
  \draw[curvedarrow]
    ([yshift=1pt]seg2-north) to[out=40, in=-130, looseness=1.4] ([yshift=-1pt]seg3-rectsouth);
  \draw[curvedarrow]
    ([yshift=1pt]seg3-north) to[out=40, in=-130, looseness=1.4] ([yshift=-1pt]seg4-rectsouth);

  \draw[arrowline, shorten >=2pt, shorten <=2pt]
    ([xshift=\segaparrow]seg1-east) -- ([xshift=-0.2\segaparrow]seg2-sqwest);

  \draw[arrowline, shorten >=2pt, shorten <=2pt]
    ([xshift=\segaparrow]seg2-east) -- node[midway, font=\large, fill=white] {\strut$\cdots$}
    ([xshift=-0.2\segaparrow]seg3-sqwest);

  \draw[arrowline, shorten >=2pt, shorten <=2pt]
    ([xshift=\segaparrow]seg3-east) -- ([xshift=-0.2\segaparrow]seg4-sqwest);

  \draw[|-|, line width=0.5, draw=black!100, line cap=round]
    ([xshift=1pt, yshift=-25pt]seg1-sqwest) --
    node[midway, below=7pt, align=center, text width=4cm]
      {\footnotesize \sffamily \textcolor{black!90}{Iter $1$\\}}
    ([xshift=1pt, yshift=-25pt]seg1-east);

   \draw[|-|, line width=0.5, draw=black!100, line cap=round]
    ([xshift=1pt, yshift=-25pt]seg2-sqwest) --
    node[midway, below=7pt, align=center, text width=4cm]
      {\footnotesize \sffamily \textcolor{black!90}{Iter $2$\\}}
    ([xshift=1pt, yshift=-25pt]seg2-east);

  \draw[|-|, line width=0.5, draw=black!100, line cap=round]
    ([xshift=1pt, yshift=-25pt]seg3-sqwest) --
    node[midway, below=7pt, align=center, text width=4cm]
      {\footnotesize \sffamily \textcolor{black!90}{Iter $n-1$}}
    ([xshift=1pt, yshift=-25pt]seg3-east);
  
  \draw[|-|, line width=0.5, draw=black!100, line cap=round]
    ([xshift=1pt, yshift=-25pt]seg4-sqwest) --
    node[midway, below=7pt, align=center, text width=4cm]
      {\footnotesize \sffamily \textcolor{black!90}{Iter $n$}}
    ([xshift=1pt, yshift=-25pt]seg4-east);

  \path ([xshift=-12cm, yshift=-3.3\sz]seg4-sqwest)
    pic (seg5) {segmentlegend={mila-yellow}{delethink-blue}{plot-red}{plot-green}};

  \node[anchor=south west]
    at ([xshift=5pt, yshift=-20pt]current bounding box.north west)
    {\sffamily \large \textbf{InftyThink Reasoning Paradigm}};

  \coordinate (NW) at (current bounding box.north west);
  \draw[opacity=0, line width=0pt] (NW) -- ([xshift=16.5cm]NW);
\end{tikzpicture}
}

\newcommand{\subfigdelethink}{
\begin{tikzpicture}
  \path (0,0) pic (seg1) {segment={inftythink-yellow}{inftythink-blue}{inftythink-green}};
  \path ([yshift=-0.5\sz, xshift=\segap]seg1-east)
    pic (seg2) {segmentintermediatecircle={inftythink-yellow}{inftythink-blue}{inftythink-green}};
  \path ([yshift=-0.5\sz, xshift=1\segap]seg2-east)
    pic (seg3) {segmentintermediatetriangle={inftythink-yellow}{inftythink-blue}{inftythink-green}};
  \path ([yshift=-0.5\sz, xshift=\segap]seg3-east)
    pic (seg4) {segmentintermediatesquare={inftythink-yellow}{inftythink-blue}{inftythink-green}};

  \draw[curvedarrow]
    ([yshift=1pt]seg1-rectnorth) to[out=40, in=-130, looseness=1.4] ([yshift=-1pt]seg2-rectsouth);
  \draw[curvedarrow]
    ([yshift=1pt]seg2-rectnorth) to[out=40, in=-130, looseness=1.4] ([yshift=-1pt]seg3-rectsouth);
  \draw[curvedarrow]
    ([yshift=1pt]seg3-rectnorth) to[out=40, in=-130, looseness=1.4] ([yshift=-1pt]seg4-rectsouth);

  \draw[arrowline, shorten >=2pt, shorten <=2pt]
    ([xshift=\segaparrow]seg1-recteast) -- ([xshift=-0.2\segaparrow]seg2-sqwest);

  \draw[arrowline, shorten >=2pt, shorten <=2pt]
    ([xshift=\segaparrow]seg2-recteast) -- node[midway, font=\large, fill=white] {\strut$\cdots$}
    ([xshift=-0.2\segaparrow]seg3-sqwest);

  \draw[arrowline, shorten >=2pt, shorten <=2pt]
    ([xshift=\segaparrow]seg3-recteast) -- ([xshift=-0.2\segaparrow]seg4-sqwest);

  \draw[|-|, line width=0.5, draw=black!100, line cap=round]
    ([xshift=1pt, yshift=-25pt]seg1-sqwest) --
    node[midway, below=7pt, align=center, text width=4cm]
      {\footnotesize \sffamily \textcolor{black!90}{Chunk $1$\\}}
    ([xshift=1pt, yshift=-25pt]seg1-east);

   \draw[|-|, line width=0.5, draw=black!100, line cap=round]
    ([xshift=1pt, yshift=-25pt]seg2-sqwest) --
    node[midway, below=7pt, align=center, text width=4cm]
      {\footnotesize \sffamily \textcolor{black!90}{Chunk $2$\\}}
    ([xshift=1pt, yshift=-25pt]seg2-recteast);

  \draw[|-|, line width=0.5, draw=black!100, line cap=round]
    ([xshift=1pt, yshift=-25pt]seg3-sqwest) --
    node[midway, below=7pt, align=center, text width=4cm]
      {\footnotesize \sffamily \textcolor{black!90}{Chunk $L-1$}}
    ([xshift=1pt, yshift=-25pt]seg3-east);
  
  \draw[|-|, line width=0.5, draw=black!100, line cap=round]
    ([xshift=1pt, yshift=-25pt]seg4-sqwest) --
    node[midway, below=7pt, align=center, text width=4cm]
      {\footnotesize \sffamily \textcolor{black!90}{Chunk $L$}}
    ([xshift=1pt, yshift=-25pt]seg4-east);

  \node[anchor=south west]
    at ([xshift=5pt, yshift=-20pt]current bounding box.north west)
    {\sffamily \large \textbf{Delethink Reasoning Paradigm} };

  \coordinate (NW) at (current bounding box.north west);
  \draw[opacity=0, line width=0pt] (NW) -- ([xshift=16.5cm]NW);
\end{tikzpicture}
}

\newcommand{\figmethod}{
\begin{figure*}[t]
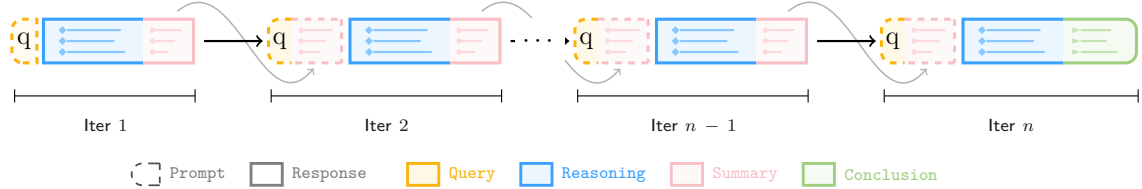

    \centering
    \begin{minipage}{0.95\textwidth}
      \centering
      \resizebox{\linewidth}{!}{\subfiglongCoT}\par
      \resizebox{\linewidth}{!}{\subfiginftythink}
    \end{minipage}
    \caption{
    \textbf{InftyThink} reasoning paradigm VS. Vanilla reasoning paradigm. \textbf{Upper panel:} The vanilla reasoning paradigm generates a single, continuous long chain-of-thought in one pass. \textbf{Lower panel:} The InftyThink reasoning paradigm decomposes reasoning into multiple iterative rounds, where consecutive iterations are connected via self-generated global summaries.
    }
    \label{fig:inftythink_overview}
\end{figure*}
}

\newcommand{\figdelethinkmethod}{
\begin{figure*}[h]
    \centering
    \begin{minipage}{0.95\textwidth}
      \centering
      \resizebox{\linewidth}{!}{\subfiglongCoT}\par
      \resizebox{\linewidth}{!}{\subfigdelethink} \par
      \resizebox{\linewidth}{!}{\subfiginftythink}
    \end{minipage}
    \caption{
    \textbf{Vanilla} reasoning paradigm VS. \textbf{Delethink} reasoning paradigm  VS. \textbf{InftyThink} reasoning paradigm.
    }
    \label{fig:inftythink_delethink_overview}
\end{figure*}
}

\twocolumn[
  \icmltitle{\methodname: Effective and Efficient Infinite-Horizon Reasoning via Reinforcement Learning}

  \icmlsetsymbol{equal}{*}

  \begin{icmlauthorlist}
    \icmlauthor{Yuchen Yan}{zju,ant}
    \icmlauthor{Liang Jiang}{ant}
    \icmlauthor{Jin Jiang}{pku}
    \icmlauthor{Shuaicheng Li}{ant}
    \icmlauthor{Zujie Wen}{ant} \\
    \icmlauthor{Zhiqiang Zhang}{ant}
    \icmlauthor{Jun Zhou}{ant}
    \icmlauthor{Jian Shao}{zju}
    \icmlauthor{Yueting Zhuang}{zju}
    \icmlauthor{Yongliang Shen}{zju}
  \end{icmlauthorlist}

  \icmlaffiliation{zju}{Zhejiang University}
  \icmlaffiliation{ant}{Ant Group}
  \icmlaffiliation{pku}{Peking University}

  \icmlcorrespondingauthor{Yuchen Yan}{yanyuchen@zju.edu.cn}
  \icmlcorrespondingauthor{Jian Shao}{jshao@zju.edu.cn}
  \icmlcorrespondingauthor{Yongliang Shen}{syl@zju.edu.cn}

  \icmlkeywords{Reinforcement Learning, LLM Reasoning, Efficient Reasoning}
  \vskip 0.3in
]

\printAffiliationsAndNotice{}  %

\begin{abstract}

Large reasoning models achieve strong performance by scaling inference-time chain-of-thought, but this paradigm suffers from quadratic cost, context length limits, and degraded reasoning due to lost-in-the-middle effects. Iterative reasoning mitigates these issues by periodically summarizing intermediate thoughts, yet existing methods rely on supervised learning or fixed heuristics and fail to optimize when to summarize, what to preserve, and how to resume reasoning. We propose \methodname , an end-to-end reinforcement learning framework that optimizes the entire iterative reasoning trajectory, building on model-controlled iteration boundaries and explicit summarization. \methodname adopts a two-stage training scheme with supervised cold-start followed by trajectory-level reinforcement learning, enabling the model to learn strategic summarization and continuation decisions. Experiments on DeepSeek-R1-Distill-Qwen-1.5B show that \methodname improves accuracy by 21\% on AIME24 and outperforms conventional long chain-of-thought reinforcement learning by a clear margin, while also generalizing better to out-of-distribution benchmarks. Moreover, \methodname significantly reduces inference latency and accelerates reinforcement learning training, demonstrating improved reasoning efficiency alongside stronger performance. 

\end{abstract}

\section{Introduction}
\label{sec:intro}
\figmethod
Large reasoning models have demonstrated remarkable performance across a wide range of complex real-world tasks, including mathematical reasoning, logical reasoning, and code reasoning~\citep{guo2025deepseekr1, openai2025gptoss120b, team2025every, team2025glm45, team2025kimi}. These gains primarily stem from \emph{inference-time scaling}: by producing exceptionally long chains-of-thought, models can perform problem decomposition, trajectory planning, multi-step reasoning, and self-reflection, thereby exhibiting advanced cognitive capabilities~\citep{chen2025not, openai2024introducing}. 

However, scaling reasoning length under the standard long-context paradigm encounters three fundamental barriers.
First, the quadratic complexity of self-attention means that inference cost grows superlinearly with generation length, making very long reasoning traces prohibitively expensive~\citep{vaswani2017attention, liu2025efficient}. 
Second, reasoning is hard-bounded by the model's maximum context window; when a problem demands a chain of thought exceeding this limit, generation terminates before reaching any conclusion, leaving the hardest problems unsolvable regardless of available compute~\citep{kuratov2024babilong}.
Third, as reasoning traces grow longer, models increasingly suffer from the ``lost-in-the-middle'' phenomenon, where critical early information becomes inaccessible, degrading reasoning quality even when context limits are not exceeded~\citep{liu2024lost, wang2025isolated}.
We further discuss these phenomena in Appendix~\ref{appendix:context_hit}.

These observations have motivated a growing body of work on \emph{iterative reasoning},
where the generation process is periodically interrupted, the accumulated context is compressed or summarized, and reasoning continues with a refreshed, bounded context~\citep{yan2025inftythink, aghajohari2025markovian}.
This paradigm promises to decouple reasoning depth from context length, enabling models to reason indefinitely while maintaining bounded computational cost per step.
We conduct an efficiency analysis in Appendix~\ref{appendix:inftythink_efficiency}.

Yet existing iterative reasoning methods fail to address three fundamental questions: \emph{when} to compress, \emph{how} to compress, and \emph{how to resume} after compression. 
Approaches based on token pruning or latent compression~\citep{xia2025tokenskip, zhang2025lightthinker} risk discarding information that later proves critical. 
The Markovian Thinker~\citep{aghajohari2025markovian} applies RL to fixed-size chunks, achieving linear compute scaling but imposing rigid boundaries that ignore the natural structure of reasoning. 
InftyThink~\citep{yan2025inftythink} allows models to autonomously decide when to summarize, but relies exclusively on supervised fine-tuning: the model learns to \emph{format} iterative reasoning by imitating training data.
This analysis reveals a key insight: \textbf{\textit{what makes iterative reasoning effective is not the format itself, but the ability to make optimal decisions at each iteration}}.
When to summarize, what to preserve, how to continue: these are sequential decisions with long-horizon consequences. A poor early summary can doom all subsequent reasoning; an unnecessary iteration wastes compute; a premature conclusion sacrifices accuracy. These tradeoffs demand trajectory-level optimization that supervised learning cannot provide.

We introduce \textbf{\methodname }, an end-to-end reinforcement learning framework that directly optimizes the complete iterative reasoning trajectory. 
Building on InftyThink's paradigm of model-controlled iteration boundaries and explicit summarization  (shown in Figure~\ref{fig:inftythink_overview}), our approach proceeds in two stages: a cold-start stage that uses supervised fine-tuning to establish the basic iterative reasoning format, followed by an RL stage that optimizes strategic decisions through trajectory-level learning. We carefully design the rollout strategy, reward formulation, and policy gradient estimation tailored to InftyThink's single-trajectory, multi-inference structure. This design separates format acquisition from strategy optimization, enabling the model to learn not only how to produce iterative reasoning, but also when to summarize, what to preserve, and how to effectively leverage self-generated summaries across iterations.

To demonstrate the effectiveness of \methodname in enhancing both reasoning performance and efficiency, we conduct extensive empirical experiments.
We evaluate our method on DeepSeek-R1-Distill-Qwen-1.5B. On AIME24, \methodname improves accuracy by $21\%$, and achieves an additional gain of $9\%$ compared to conventional long CoT–based reinforcement learning. On out-of-distribution GPQA\_diamond benchmark, \methodname improves accuracy by $5\%$, and achieves an additional gain of $4\%$ than vanilla approach.
In terms of inference efficiency, on AIME25, \methodname reduces reasoning latency by $32.8\%$ compared to the vanilla approach. 
This improvement consistently generalizes to larger-scale models, such as Qwen3-4B-Base, and extends to out-of-distribution tasks, including code and scientific reasoning.
Moreover, \methodname yields a $18.2\%$ speedup in RL training, demonstrating a more efficient utilization of training resources.
Our contributions can be summarized as follows\footnote{Codes and models of \methodname are available at \href{https://zju-real.github.io/InftyThink-Plus}{https://zju-real.github.io/InftyThink-Plus}}:
\begin{itemize}
    \item We introduce reinforcement learning into the iterative reasoning paradigm, enabling end-to-end optimization of when to summarize, what to preserve, and how to continue across iterations.
    \item We develop \methodname , comprising trajectory-level optimization with shared advantages, efficiency-aware reward shaping, and a cold-start training protocol.
    \item We demonstrate that \methodname consistently outperforms both SFT-based iterative reasoning and standard long-context RL, with analysis revealing adaptive iteration that emerge from trajectory-level optimization.
\end{itemize}

\paragraph{Conflict of Interest.} The authors Yuchen Yan (as an intern), Liang Jiang, Shuaicheng Li, Zujie Wen, Zhiqiang Zhang and Jun Zhou are employed by Ant Group, which leads the development of Algorithm IcePop, which was used for stable RL training in this paper.

\section{Related Work}
\subsection{Reinforcement Learning for LLM Reasoning}
Reinforcement learning (RL) has emerged as the dominant training paradigm for frontier reasoning models. By performing large-scale rollouts and assigning rewards to generated trajectories, RL guides models to converge toward more correct reasoning paths, thereby improving performance on reasoning tasks. Existing RL-based approaches for reasoning models can be broadly categorized into three lines of work. 
(1) \textbf{Data-centric methods}: these approaches focus on constructing more comprehensive and effective queries and verification schemes, providing RL with diverse, high-quality training samples~\citep{albalak2025bigmath, he2025deepmath103kb, hu2025openreasonerzero, yu2025dapo}. 
(2) \textbf{Reward-centric methods}: this line of work designs task-specific reward functions to optimize different objectives, such as reasoning accuracy, computational efficiency, or generation length~\citep{dong2025agentic, shao2025deepseekmathv2, wu2025lapo}. 
(3) \textbf{Policy-gradient optimization methods}: these approaches develop practical RL algorithms to make optimization more stable and precise, reducing variance and improving convergence behavior~\citep{guo2025deepseekr1, yu2025dapo, zheng2025group, tang2025rethinking}. 
Building upon existing RL datasets, \methodname tailors both rollout and reward designs to the InftyThink~\citep{yan2025inftythink} reasoning paradigm, and further proposes a gradient update scheme specifically adapted to the \emph{single-trajectory, multi-generation} setting.

\subsection{Context Management for Long-horizon Reasoning}
Reasoning models exhibit a distinctive generation pattern in which they produce exceptionally long reasoning content. Through repeated decomposition, planning, inference, and reflection, these models achieve improved reasoning performance~\citep{wang2025thoughts, wu2024comparative}. However, a fundamental challenge faced by current reasoning models lies in their limited context window, which constrains their reasoning capability~\citep{kuratov2024babilong}. This limitation becomes particularly severe in long-horizon agentic tasks, where the effective context budget is further reduced by extended interaction histories~\citep{mei2025survey}. 

Existing efforts to mitigate this issue can be broadly categorized into two directions: \emph{input-side context management} and \emph{output-side context management}. On the input side, prior work focuses on compressing the available context by techniques such as generating summaries or discarding earlier reasoning (e.g., prior CoT tokens), thereby reserving more space for subsequent reasoning~\citep{wu2025resum, xu2025amem, yu2025memagent}. In contrast, output-side context management requires online processing of generated reasoning tokens during inference. Representative approaches include removing low-information tokens or segmenting a long reasoning trajectory into multiple shorter reasoning segments, effectively expanding the usable context horizon~\citep{aghajohari2025markovian, xia2025tokenskip, yan2025inftythink}.
InftyThink~\citep{yan2025inftythink} belongs to the latter category, using explicit textual summaries to propagate information across iterations. While prior work on InftyThink relies on supervised learning with heuristic data construction, \methodname introduces end-to-end RL optimization, enabling the model to learn effective summarization and continuation strategies through trajectory-level feedback.

\section{Methods}
\label{sec:methods}
In this section, we present the complete training recipe for \textbf{\methodname}. We first introduce the InftyThink reasoning paradigm that serves as the foundation of our approach (Section~\ref{sec:inftythink_paradigm}). We then describe the cold-start procedure, which enables the model to acquire the fundamental InftyThink reasoning format (Section~\ref{sec:cold_start}).
Finally, we detail the reinforcement learning strategy that optimizes the complete reasoning trajectory end-to-end (Section~\ref{sec:reinforcement_learning}).

\subsection{InftyThink Reasoning Paradigm}
\label{sec:inftythink_paradigm}

We first contrast the vanilla reasoning paradigm with the InftyThink reasoning paradigm to clarify the structural differences that motivate our approach.

\paragraph{Vanilla Reasoning Paradigm.} Following the reasoning format popularized by DeepSeek-R1~\citep{guo2025deepseekr1}, most existing reasoning models produce outputs consisting of two parts:
a long reasoning trace enclosed by \texttt{<think>} and \texttt{</think>} tags, containing detailed step-by-step analysis; and a concise conclusion presenting the final answer.
While effective, this paradigm couples reasoning depth directly to context length, inheriting all three barriers discussed in Section~\ref{sec:intro}.

\paragraph{InftyThink Reasoning Paradigm.} 
InftyThink decouples reasoning depth from context length by distributing the reasoning process across multiple iterations connected through explicit summaries.
For a query \(
q\), at each reasoning round \(i\), the model conditions on the summary \(s_{i-1}\) from the previous iteration, generates reasoning \(r_i\) for the current iteration, and produces an updated summary \(s_i\). This process repeats iteratively until the model autonomously terminates by generating a conclusion \(c\) instead of a summary. 
We provide a detailed description in Appendix~\ref{appendix:inftythink_introduction}.

The key distinction from vanilla reasoning is that each iteration operates within a bounded context window:
the model sees only the original query and the most recent summary, not the full reasoning history.
This design achieves two goals simultaneously: it bounds the per-iteration computational cost regardless of total reasoning depth, and it forces the model to distill essential information into summaries that must support all subsequent reasoning.

\subsection{Cold Start}
\label{sec:cold_start}
Before applying RL, we perform a cold-start stage that teaches the model the basic format of InftyThink-style reasoning.
Specifically, we transform existing supervised data into InftyThink format and fine-tune the model to produce multi-iteration outputs with explicit summaries. 

\paragraph{Data Transformation.} We transform existing vanilla reasoning data into InftyThink format through a three-step process. Given a vanilla triple $(q, r, c)$ consisting of query, reasoning trace, and conclusion, we first partition $r$ into segments $\{r_1, \ldots, r_n\}$ using a hyperparameter $\eta$ that bounds segment length while preserving semantic coherence at sentence boundaries. We then employ a general-purpose language model to generate summaries $\{s_1, \ldots, s_{n-1}\}$, where each summary $s_i$ is conditioned on the previous summary $s_{i-1}$ and current reasoning $r_i$, matching the information flow at inference time. A hyperparameter $\gamma$ constrains summary length to ensure compression, ensuring that the number of tokens does not exceed $\gamma$, thereby enabling efficient summarization. The transformation yields training instances:
\begin{equation}
(q, r, c) \xrightarrow{\eta,\ \gamma} \begin{cases} 
      (q, {r}_1, s_1) & \text{for } i = 1, \\
      (q, s_{i-1}, {r}_i, s_i) & \text{for } 1 < i < n, \\
      (q, s_{n-1}, {r}_n, c) & \text{for } i = n.
  \end{cases}
\end{equation}
We provide a more detailed description of the data paradigm transformation pipeline in~\ref{appendix:cold_start_paradigm_transformation}.

\paragraph{Supervised Initialization.} 
We augment the tokenizer with special tokens (\texttt{<summary>}, \texttt{</summary>}, \texttt{<history>}, \texttt{</history>}) and perform supervised fine-tuning on the transformed data. During training, loss is computed only over reasoning and summary tokens; query and history tokens are masked. After this stage, the model can produce syntactically valid InftyThink outputs, but it has learned only to imitate the format from training data.
The model has not learned to determine the appropriate timing for summarization, identify which information is essential to preserve, or adapt the number of iterations to problem difficulty. These capabilities require trajectory-level optimization, which we address through reinforcement learning.
We provide a more detailed description of SFT recipe in Appendix~\ref{appendix:cold_start_sft}.

\subsection{Reinforcement Learning}
\label{sec:reinforcement_learning}
The cold-start stage teaches format; reinforcement learning teaches strategy. We now describe how RL is adapted to the unique structure of InftyThink reasoning, where a single problem induces a trajectory of multiple generations connected through summaries. The complete recipe for RL is provided in Appendix~\ref{appendix:reinforcement_learning}.

\paragraph{Trajectory-Level Rollout.} 
A key challenge in applying RL to InftyThink is that optimizing a single query requires rolling out the complete multi-iteration trajectory. We introduce a hyperparameter $\varphi$ that bounds the maximum number of iterations to ensure training efficiency.
Given query $q$, we roll out the model iteratively: at each iteration $j$, we construct the prompt from $q$ and the previous summary $s_{j-1}$ (empty if $j = 1$), generate output $o_j$, and extract any summary for the next iteration.
Rollout terminates when: (i) the model produces a conclusion instead of a summary, (ii) the model fails to produce valid InftyThink format, or (iii) the iteration count reaches $\varphi$. The $i$-th sampled trajectory for query $q$ is denoted as:
\begin{equation}
\mathcal{O}_i = \{o_i^1, o_i^2, \ldots, o_i^{n_i}\}, \quad n_i \leq \varphi,
\end{equation}
where $o_i^j$ represents the output at the $j$-th iteration and $n_i$ is the total number of iterations in trajectory $i$.

\paragraph{Reward Design.} In RL, a critical step is to assign rewards to the sequences being optimized, thereby evaluating model rollouts and guiding the policy to move toward or away from particular behaviors. In this work, we primarily introduce two types of rewards: a \emph{task reward}, which assesses whether the model successfully solves the given problem, and an \emph{efficiency reward}, which measures how efficiently the model arrives at a solution. Our reward assignment is performed at the trajectory level: for a trajectory $O_i$, all round-wise outputs $o_i^j$ share the same scalar reward. 

The \textbf{\emph{task reward}} evaluates correctness by verifying the final output against the ground truth:
\begin{equation}
\mathcal{R}_{\text{task}}(\mathcal{O}_i) = \mathbb{I}\left[\operatorname{Verify}(o_i^{n_i}, \text{gt}) = \text{Correct}\right],
\end{equation}
where $o_i^{n_i}$ denotes the final output of trajectory $\mathcal{O}_i$, $\text{gt}$ is the ground-truth answer, $\operatorname{Verify}(\cdot, \cdot)$ is a problem-specific verification function, and $\mathbb{I}[\cdot]$ is the indicator function that returns 1 if the condition holds and 0 otherwise.

The \textbf{\emph{efficiency reward}} encourages solving problems in fewer iterations when possible. We adopt a quadratic decay that penalizes additional iterations more heavily as the count grows:
\begin{equation}
\mathcal{R}_{\text{eff}}(\mathcal{O}_i) = 1 - \left(\frac{n_i - 1}{\varphi}\right)^2,
\end{equation}
where $n_i$ is the number of iterations in trajectory $\mathcal{O}_i$ and $\varphi$ is the maximum allowed iterations. This reward takes values in $(0, 1]$, achieving its maximum of 1 when $n_i = 1$ and decreasing monotonically as $n_i$ increases. The quadratic form provides mild penalties for early iterations, allowing exploration, while increasingly discouraging unnecessary iterations as the count approaches $\varphi$.

Following~\citet{shao2025deepseekmathv2}, we combine the two rewards multiplicatively:
\begin{equation}
\mathcal{R}(\mathcal{O}_i)
=
\mathcal{R}_{\text{task}}(\mathcal{O}_i)
\cdot
\mathcal{R}_{\text{eff}}(\mathcal{O}_i),
\end{equation}
This formulation ensures that efficiency rewards only affect correct trajectories: incorrect solutions receive zero reward regardless of iteration count, preventing the model from learning to terminate prematurely at the cost of accuracy.

\paragraph{Policy Gradient.} 
We adopt Group Relative Policy Optimization (GRPO)~\citep{shao2024deepseekmath} as our base RL algorithm. For a given query $q$, we sample $G$ reasoning trajectories, each consisting of multiple rounds of generation. 
All outputs across all iterations are optimized jointly using token-level loss averaging~\citep{yu2025dapo}:
\begin{equation}
\label{eq:inftythink_loss}
\begin{split}
    \mathcal{J}(\theta) &= \mathbb{E}_{\{\mathcal{O}_i\}_{i=1}^{G} \sim \pi_{\theta_{\text{old}}}(\cdot \mid q), \{o_i^j\}_{j=1}^{n_i} \sim \mathcal{O}_i} \\
    & \left[ \frac{1}{\sum_{i=1}^G\sum_{j=1}^{n_i}|o_i^j|} \sum_{i=1}^{G} \underbrace{\sum_{j=1}^{n_i} \overbrace{\mathcal{U}(o_i^j;\theta)}^{\text{round loss}}}_{\text{trajectory loss}} \right],
\end{split}
\end{equation}
where $|o_i^j|$ denotes the number of tokens in output $o_i^j$, and $\mathcal{U}(o; \theta)$ is the clipped surrogate objective:
\begin{equation}
\small
\mathcal{U}(o;\theta) =\sum_{t=1}^{|o|} \min\left(r_\theta(o_t) \hat{A}_t,\text{clip}(r_\theta(o_t), 1-\epsilon^-, 1+\epsilon^+)\hat{A}_t \right),
\end{equation}
Here $r_\theta(o_t) = \pi_\theta(o_t) / \pi_{\theta_{\text{old}}}(o_t)$ is the importance sampling ratio for token $o_t$, $\epsilon^-$ and $\epsilon^+$ are clipping thresholds, and $\hat{A}_t$ is the advantage estimate for token $t$.

Critically, advantages are computed at the trajectory level and shared across all iterations within a trajectory. For any token $t$ in output $o_i^j$ belonging to trajectory $\mathcal{O}_i$, the advantage is:
\begin{equation}
\hat{A}_t = \frac{\mathcal{R}(\mathcal{O}_i) - \mu}{\sigma},
\end{equation}
where $\mu$ and $\sigma$ are the mean and standard deviation of rewards computed over all $G$ trajectories sampled for query $q$. This shared advantage design reflects the key insight that early iterations contribute to final success: a high-quality first summary that enables correct reasoning in later iterations receives positive gradient signal, even though the summary itself does not directly produce the answer.

\paragraph{Training Stability.} In practice, the rollout phase and parameter update phase often use different computational backends for efficiency~\citep{sheng2025hybridflow}. Following~\citet{team2025every,liu-li-2025-rl-collapse, zheng2025stabilizing}, we apply token-level gradient masking (IcePop)~\citep{team2025every} to exclude tokens whose log probabilities differ substantially between the inference engine and the training engine, improving the robustness of RL training.

\begin{table*}[t]
  \centering
  \small
  \setlength{\tabcolsep}{2.8pt}
  \caption{Our main experimental results. The results are obtained by sampling the model 32 times with a temperature of 0.7. ACC stands for average accuracy (\%), TOK stands for average number of generated tokens (K), and LAT stands for average inference time in seconds. \ding{55} denotes the setting with cold start only, without RL.
\ding{51}~T denotes the RL setting where only the task reward is used.
\ding{51}~T+E denotes the RL setting where both the task reward and the efficiency reward are used.
}
    \begin{tabular}{lrrrrrrrrrrrr|rrr}
    \toprule
    \multirow{2}[4]{*}{\textbf{RL}} & \multicolumn{3}{c}{\textbf{MATH500}} & \multicolumn{3}{c}{\textbf{AIME24}} & \multicolumn{3}{c}{\textbf{AIME25}} & \multicolumn{3}{c|}{\textbf{GPQA\_diamond}} & \multicolumn{3}{c}{\textbf{Average}} \\
\cmidrule{2-16}          & \multicolumn{1}{c}{ACC↑} & \multicolumn{1}{c}{TOK} & \multicolumn{1}{c}{LAT↓} & \multicolumn{1}{c}{ACC↑} & \multicolumn{1}{c}{TOK} & \multicolumn{1}{c}{LAT↓} & \multicolumn{1}{c}{ACC↑} & \multicolumn{1}{c}{TOK} & \multicolumn{1}{c}{LAT↓} & \multicolumn{1}{c}{ACC↑} & \multicolumn{1}{c}{TOK} & \multicolumn{1}{c|}{LAT↓} & \multicolumn{1}{c}{ACC↑} & \multicolumn{1}{c}{TOK} & \multicolumn{1}{c}{LAT↓} \\
    \midrule
    \multicolumn{16}{c}{\textbf{Base Model:} DeepSeek-R1-Distill-Qwen-1.5B} \\
    \midrule
    \rowcolor{lightgray!70} \multicolumn{16}{l}{\textit{Vanilla}} \\
    \ding{55}     
                & 86.20 & 5.32  & 48.71 
                & 26.67 & 17.08 & 158.95 
                & 24.48 & 15.53 & 134.34 
                & 29.40 & 10.45 & 101.84 
                & 41.69 & 12.10 & 110.96 \\
    \ding{51} T    
                & 89.63 & 5.93  & 56.05 
                & 38.75 & 18.26 & 175.00   
                & 31.04 & 18.11 & 169.38 
                & 29.81 & 15.48 & 197.33 
                & 47.31 & 14.45 & 149.44 \\
    \rowcolor{lightblue!50} $\Delta$ 
                & \textcolor{Green}{+3.43}  & \textcolor{Green}{+0.62}  & \textcolor{Green}{+7.34}  
                & \textcolor{Green}{+12.08} & \textcolor{Green}{+1.18}  & \textcolor{Green}{+16.05} 
                & \textcolor{Green}{+6.56}  & \textcolor{Green}{+2.58}  & \textcolor{Green}{+35.04} 
                & \textcolor{Green}{+0.41}  & \textcolor{Green}{+5.03}  & \textcolor{Green}{+95.49} 
                & \textcolor{Green}{+5.62}  & \textcolor{Green}{+2.35}  & \textcolor{Green}{+38.48} \\
    \midrule
    \rowcolor{lightgray!70}  \multicolumn{16}{l}{\textit{\methodname}} \\
    \ding{55}     
                & 86.54 & 5.77  & 34.82 
                & 29.48 & 20.23 & 103.04 
                & 27.92 & 19.18 & 98.10 
                & 32.31 & 11.77 & 74.31 
                & 44.06 & 14.24 & 77.57 \\
    \ding{51} T 
                & \textbf{91.56} & 6.10  & 34.26 
                & \textbf{50.94} & 23.36 & 102.85 
                & \textbf{35.83} & 26.34 & 113.78 
                & \textbf{37.50} & 24.27 & 149.93 
                & \textbf{53.96} & 20.02 & 100.21 \\
    \rowcolor{lightblue!50} $\Delta$ 
                & \textcolor{Green}{+5.02}  & \textcolor{Green}{+0.33}  & \textcolor{Red}{-0.56} 
                & \textcolor{Green}{+21.46} & \textcolor{Green}{+3.13}  & \textcolor{Red}{-0.19} 
                & \textcolor{Green}{+7.91}  & \textcolor{Green}{+7.16}  & \textcolor{Green}{+15.68} 
                & \textcolor{Green}{+5.19}  & \textcolor{Green}{+12.50} & \textcolor{Green}{+75.62} 
                & \textcolor{Green}{+9.89}  & \textcolor{Green}{+5.78}  & \textcolor{Green}{+22.64} \\
    \cmidrule{2-16}
    \ding{51} T+E 
                & 89.96 & 3.36  & \textbf{17.71} 
                & 43.96 & 13.13 & \textbf{57.50} 
                & 32.92 & 7.45 & \textbf{68.39} 
                & 35.46 & 8.69  & \textbf{49.87} 
                & 50.58 & 10.66 & \textbf{48.37} \\
    \rowcolor{lightblue!50} $\Delta$ 
                & \textcolor{Green}{+3.42}  & \textcolor{Red}{-2.41}  & \textcolor{Red}{-17.11}  
                & \textcolor{Green}{+14.48}  & \textcolor{Red}{-7.10}  & \textcolor{Red}{-45.54}  
                & \textcolor{Green}{+5.00}  & \textcolor{Red}{-1.73}  & \textcolor{Red}{-29.71}  
                & \textcolor{Green}{+3.15}  & \textcolor{Red}{-3.08}  & \textcolor{Red}{-24.44}  
                & \textcolor{Green}{+6.51}  & \textcolor{Red}{-3.58}  & \textcolor{Red}{-29.20}  \\
    \midrule
    \multicolumn{16}{c}{\textbf{Base Model:} Qwen3-4B-Base} \\
    \midrule
    \rowcolor{lightgray!70} \multicolumn{16}{l}{\textit{Vanilla}} \\
    \ding{55}     & 91.97 & 4.52  & 139.6 & 44.06 & 15.02 & 439.62 & 33.65 & 14.93 & 448.98 & 45.65 & 8.10  & 254.73 & 53.83 & 10.64 & 320.73 \\
    \ding{51} T      & 92.89 & 6.32  & 254.09 & 50.31 & 17.16 & 571.18 & 38.31 & 18.70 & 733.78 & 47.02 & 12.32 & 579.22 & 57.13 & 13.63 & 534.57 \\
    \rowcolor{lightblue!50} $\Delta$ & \textcolor{Green}{+0.92}  & \textcolor{Green}{+1.80}  & \textcolor{Green}{+114.49} & \textcolor{Green}{+6.25}  & \textcolor{Green}{+2.13}  & \textcolor{Green}{+131.56} & \textcolor{Green}{+4.66}  & \textcolor{Green}{+3.78}  & \textcolor{Green}{+284.80} & \textcolor{Green}{+1.37}  & \textcolor{Green}{+4.22}  & \textcolor{Green}{+324.49} & \textcolor{Green}{+3.30}  & \textcolor{Green}{+2.98}  & \textcolor{Green}{+213.84} \\
    \midrule
    \rowcolor{lightgray!70} \multicolumn{16}{l}{\textit{\methodname}} \\
    \ding{55} & 91.99 & 4.64  & 85.66 & 43.65 & 16.14 & 242.66 & 34.38 & 16.55 & 250.33 & 44.65 & 8.05  & 166.54 & 53.67 & 11.35 & 186.30 \\
    \ding{51} T   & \textbf{94.09} & 6.01  & 120.16 & \textbf{52.29} & 21.44 & 319.15 & \textbf{39.48} & 23.41 & 349.12 & \textbf{48.99} & 11.89 & 272.24 & \textbf{58.71} & 15.69 & 265.17 \\
    \rowcolor{lightblue!50}$\Delta$ & \textcolor{Green}{+2.10}  & \textcolor{Green}{+1.36}  & \textcolor{Green}{+34.50} & \textcolor{Green}{+8.64}  & \textcolor{Green}{+5.30}  & \textcolor{Green}{+76.49} & \textcolor{Green}{+5.10}  & \textcolor{Green}{+6.87}  & \textcolor{Green}{+98.79} & \textcolor{Green}{+4.34}  & \textcolor{Green}{+3.84}  & \textcolor{Green}{+105.70} & \textcolor{Green}{+5.04}  & \textcolor{Green}{+4.34}  & \textcolor{Green}{+78.87} \\
    \cmidrule{2-16}\ding{51} T+E & 92.64 & 3.41  & \textbf{58.67} & 49.06 & \textbf{13.46} & 185.79 & 36.77 & 16.82 & \textbf{217.94} & 48.17 & 7.58  & \textbf{156.09} & 56.66 & 10.32 & \textbf{154.62} \\
    \rowcolor{lightblue!50} $\Delta$ & \textcolor{Green}{+0.65}  & \textcolor{Red}{-1.23} & \textcolor{Red}{-26.99} & \textcolor{Green}{+5.41}  & \textcolor{Red}{-2.69} & \textcolor{Red}{-56.87} & \textcolor{Green}{+2.39}  & \textcolor{Green}{+0.27}  & \textcolor{Red}{-32.39} & \textcolor{Green}{+3.52}  & \textcolor{Red}{-0.48} & \textcolor{Red}{-10.45} & \textcolor{Green}{+2.99}  & \textcolor{Red}{-1.03} & \textcolor{Red}{-31.68} \\
    \bottomrule
\end{tabular}%

  \label{tab:main_result}%
\end{table*}%

\section{Experiments}
\label{sec:experiments}
\subsection{Experimental Setup.}
\paragraph{Training.}
We conduct experiments on two base models: DeepSeek-R1-Distill-Qwen-1.5B~\citep{guo2025deepseekr1}, which is distilled from DeepSeek-R1, and Qwen3-4B-Base~\citep{yang2025qwen3}, a pretrained model without post-training. All experiments based on DeepSeek-R1-Distill-Qwen-1.5B were conducted on 8 Nvidia GPUs, while all experiments using Qwen3-4B-Base were carried out on 32 GPUs.

For cold start, we adopt OpenThoughts-114K~\citep{guha2025openthoughts} as the training corpus. To convert vanilla reasoning trajectories into the InftyThink-style paradigm, we use Qwen3-4B-Instruct-2507~\citep{yang2025qwen3} to generate intermediate summaries, with the hyperparameters set to $\eta=6\text{k}$ and $\gamma=1\text{k}$. Model training is implemented with ms-swift~\citep{zhao2025swift} and Megatron-Core~\citep{shoeybi2020megatronlm} as the backend. Detailed configurations for SFT are provided in the Appendix~\ref{appendix:experimental_details_training_sft}.

For RL, we train on the DeepScaleR-Preview~\citep{deepscaler2025} dataset with a global batch size of 128 for 1,000 steps (500 steps for Qwen3-4B-Base). We employ the verl~\citep{sheng2025hybridflow} framework with AgentLoop to enable asynchronous inference, using SGLang~\citep{zheng2024sglang} as the inference backend and FSDP~\citep{zhao2023pytorch} as the training backend. For the task reward, we adopt the verification scripts provided by PRIME-Math~\citep{cui2025process}. For RL training under the \methodname paradigm, we set the maximum number of rollout iterations $\varphi$ to 5. In Appendix~\ref{appendix:params_ablation}, we present an ablation study of key hyperparameters. Additional implementation details for RL are deferred to the Appendix~\ref{appendix:experimental_details_training_rl}. We also provide the detailed RL training dynamics in Appendix~\ref{appendix:training_dynamics}, and a stability analysis for both training and evaluation in Appendix~\ref{appendix:stability}.

\paragraph{Evaluation.}
We evaluate all models both before and after training on a comprehensive set of benchmarks, including in-distribution benchmarks MATH500~\citep{hendrycks2021measuring, lightman2023lets}, AIME24, and AIME25, as well as the out-of-distribution benchmark GPQA\_Diamond~\citep{rein2024gpqa}. 
All evaluations are conducted using SGLang for inference, with CompassVerifier-7B~\citep{liu2025compassverifier} serving as the evaluator. 
To mitigate evaluation variance, all reported metrics are averaged over 32 generations, with the sampling temperature set to 0.7 and \texttt{top\_p} set to 0.95. 
Detailed evaluation settings are provided in the Appendix~\ref{appendix:experimental_details_evaluation}. 

\paragraph{Extended Experiments and Analyses}
We extend experiments and analyses in the Appendix, covering:
\begin{itemize}
    \item In Appendix~\ref{appendix:evaluation_results}, we additionally report observations on a broader set of benchmarks (code reasoning and scientific reasoning), along with the model’s performance throughout the RL training process. 
    \item In Appendix~\ref{appendix:iter_perf}, we study how model performance evolves across reasoning iterations. 
    \item In Appendix~\ref{appendix:inference_latency}, we characterize the sample-level inference latency distribution.
    \item In Appendix~\ref{appendix:delethink_comparision}, we also provide a detailed comparison with Delethink~\citep{aghajohari2025markovian}.
\end{itemize}

\subsection{Main Results}
\paragraph{\methodname Amplifies RL Benefits.}
\methodname consistently magnifies the effectiveness of reinforcement learning compared to the Vanilla setting. Under task-only RL (\ding{51} T), \methodname achieves substantially larger accuracy gains across all benchmarks, with the average ACC improvement reaching +9.89, compared to +5.62 for Vanilla. This gap is particularly striking on harder benchmarks such as AIME24, where \methodname gains +21.46 points versus +12.08 for Vanilla, indicating that structured iterative summaries provide a more exploitable substrate for RL to improve correctness. These results suggest that RL does not merely encourage longer reasoning, but can more effectively optimize reasoning quality when intermediate summaries explicitly expose reusable high-level states.

\paragraph{\methodname Extends Reasoning Depth and Decreases Inference Latency.}
Beyond accuracy, \methodname fundamentally reshapes the trade-off between reasoning depth and inference cost. Even before RL, \methodname already reduces latency compared to Vanilla (e.g., average LAT 77.57 vs. 110.96), despite using slightly more tokens, indicating more efficient downstream reasoning enabled by summaries.
This efficiency gain stems from the bounded per-iteration context: instead of attending over an ever-growing sequence, each iteration operates within a fixed context window.
After task-only RL, \methodname allows the model to extend reasoning depth, reflected in increased TOK, while largely preserving latency on several benchmarks (e.g., near-zero LAT change on MATH500 and AIME24). This contrasts sharply with Vanilla, where deeper reasoning directly translates into severe latency inflation, showing that summarized iterative reasoning decouples reasoning depth from wall-clock inference time.

\paragraph{Efficiency Reward Enables a Better Trade-off.}
When efficiency reward is further introduced (\ding{51}~\textbf{T+E} ), \methodname achieves a significantly better effectiveness–efficiency balance.
Compared to the cold-start baseline, the T+E configuration improves average accuracy by +6.51 points while simultaneously reducing latency by 29.20 seconds (from 77.57s to 48.37s). 
Compared to task-only RL, it trades a modest accuracy decrease (53.96 → 50.58 average) for substantial efficiency gains (100.21s → 48.37s latency, 20.02K → 10.66K tokens). 
This demonstrates that efficiency-aware RL successfully guides the model to generate more compact summaries and terminate reasoning earlier without collapsing performance. Overall, these results confirm that combining \methodname with multi-objective RL enables controllable reasoning policies that are not only more accurate, but also substantially more efficient.

\section{Analyses}
\label{sec:analysis}
Through end-to-end optimization, \methodname enables reasoning models to acquire \emph{effective} and \emph{efficient} iterative reasoning capabilities. In this section, we analyze the impact of \methodname from two complementary perspectives: effectiveness (Section~\ref{sec:analysis_effetive}) and efficiency (Section~\ref{sec:analysis_efficient}).

\subsection{\methodname Enables More Effective Iterative Reasoning}
\label{sec:analysis_effetive}
Effective iterative reasoning is challenged by three key questions.
\emph{When to compress} determines the appropriate timing for abstraction, influencing the trade-off between reasoning depth and information loss.
\emph{How to compress} defines the mechanism by which essential reasoning states are distilled and propagated to subsequent iterations.
\emph{How to continue} specifies how the model conditions future reasoning steps on the compressed representations to ensure consistent and progressive inference.
In the following, we analyze the effects of \methodname from each of these three perspectives.

\subsubsection{Learning When to Compress}
To analyze the practical effect of \methodname on learning \emph{when to compress}, we design an ablation study. 
Specifically, for models following the InftyThink reasoning paradigm with $\eta = 6\mathrm{k}$, we introduce two alternative reasoning interruption strategies.
The first is \textbf{Fixed}, where the model is forcibly interrupted after generating a fixed number of tokens and then required to produce a summary; in our experiments, this threshold is set to $5\mathrm{k}$ tokens.
The second is \textbf{Random}, where the model is interrupted after generating a random number of reasoning tokens before summarization, with the token budget sampled as \texttt{random.randint(3000, 6000)}.
We compare these two strategies against the adaptive interruption mechanism employed by \methodname .
The benchmark performance of all variants is reported in Table~\ref{tab:ablation_when}.

\begin{table}[htbp]
  \centering
  \small
  \caption{Comparison of benchmark performance (\%) across different summary timing strategies.}
    \begin{tabular}{llll}
    \toprule
    \textbf{Strategy} & \multicolumn{1}{l}{\textbf{AIME24}} & \multicolumn{1}{l}{\textbf{AIME25}} & \multicolumn{1}{l}{\textbf{AMC23}} \\
    \midrule
    \rowcolor{lightgray!70} \multicolumn{4}{l}{\textit{w/o RL}} \\
    InftyThink & 29.48 & 27.92 & 71.64 \\
    Random & 
        28.54~\textcolor{Red}{\scriptsize{-0.94}} & 
        26.25~\textcolor{Red}{\scriptsize{-1.67}} & 
        72.58~\textcolor{Green}{\scriptsize{+0.94}} \\
    Fixed & 
        28.44~\textcolor{Red}{\scriptsize{-1.04}} & 
        26.04~\textcolor{Red}{\scriptsize{-1.88}} & 
        72.03~\textcolor{Green}{\scriptsize{+0.39}} \\
    \midrule
    \rowcolor{lightgray!70} \multicolumn{4}{l}{\textit{w RL}} \\
    InftyThink+ &  50.94 & 35.83 & 85.86 \\
    Random & 
        47.92 \textcolor{Red}{\scriptsize{-3.02}} & 
        33.83 \textcolor{Red}{\scriptsize{-2.00}} & 
        84.16 \textcolor{Red}{\scriptsize{-1.70}} \\
    Fixed & 
        48.44 \textcolor{Red}{\scriptsize{-2.50}} & 
        33.00 \textcolor{Red}{\scriptsize{-2.83}} & 
        84.53 \textcolor{Red}{\scriptsize{-1.33}} \\
    \bottomrule
    \end{tabular}%
  \label{tab:ablation_when}%
\end{table}%

\paragraph{Adaptive timing is consistently superior.}
In both w/o RL and w RL settings, InftyThink's adaptive timing outperforms Random and Fixed strategies. Without RL, non-adaptive timing causes clear drops on AIME24 (-0.94 to -1.04) and AIME25 (-1.67 to -1.88), with only marginal changes on AMC23 (+0.39 to +0.94), showing that static or random timing cannot reliably match the reasoning progress.

\paragraph{RL strengthens timing selection.}
With RL, overall accuracy increases, but the penalty for incorrect timing becomes larger. Under \methodname, Random and Fixed timing lead to larger degradations on AIME24 (-2.50 to -3.02), AIME25 (-2.00 to -2.83), and consistent drops on AMC23 (-1.33 to -1.70). This indicates that \textit{RL helps the model learn a more precise policy for when to summarize}, making adaptive timing increasingly critical.

\subsubsection{Learning How to Compress}
\textit{How to compress} is crucial because it determines whether the summary can faithfully preserve the key intermediate conclusions and constraints needed for subsequent reasoning. To analyze the quality of the summaries generated by \methodname models, we design a controlled replacement experiment. Specifically, during inference, we replace the summaries autonomously produced by the model with summaries generated by an external LLM Qwen3-4B-Instruct-2507, following the same procedure used in cold-start data construction in Appendix~\ref{appendix:cold_start_paradigm_transformation}. We then evaluate the resulting performance changes on downstream benchmarks, with the results reported in Table~\ref{tab:summary_quality}.

\begin{table}[htbp]
  \centering
  \small
  \caption{Comparison of benchmark performance (\%) with different summarizers.}
    \begin{tabular}{llll}
    \toprule
    \textbf{Summarizer} & \multicolumn{1}{l}{\textbf{AIME24}} & \multicolumn{1}{l}{\textbf{AIME25}} & \multicolumn{1}{l}{\textbf{AMC23}} \\
    \midrule
    \rowcolor{lightgray!70} \multicolumn{4}{l}{\textit{w/o RL}} \\
    Internal & 29.48 & 27.92 & 71.64 \\
    External & 
        32.40 ~\textcolor{Green}{\scriptsize{+2.92}} & 
        28.75 ~\textcolor{Green}{\scriptsize{+0.83}} & 
        73.75 ~\textcolor{Green}{\scriptsize{+2.11}} \\
    \midrule
    \rowcolor{lightgray!70} \multicolumn{4}{l}{\textit{w RL}} \\
    Internal & 50.94 & 35.83 & 85.86 \\
    External & 
        48.42 ~\textcolor{Red}{\scriptsize{-2.52}} & 
        33.63 ~\textcolor{Red}{\scriptsize{-2.20}} & 
        84.62 ~\textcolor{Red}{\scriptsize{-1.24}} \\
    \bottomrule
    \end{tabular}%
  \label{tab:summary_quality}%
\end{table}%

Under the SFT-only setting (w/o RL), replacing the internally generated summaries with external summaries leads to consistent performance gains across all benchmarks: accuracy on AIME24 increases from 29.48\% to 32.40\%.
These improvements suggest that SFT primarily teaches the model to adhere to the InftyThink procedural format, rather than instilling the ability to produce accurate and informative summaries.
In contrast, under RL-trained settings (w/ RL), substituting internal summaries with external ones consistently degrades performance, with AIME24 dropping from 50.94\% to 48.42\%.
This reversal suggests that RL enables the model to learn summary generation as an end-to-end policy component that is tightly coupled with downstream reasoning, leading to more effective summaries and, consequently, improved overall performance.

\subsubsection{Learning How to Continue}
\textit{How to continue} determines whether the model can coherently leverage the compressed summary to resume reasoning without semantic drift or logical gaps, directly affecting response correctness. To verify that \methodname enables models with better continuation reasoning, we extract the summary from each reasoning iteration produced by an \methodname model and feed it into a vanilla-paradigm reasoning model, DeepSeek-R1-Distill-Qwen-1.5B, which is then tasked with continuing the reasoning process. We show this ablation result in Figure~\ref{fig:ablation_continue}, the four blue bars (from left to right) correspond to continuation reasoning conditioned on the summaries from the 1st, 2nd, 3rd, and 4th iterations, respectively. The darker segments indicate the proportion of instances that InftyThink has already correctly solved, while the lighter segments represent additional correct cases obtained by vanilla continuation reasoning.

\begin{figure}[h]
    \centering
    \includegraphics[width=1\linewidth]{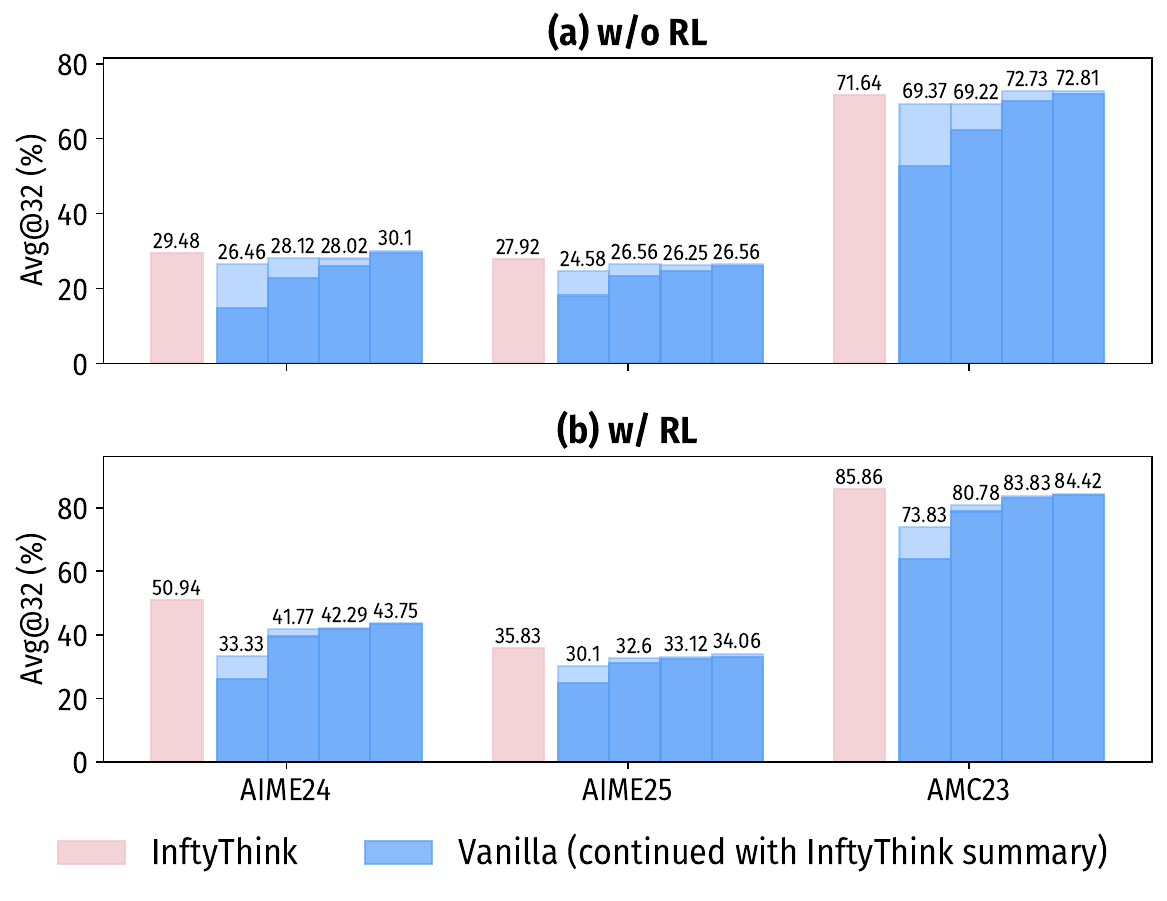}
    \caption{Performance of vanilla reasoning when using InftyThink summaries as context. The four blue bars (from left to right) correspond to continuation reasoning conditioned on the summaries from the 1st, 2nd, 3rd, and 4th iterations, respectively. The darker segments indicate the proportion of instances that InftyThink has already correctly solved, while the lighter segments represent additional correct cases obtained by vanilla continuation reasoning.}
    \label{fig:ablation_continue}
\end{figure}

First, as shown in Figure~\ref{fig:ablation_continue}(b), even when conditioned on \methodname summaries, vanilla continuation suffers from noticeable performance degradation, indicating that \methodname models are better at leveraging summaries to resume reasoning. Second, the additional gains achieved by vanilla continuation diminish as summaries are taken from later iterations; in Figure~\ref{fig:ablation_continue}(b), performance gain nearly saturates after the 2nd iteration, suggesting that continuation from late-stage summaries is intrinsically more challenging. \methodname consistently translates summaries from later iterations into monotonic performance improvements, underscoring that \emph{how to continue}, the strategy for resuming reasoning from compressed context, must be learned end-to-end for effective iterative reasoning.

\subsection{InftyThink+ Enables More Efficient Iterative Reasoning}
\label{sec:analysis_efficient}
As shown in Table~\ref{tab:main_result}, \methodname substantially reduces inference latency, achieving an average reduction of 30\%--40\%. Moreover, the introduction of an efficiency reward further amplifies this effect, leading to a latency reduction of 60\%--70\%. 
These gains stem from the $O(n \cdot \ell^2)$ complexity of iterative reasoning versus $O(L^2)$ for vanilla, as analyzed in Appendix~\ref{appendix:complexity}.
The efficiency gains brought by \methodname are not limited to test-time inference; they also manifest during training. Specifically, RL under the \methodname paradigm enables faster rollouts and more efficient model updates. We present a comparison of RL training time in Figure~\ref{fig:step_time}, which clearly illustrates this advantage.

\begin{figure}[h]
    \centering
    \includegraphics[width=1\linewidth]{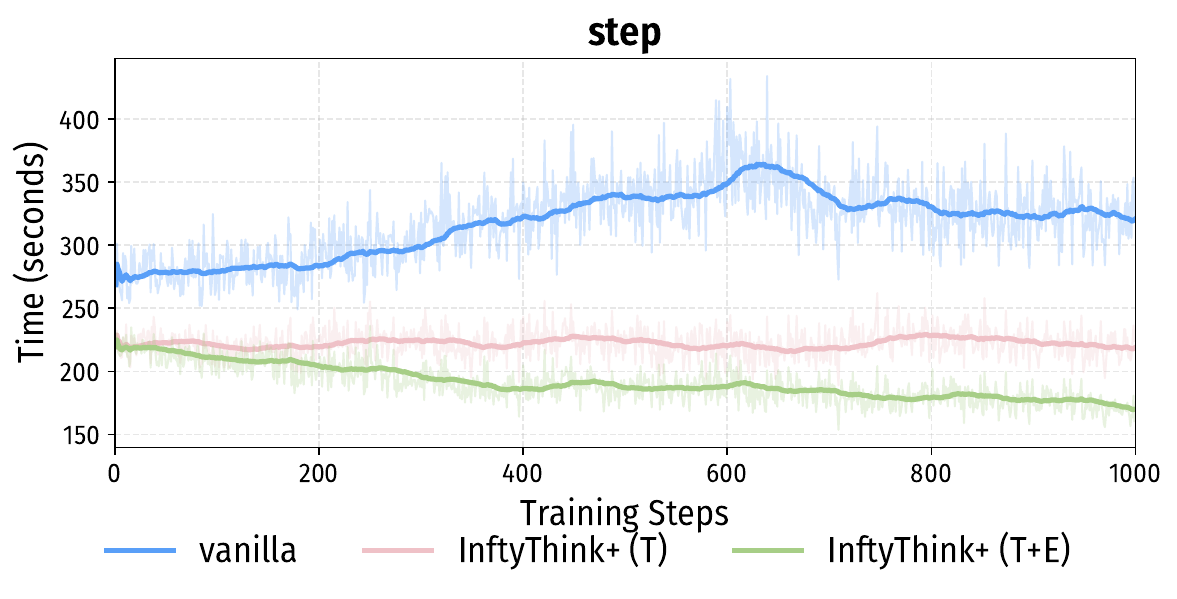}
    \caption{Per-step training time over the course of RL training.}
    \label{fig:step_time}
\end{figure}

Due to the efficient reasoning property of InftyThink, RL training under \methodname is substantially faster than vanilla long-context RL. Specifically, vanilla long-context RL incurs an average cost of 300 seconds per step, whereas \methodname RL reduces this to 225 seconds per step, yielding an approximately 25\% speedup. 
Moreover, introducing an efficiency reward further improves training efficiency, with the per-step time gradually decreasing over the course of training to an average of 175 seconds, corresponding to an approximately 40\% speedup. 
In the current landscape where RL has become the dominant training paradigm for reasoning models, \methodname provides a more efficient training framework, enabling researchers to train on more data and perform more extensive optimization under the same computational budget. Further analysis is provided in Appendix~\ref{appendix:training_dynamics_timing}.

\section{Conclusion}
We propose \methodname , an end-to-end RL framework that optimizes iterative reasoning at the trajectory level. By separating format learning from strategy optimization, \methodname enables models to learn when to compress, how to compress, and how to continue effectively. Experiments show consistent accuracy gains over SFT-based iterative reasoning and standard long-context RL, while significantly reducing inference latency. These improvements arise from learned adaptive behaviors rather than heuristics, demonstrating the importance of trajectory-level optimization. We further discuss the limitations of \methodname and its future directions in Appendix~\ref{appendix:discussion}.

\section*{Acknowledgements}
This work was supported by National Key Research and Development Project (No. 2024YFB3312900), National Natural Science Foundation of China (No. 62436007), National Natural Science Foundation of China (No. 62506332) and Ant Group Research Intern Program.

\section*{Impact Statement}

This paper presents InftyThink+, a framework for improving the effectiveness and efficiency of iterative reasoning in large language models. The primary societal benefits include reduced computational costs and lower energy consumption for reasoning tasks.

We acknowledge several considerations regarding broader impact. First, improved reasoning capabilities could be misused to generate more convincing misinformation or to solve problems that enable harmful applications. However, these risks are not unique to our method and apply broadly to advances in language model reasoning. Second, while InftyThink+ reduces per-query computational costs, improved capabilities may increase overall usage, partially offsetting environmental benefits. Third, our evaluation focuses on mathematical and scientific reasoning benchmarks; the generalization of our findings to other domains requires further investigation.

We believe the benefits of more efficient and capable reasoning systems outweigh these risks, particularly as the research community develops better safeguards and alignment techniques. We will release our code and models to facilitate reproducibility and encourage further research on safe and efficient reasoning systems.

\bibliography{main,custom}
\bibliographystyle{icml2026}

\newpage
\appendix
\onecolumn

\section{General Discussions}
\label{appendix:discussion}

In this section, we provide a deeper discussion of InftyThink+. 
Specifically, we first examine the philosophy underlying the proposed method, highlighting its conceptual connections to human reasoning and learning processes (Appendix~\ref{appendix:discussion_philosophy}). 
We then analyze the limitations of InftyThink+, discussing the scenarios in which the method may be less effective as well as its inherent constraints (Appendix~\ref{appendix:discussion_limitations}). 
Finally, we outline several promising future directions, including potential applications of InftyThink+ to broader reasoning tasks and possible extensions to further improve its effectiveness and efficiency (Appendix~\ref{appendix:discussion_future_directions}).

\subsection{Philosophy Behind InftyThink+}
\label{appendix:discussion_philosophy}
\paragraph{Alignment with Human Reasoning.}
A core motivation behind InftyThink+ is its strong alignment with how humans perform complex reasoning. Human problem solving rarely unfolds as a single, uninterrupted chain of thought; instead, it naturally alternates between extended reasoning, abstraction, and reflection. At critical moments, humans pause to summarize intermediate conclusions, discard redundant details, and retain only the most salient constraints before continuing. InftyThink+ mirrors this process by explicitly structuring reasoning into iterative phases of generation, compression, and continuation. By allowing the model to decide \emph{when} to compress, \emph{how} to summarize, and \emph{how} to continue reasoning from compressed context, the method encourages a form of abstraction-aware reasoning that more closely resembles human cognitive strategies. This perspective suggests that improved reasoning performance does not solely arise from longer CoT, but from learning to strategically manage and transform intermediate representations during the reasoning process.

\paragraph{Reinforcement Learning and Human Learning.}
From a broader cognitive perspective, the role of RL in InftyThink+ closely parallels how humans acquire complex problem-solving skills. Humans do not learn by imitating a fixed, canonical reasoning format; instead, we learn through iterative trial and error, gradually internalizing more effective thinking strategies under outcome-driven feedback. In contrast, SFT primarily encourages models to replicate surface-level output patterns or reasoning formats, which is often insufficient for shaping deep, strategic behaviors. By optimizing interruption timing, summary generation and continuation strategies in an end-to-end RL framework, InftyThink+ enables the model to autonomously learn when to abstract and when to expand reasoning, guided jointly by task and efficiency rewards. This process closely resembles the development of human metacognitive abilities, specifically, knowing when to pause and consolidate intermediate conclusions versus when to continue deeper exploration, providing an intuitive explanation for why RL yields systematic improvements beyond pure SFT within the InftyThink+ paradigm.

\subsection{Limitations}
\label{appendix:discussion_limitations}
\paragraph{Task-structure assumptions.}
InftyThink+ implicitly assumes that the reasoning process can be decomposed into relatively independent stages, and that the essential information of each stage can be abstracted into a summary serving as an effective intermediate state for subsequent reasoning. While this assumption is well aligned with tasks such as mathematical reasoning and multi-constraint planning, it does not universally hold. For tasks with highly entangled reasoning processes, unclear stage boundaries, or strong reliance on continuous semantic flow, the paradigm of segmented compression and continuation may yield limited benefits.

\paragraph{Limitations of natural language summaries.}
In the current framework, summaries are represented as unstructured natural language tokens. Although this representation offers high expressive flexibility, it lacks explicit mechanisms to control information organization and constraint strength. As a result, the importance, logical status, and relative priority of information are encoded implicitly in text, requiring the model to reinterpret and rebalance these factors during continuation. Such high-capacity but weakly constrained intermediate representations limit fine-grained control over compression granularity and information fidelity.

\paragraph{Dependence on cold-start training.}
The InftyThink+ training pipeline relies on a cold-start stage to shift the model into the InftyThink reasoning paradigm. This stage primarily provides structural scaffolding, such as iteration boundaries, summary actions, and continuation formats, rather than directly optimizing reasoning strategies. However, this reliance implies that the framework depends on task-specific cold-start data design, which introduces additional engineering complexity when adapting the method to new domains or task distributions.

\subsection{Future Directions}
\label{appendix:discussion_future_directions}
\paragraph{Long-Horizon Agentic Reasoning.} 
A promising direction is to extend InftyThink+ to more long-horizon agentic tasks, where reasoning unfolds over substantially longer time scales and interaction loops. Many emerging agentic settings, such as deep research, autonomous debugging, or multi-step decision-making, require models to repeatedly invoke tools, retrieve external information, and incorporate intermediate results, leading to extremely long and evolving contexts~\citep{hu2026memory, xu2025amem, yu2025memagent}. In such scenarios, effective iterative compression and continuation are not merely efficiency optimizations but fundamental enablers for sustained reasoning. InftyThink+ provides a natural foundation for these tasks by explicitly structuring reasoning into multiple compressed iterations, allowing agents to maintain coherence and scalability over prolonged trajectories.

\paragraph{Fine-grained Summary Representations.} 
Another important direction is to explore more fine-grained and expressive summary modeling mechanisms. Beyond textual summaries, future work may investigate latent representations, such as latent tokens, learned memory slots, or hybrid symbolic–continuous summaries, that can capture abstract constraints, intermediate conclusions, or reusable reasoning states more compactly and faithfully. Such representations could be jointly optimized with downstream continuation policies, further strengthening the coupling between how to compress and how to continue. We believe that advancing summary representations will be crucial for pushing iterative reasoning systems toward greater abstraction, robustness, and long-horizon generalization.

\section{Context Hit Analysis}
\label{appendix:context_hit}
To analyze the practical context-window requirements of reasoning models during inference, we evaluate \textit{DeepSeek-R1-Distill-Qwen-1.5B} under different \texttt{max\_new\_tokens} settings (8k, 16k, 32k, 48k, and 64k) on multiple benchmarks (MATH500, AIME24, AIME25 and AMC23). We report both the \emph{completion rate} and the \emph{accuracy}. The completion rate is defined as the fraction of instances for which the model successfully generates an \texttt{eos} token and terminates reasoning within the given token budget.

\begin{figure}[h]
    \centering
    \includegraphics[width=1\linewidth]{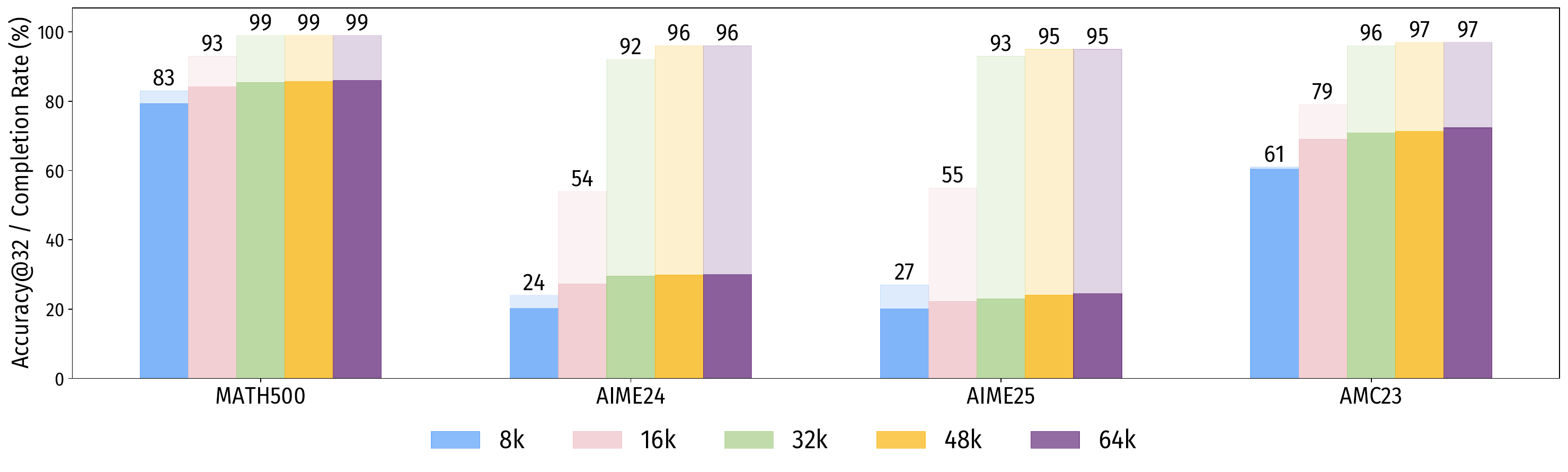}
    \caption{Completion rate and accuracy (\%) of vanilla long-context reasoning under different \texttt{max\_new\_tokens} settings on benchmarks. Dark bars indicate accuracy, while light bars represent the completion rate.}
    \label{fig:context_window_analysis}
\end{figure}

From Figure~\ref{fig:context_window_analysis}, we observe that even when the maximum generation length is extended to 32k--64k tokens, the model still fails to complete a subset of highly challenging tasks, such as AIME24 and AIME25. Moreover, a noteworthy phenomenon emerges: under the 48k and 64k settings, the completion rate remains nearly unchanged. This suggests that as the available context length increases, reasoning models begin to suffer from the \emph{lost-in-the-middle} effect, where the model is unable to effectively advance the reasoning process and instead engages in repetitive or unproductive deliberation.

In addition, we emphasize that increasing the generation length leads to a significant degradation in inference efficiency, as reflected by a substantial decrease in tokens generated per second. Taken together, these findings motivate the design of InftyThink+: enabling extended reasoning depth while preserving high inference efficiency.

\clearpage

\section{Detailed Introduction of InftyThink Paradigm}
\label{appendix:inftythink_introduction}
InftyThink is an iterative reasoning paradigm that enhances a model’s reasoning depth by decomposing a single long chain-of-thought (CoT) into multiple shorter reasoning segments, while simultaneously reducing computational and memory overhead during inference. To more clearly elucidate the underlying mechanism of InftyThink, this section provides a detailed introduction. Specifically, we describe the model’s inputs and outputs at each reasoning iteration and highlight the key differences between the InftyThink paradigm (described in Appendix~\ref{appendix:inftythink_introduction_inftythink}) and the conventional vanilla reasoning paradigm (described in Appendix~\ref{appendix:inftythink_introduction_vanilla}).

\subsection{Vanilla Paradigm}
\label{appendix:inftythink_introduction_vanilla}
Contemporary reasoning models, exemplified by DeepSeek-R1~\cite{guo2025deepseekr1} and related models, predominantly adopt a single-round, long-form generation paradigm to solve complex reasoning tasks. Under this paradigm, the model produces an output consisting of two main components: (i) an explicit thinking phase that records the intermediate reasoning trajectory, and (ii) a final conclusion phase that summarizes and presents the solution in a structured form. This conventional reasoning process can be formalized as:
\begin{equation}
\small
\nonumber
\texttt{<|user|>}q\texttt{<|assistant|>}
\underline{
\textcolor{NavyBlue}{\texttt{<think>}r\texttt{</think>}}
\textcolor{Green}{c}
}
\end{equation}
where \texttt{<|user|>} and \texttt{<|assistant|>} denote special tokens defined by the chat template to delineate dialogue roles, $q$ represents the user query, and the tokens \texttt{<think>} and \texttt{</think>} explicitly enclose the model’s internal reasoning process $r$. The final conclusion $c$ distills the reasoning into a concise and coherent response. The underlined segment corresponds to the model’s generated output, while all preceding tokens constitute the prompt input.

Despite its effectiveness across a wide range of reasoning tasks, this paradigm exhibits a fundamental limitation: as task difficulty increases, the length of the reasoning trace $r$ grows substantially. This not only risks exceeding the model’s context window but also incurs prohibitive computational and memory costs due to the quadratic complexity of self-attention with respect to sequence length. To overcome these limitations, InftyThink reformulates monolithic long-chain reasoning into an iterative reasoning process, interleaving generation with intermediate summarization to enable scalable and efficient deep reasoning.

\subsection{InftyThink Paradigm}
\label{appendix:inftythink_introduction_inftythink}
In the InftyThink paradigm, the reasoning process is decomposed into a sequence of interconnected reasoning segments. Each segment operates under a bounded token budget to ensure computational efficiency, while a summary-based mechanism preserves the global coherence of the reasoning trajectory across iterations.

\textbf{The first reasoning iteration (\(i = 1\))} is formalized as:
\begin{equation}
\small
\nonumber
\texttt{<|user|>}q\texttt{<|assistant|>}
\underline{
\textcolor{NavyBlue}{\texttt{<think>}{r}_1\texttt{</think>}}
\textcolor{CarnationPink}{\texttt{<summary>}s_1\texttt{</summary>}}
}
\end{equation}
where \(r_1\) denotes the initial reasoning segment with a constrained length, and \(s_1\) is a compact summary distilled from \(r_1\). Encapsulated by the special tokens \texttt{<summary>} and \texttt{</summary>}, this summary serves as a compressed representation of the current reasoning state, retaining essential information while discarding redundant or low-utility details.

\textbf{For subsequent iterations (\(i > 1\))}, the model conditions its reasoning on the summary generated in the previous iteration:
\begin{equation}
\small
\nonumber
\texttt{<|user|>}q\texttt{<|assistant|>}
\textcolor{CarnationPink}{\texttt{<history>}s_{i-1}\texttt{</history>}} \
\underline{
\textcolor{NavyBlue}{\texttt{<think>}{r}_i\texttt{</think>}}
\textcolor{CarnationPink}{\texttt{<summary>}s_i\texttt{</summary>}}
}
\end{equation}
where the \texttt{<history>} and \texttt{</history>} tokens delimit the previous summary \(s_{i-1}\), which provides critical contextual information for generating the current reasoning segment \(r\_i\). This iterative process enables the model to progressively extend its reasoning while maintaining a bounded per-iteration token length, with global information propagated through the summary channel.

\textbf{In the final iteration (\(i = n\))}, the model produces a conclusion instead of generating another summary:
\begin{equation}
\small
\nonumber
\texttt{<|user|>}q\texttt{<|assistant|>}
\textcolor{CarnationPink}{\texttt{<history>}s_{n-1}\texttt{</history>}}
\underline{
\textcolor{NavyBlue}{\texttt{<think>}{r}_n\texttt{</think>}}
\textcolor{Green}{c}
}
\end{equation}
Here, \textcolor{NavyBlue}{blue} indicates reasoning segments, \textcolor{CarnationPink}{pink} denotes intermediate summaries, and \textcolor{Green}{green} represents the final conclusion. This formulation naturally accommodates edge cases: for problems that can be solved within a single reasoning step, the model omits summary generation and reduces to the standard vanilla reasoning paradigm.

During inference, the model repeatedly generates reasoning segments and their corresponding summaries, using each summary as the contextual input for the next iteration. The process terminates when the model outputs a \textcolor{Green}{conclusion} rather than a \textcolor{CarnationPink}{summary}, indicating that the reasoning task has been completed. To prevent unbounded iteration, we introduce a hyperparameter \(\varphi\) that specifies the maximum number of allowed reasoning iterations; the process is forcibly terminated once this limit is reached.

\section{Reasoning Efficiency Analysis of InftyThink Paradigm}
\label{appendix:inftythink_efficiency}
The motivation behind InftyThink arises from the fact that modern reasoning models often generate extremely long chains-of-thoughts, typically exceeding 10K tokens or more. However, current decoder-based LLMs rely on self-attention~\citep{vaswani2017attention}, whose computational and memory complexity grows quadratically ($O(n^2)$) with the sequence length. As a result, generating each additional token during late-stage reasoning incurs a rapidly increasing computational cost.

To mitigate this $O(n^2)$ complexity, InftyThink decomposes a long reasoning chain into multiple inference rounds, connected via summaries. This design reduces the computational burden during inference. The relationship can be expressed as:
\begin{equation}
    \begin{split}
        R &= R_1 + R_2 + \ldots + R_n; \\
        R^2 &\ge R_1^2 + R_2^2 + \ldots + R_n^2.
    \end{split}
\end{equation}

The core mechanism of InftyThink is an iterative reasoning process in which the model alternates between generating a partial reasoning segment, compressing its current reasoning state into a concise summary, and leveraging this summary to guide subsequent iterations. As illustrated in Figure~\ref{fig:computation_cost}, conventional reasoning paradigms (left, blue) inevitably terminate once the accumulated context reaches the model’s maximum length, often before the reasoning process is complete. In contrast, InftyThink (right, pink) introduces periodic summarization that induces a characteristic sawtooth pattern in context usage, effectively bounding the memory footprint while allowing the reasoning process to continue indefinitely.

\begin{figure}[h]
    \centering
    \includegraphics[width=0.8\linewidth]{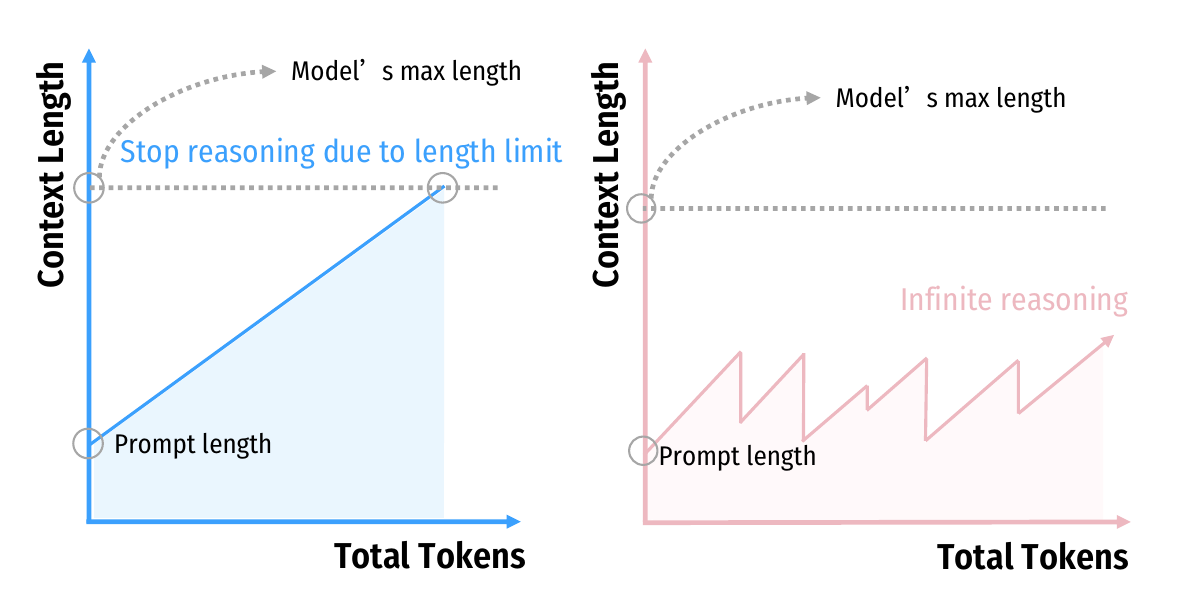}
    \caption{Computational complexity comparison between vanilla long-context reasoning (blue, left) and InftyThink (pink, right). The sawtooth pattern of InftyThink demonstrates how periodic summarization creates a bounded memory footprint, substantially reducing computational costs (smaller area under curve) while enabling deeper reasoning. We adopt the figure design style from~\citet{yan2025inftythink}.}
    \label{fig:computation_cost}
\end{figure}

This design substantially reduces computational overhead, as reflected by the smaller area under the curve, and fundamentally removes the hard ceiling on reasoning depth imposed by fixed context-length constraints. Beyond efficiency gains, InftyThink offers a critical conceptual advantage: it enables reasoning of arbitrary depth without requiring any architectural modifications to the underlying model. By continuously summarizing and reusing intermediate reasoning in compact, structured segments, the model can systematically explore complex problem spaces that would otherwise exceed its context capacity. InftyThink converts a single long generation into multiple short generations, greatly reducing the computational overhead induced by the decoder’s $O(n^2)$ complexity. Consequently, the model maintains lower latency even when generating more total tokens. (See Figure ~\ref{fig:computation_cost}, the area under the curve.)

\section{Full Recipe of Cold-start Stage}
\label{appendix:cold_start}
In this paper, we introduce a critical \emph{cold-start} stage in InftyThink+ (Section~\ref{sec:cold_start}), whose goal is to effectively migrate the model’s reasoning behavior to the InftyThink reasoning paradigm. To achieve this paradigm shift, we first convert supervised fine-tuning (SFT) data originally constructed under the vanilla reasoning paradigm into the InftyThink-style format (described in Appendix~\ref{appendix:cold_start_paradigm_transformation}). We then perform SFT on the transformed data, enabling the model to acquire and internalize InftyThink-style reasoning behaviors (described in Appendix~\ref{appendix:cold_start_sft}).

\subsection{Paradigm Transformation}
\label{appendix:cold_start_paradigm_transformation}

In this paper, we follow the approach of ~\citet{yan2025inftythink} and decompose the transformation of vanilla data into the InftyThink-style reasoning paradigm into three stages. 
First, we perform reasoning partition, where a long chain-of-thought (CoT) is segmented into multiple shorter reasoning chains according to a set of predefined rules. 
Second, we generate summaries by leveraging a general-purpose LLM to summarize the key reasoning steps. 
Third, we reconstruct the training data by integrating the generated summaries with the partitioned reasoning segments, thereby forming a new collection of InftyThink-style training samples. 
The overall workflow is illustrated in Figure~2. 
In the following, we describe the detailed methodology of each stage in turn.

\begin{figure}[h]
    \centering
    \includegraphics[width=1\linewidth]{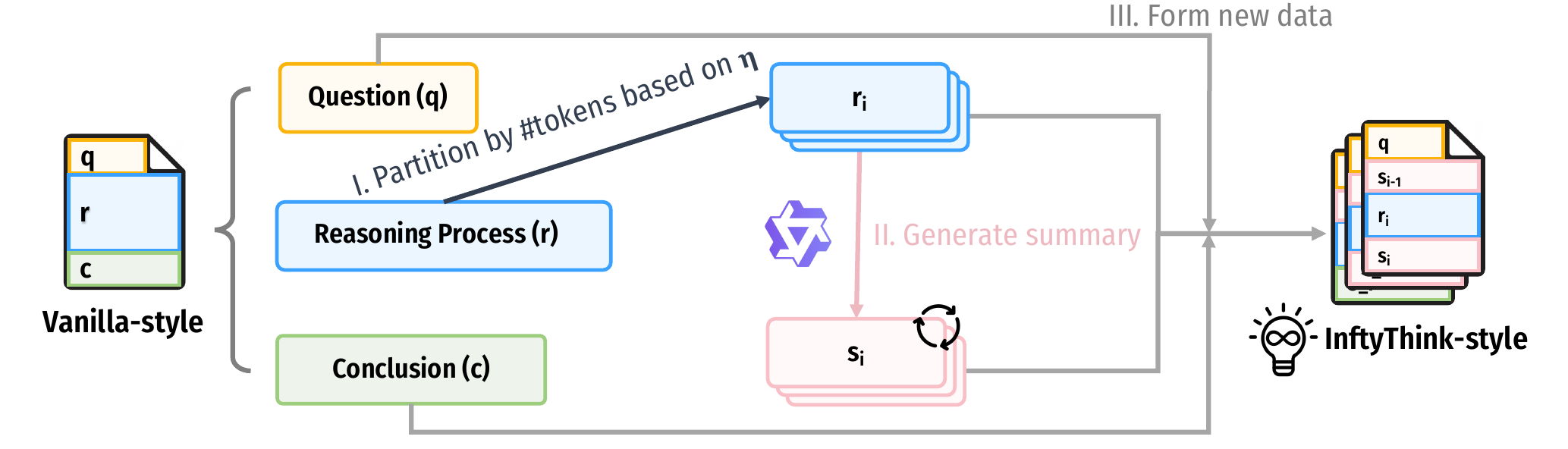}
    \caption{Systematic pipeline for reconstructing vanilla-style long-context reasoning data into the InftyThink-style format. \textbf{I.} Original reasoning processes are partitioned into optimally sized fragments based on parameter ($\eta$), preserving semantic coherence. \textbf{II.} Qwen3-4B-Instruct-2507 generates concise yet comprehensive summaries for each reasoning fragment. \textbf{III.} The original fragments and their generated summaries are systematically recombined to create InftyThink-style training instances that teach the model to reason iteratively. We adopt the figure design style from~\citet{yan2025inftythink}.}
    \label{fig:data-construction}
\end{figure}

\paragraph{Step I: Reasoning Process Partition}
For each data instance, we partition the original reasoning process ($r$) into a sequence of shorter segments, guided by a hyperparameter $\eta$ that specifies the maximum token length allowed per segment.
Instead of performing naive or arbitrary truncation, we adopt a semantically-aware segmentation strategy. Specifically, we first decompose the reasoning process into fine-grained semantic units by detecting natural boundaries such as sentence or paragraph breaks. These semantic units are then tokenized (in this paper, we used DeepSeek-R1-Distill-Qwen-1.5B's tokenizer) and incrementally merged into contiguous segments, prioritizing semantic coherence while ensuring that the token length of each segment does not exceed the threshold $\eta$.
As a result, the original reasoning process is transformed into an ordered sequence of reasoning segments $\{r_1, r_2, \ldots, r_n\}$, which can be formally expressed as:
\begin{equation}
\text{Partition}(r, \eta) \rightarrow \{r_1, r_2, \ldots, r_n\}.
\end{equation}
We implement the reasoning process partition as follows.
First, we extract the complete reasoning content using the regular expression
\texttt{\textasciicircum<think>\textbackslash n(.+)\textbackslash n</think>(.+)\$}.
Samples that cannot be matched by this pattern are discarded, as they do not conform to the standard format.
Next, we segment the extracted reasoning content using the widely adopted delimiter \texttt{\textbackslash n\textbackslash n} in the CoT outputs of DeepSeek-R1, which preserves semantic completeness at the paragraph level.
Each resulting segment is then tokenized using the tokenizer of DeepSeek-R1-distill-Qwen-1.5B, and its token length is recorded.
We subsequently apply a greedy aggregation strategy: segments are concatenated in order as long as the total length does not exceed the predefined hyperparameter~$\eta$.
Any aggregated segment whose length exceeds~$\eta$ is filtered out.
Through this procedure, we obtain a set of partitioned reasoning processes with bounded length.
Empirically, we observe that all filtering steps together remove fewer than 1‰ of the original samples.

\paragraph{Step II. Summary Generation} 
For each reasoning segment, we construct a concise summary that distills its key insights and reflects the incremental progress toward the final solution. We adopt a high-capacity foundation model \(M\) for summary generation, specifically, Qwen3-4B-Instruct-2507~\citep{yang2025qwen3}. 
Following \citet{yan2025inftythink}, it has been shown that the choice of the model used for summary generation has a negligible impact on the overall performance of InftyThink. Therefore, to enable fast yet accurate summary generation, we employ a relatively small but high-capacity LLM to produce the summaries.
All summaries are generated using carefully designed prompts.

Formally, \textbf{the summary at iteration \(1\)} is defined as:
\begin{equation}
S_1 = \text{summarize}(M, r_1) ,
\end{equation}
with the generation prompt as following.
\begin{AIbox}{Summary Generation Prompt for Iteration \#1 (PROMPT\_1)}
Please summarize the reasoning and conclusions you reached in your previous truncated response. Here are the specific requirements:\newline
\newline
1. You need to summarize the key steps and corresponding important conclusions you took in all the reasoning processes in chronological order;\newline
2. You need to summarize the steps and conclusions that helped to ultimately solve the problem;\newline
3. You do not need to provide the final answer or any additional notes;\newline
4. Please summarize as concisely as possible, but do not omit any important steps or conclusions;\newline
5. Please note that your reasoning may not be complete;\newline
6. Please do not provide any reasoning or conclusions that were not presented.\newline
7. Please use '*' to list all summaries.
\end{AIbox}
And \textbf{the summary at iteration \(i (1<i<n)\)} is defined as:
\begin{equation}
S_i = \text{summarize}(M, r_i, s_{i-1}) ,
\end{equation}
with the generation prompt as following.
\begin{AIbox}{Summary Generation Prompt for Iteration \#i (PROMPT\_2)}
Please update your reasoning history based on the reasoning and conclusions reached in the previous truncated response. The specific requirements are as follows:\newline
\newline
1. You need to summarize the key steps and corresponding important conclusions you took in all reasoning processes (including your entire reasoning history) in chronological order;\newline
2. You need to summarize the steps and conclusions that helped to ultimately solve the problem;\newline
3. You do not need to provide the final answer or any additional notes;\newline
4. Please summarize as concisely as possible, but do not omit any important steps or conclusions;\newline
5. Please note that your reasoning may not be complete;\newline
6. Please do not provide any reasoning or conclusions that were not presented.\newline
7. Please use '*' to list all summaries.
\end{AIbox}
For intermediate reasoning segments ($1 < i < n$), our approach introduces a subtle but important deviation from the original formulation in \citet{yan2025inftythink}. Specifically, \citet{yan2025inftythink} generates the summary $s_i$ using the entire set of reasoning segments $\{r_1, \ldots, r_i\}$, thereby producing a \emph{global} summary at each iteration. We argue that this design may lead to a potential misalignment with the model’s actual inference-time behavior. In practice, when generating $s_i$ during inference, the model does not have access to the full preceding reasoning trajectory $\{r_1, \ldots, r_{i-1}\}$; instead, it only observes the current reasoning segment $r_i$ together with the previous summary $s_{i-1}$. Since $s_{i-1}$ is a compressed representation, it may omit information that would otherwise be necessary to faithfully reconstruct $s_i$. Training on data constructed with richer context than is available at inference time can therefore induce hallucination, where the model learns to introduce intermediate details that are not grounded in the provided summary.

To better align training with the model’s inference-time reasoning pattern, we slightly modify the context used for summary generation. Concretely, we generate $s_i$ conditioned only on $r_i$ and $s_{i-1}$. This design ensures that the summarization process operates under the same informational constraints as those encountered during actual reasoning, thereby reducing the risk of hallucination and enabling the model to produce more accurate and faithful summaries.

For \textbf{the final reasoning iteration $n$}, we do not generate a summary, as the model is expected to produce the final conclusion in this round rather than an intermediate summary.

We adopt a multi-turn conversational protocol for summary generation, rather than a single-pass generation. This design choice is motivated by the desire to more effectively leverage the model’s post-alignment capabilities, thereby producing higher-quality summaries. Specifically, the multi-turn interaction allows the model to better contextualize the reasoning content, follow structured instructions, and refine its abstraction behavior in a manner consistent with its alignment training. 

Concretely, for the first iteration ($i=1$), the messages provided to the summarization model are defined as follows:
\begin{lstlisting}[language=Python]
messages = [
    {"role": "user", "content": question},
    {"role": "assistant", "content": reasoning_process_1},
    {"role": "user", "content": PROMPT_1}
]
\end{lstlisting}
For an intermediate iteration $i$ $(1 < i < n)$, the messages fed into the model are defined as:
\begin{lstlisting}[language=Python]
messages = [
    {"role": "user", "content": question},
    {"role": "assistant", "content": last_summary},
    {"role": "user", "content": "Please continue your reasoning based on your past reasoning history."},
    {"role": "assistant", "content": reasoning_process_i},
    {"role": "user", "content": PROMPT_2}
]
\end{lstlisting}

Building upon the approach of \citet{yan2025inftythink}, we introduce a hyperparameter \(\gamma\) to explicitly control the compression ratio of summaries. Specifically, during summary generation, we enforce a length constraint by verifying whether the number of tokens in the generated summary is below the predefined threshold $\gamma$. If the constraint is violated, we resample the summary, with up to 10 retry attempts. If the generated summary still exceeds the threshold after all retries, the corresponding sample is discarded. Empirically, we observe that the discard rate induced by this constraint is below 1‰, indicating that the proposed length control has a negligible impact on data efficiency.

During summary generation, we adopt SGLang~\citep{zheng2024sglang} as the inference engine (version 0.5.6), running on NVIDIA GPUs. All engine configurations are kept at their default settings. We leverage the asynchronous inference interface provided by SGLang. For sampling, the temperature is set to 0.5 and the top\_p is set to 0.95, while all other sampling parameters remain at their default values.

\paragraph{Step III. Training Instance Construction} Based on the segmented reasoning traces and their corresponding summaries, we construct a set of training instances that explicitly supervise the model to perform iterative reasoning with intermediate summarization. Each instance is organized to align with the InftyThink reasoning paradigm and is defined as follows:
\begin{equation}
(q, \textcolor{NavyBlue}{r}, \textcolor{Green}{c}) \xrightarrow{\eta,\ \gamma} \begin{cases} 
      (q, \textcolor{NavyBlue}{{r}_1}, \textcolor{CarnationPink}{s_1}) & \text{for } i = 1, \\
      (q, \textcolor{CarnationPink}{s_{i-1}}, \textcolor{NavyBlue}{{r}_i}, \textcolor{CarnationPink}{s_i}) & \text{for } 1 < i < n, \\
      (q, \textcolor{CarnationPink}{s_{n-1}}, \textcolor{NavyBlue}{{r}_n},  \textcolor{Green}{c}) & \text{for } i = n.
  \end{cases}
\end{equation}
At the initial iteration ($i=1$), the model is trained to generate the first reasoning segment along with its corresponding summary. For intermediate iterations ($1<i<n$), the model learns to condition on the previously generated summary to extend the reasoning process and produce an updated summary. In the final iteration ($i=n$), the model is guided to leverage the last summary to complete the reasoning and output the final conclusion.

\subsection{Supervised Fine-tuning}
\label{appendix:cold_start_sft}
\paragraph{Cold Start via Supervised Fine-Tuning.}
The cold-start stage in this work is implemented via supervised fine-tuning (SFT), where the model is trained by directly supervising its output token probabilities. Specifically, we adopt the standard cross-entropy loss to supervise the likelihood of each token in the model-generated response.

\paragraph{Vanilla Paradigm.}
Under the vanilla paradigm, we follow the standard instruction fine-tuning procedure. The query and response are concatenated according to the tokenizer-specific \texttt{chat\_template}, with special tokens inserted to indicate conversational roles and boundaries. During training, the loss is computed exclusively over the response tokens, while the query tokens and all special tokens introduced by the chat template are masked out from the loss computation. This training process can be formalized as:
\begin{lstlisting}[language=Python]
input_ids = tokenizer.apply_chat_template(
    [{"role": "user", "content": question}],
    add_generation_prompt=True
)

response_txt = f"<think>\n{reasoning_process}\n</think>{conclusion}"
response_ids = tokenizer(response_txt).input_ids
\end{lstlisting}

\paragraph{InftyThink Paradigm.}
Under the InftyThink paradigm, we introduce a slight but crucial modification to the above supervision strategy.
For the first reasoning iteration ($i=1$), no history is involved in the input context. As a result, the input structure is identical to that of the vanilla paradigm, and we apply the same supervision and loss masking strategy. Formally, this process can be expressed as:
\begin{lstlisting}[language=Python]
input_ids = tokenizer.apply_chat_template(
    [{"role": "user", "content": question}],
    add_generation_prompt=True
)

response_txt = f"<think>\n{reasoning_process_1}\n</think><summary>{summary_1}</summary>"
response_ids = tokenizer(response_txt).input_ids
\end{lstlisting}
For subsequent reasoning iterations ($i>1$), the input context additionally includes a history segment summarizing previous reasoning steps. Since this history is provided as contextual information rather than an output to be generated, we explicitly prevent the model from learning to reproduce it. Concretely, after applying the chat template to the query, we append the history tokens to the resulting input sequence, forming the complete model input. The response remains unchanged. During training, we compute the loss only over the response tokens, while masking out both the query and the history tokens from loss computation. This procedure can be expressed as:
\begin{lstlisting}[language=Python]
input_ids = tokenizer.apply_chat_template(
    [{"role": "user", "content": question}],
    add_generation_prompt=True
)
history_txt = f"<history>\n{summary_i-1}\n</history>"
history_ids = tokenizer(history_txt).input_ids
input_ids = input_ids + history_ids

response_txt = f"<think>\n{reasoning_process_i}\n</think><summary>{summary_i}</summary>"
response_ids = tokenizer(response_txt).input_ids
\end{lstlisting}
For the final iteration ($i = n$), the model no longer performs summarization. Instead, it directly generates the final conclusion based on the accumulated reasoning context. Accordingly, the supervision strategy remains consistent with previous iterations: the model conditions on the query and the history, while the loss is computed solely over the conclusion tokens. Formally, this process can be expressed as:
\begin{lstlisting}[language=Python]
input_ids = tokenizer.apply_chat_template(
    [{"role": "user", "content": question}],
    add_generation_prompt=True
)
history_txt = f"<history>\n{summary_n_1}\n</history>"
history_ids = tokenizer(history_txt).input_ids
input_ids = input_ids + history_ids

response_txt = f"<think>\n{reasoning_process_n}\n</think>{conclusion}"
response_ids = tokenizer(response_txt).input_ids
\end{lstlisting}

\section{Full Recipe of Reinforcement Learning Stage}
\label{appendix:reinforcement_learning}
The core idea of InftyThink+ is to incorporate reinforcement learning (RL) into the optimization of the InftyThink reasoning paradigm. In this section, we provide a detailed description of the RL implementation details. Algorithm~\ref{alg:rl_alg} illustrates the RL workflow for a single query $q$.

\begin{algorithm}[h!]
\caption{InftyThink+ Reinforcement Learning Step}
\label{alg:rl_alg}
\begin{algorithmic}[1]

\STATE \textbf{Inputs:} query $q$; LLM policy $\pi_\theta$; LLM tokenizer $t$; Max InftyThink iteration rounds $\varphi$; InftyThink format extractor $F$; group size $G$;  task reward function $\mathcal{R}_{\text{task}}$; efficiency reward function $\mathcal{R}_{\text{eff}}$; learning rate $\eta_{lr}$.
\STATE $\mathcal{O},R \gets \{\,\},\,\{\,\}$ \hfill $\triangleright$ InftyThink rollout trajectories and rewards

\STATE
\STATE // Generate InftyThink rollout trajectories

\FOR{$i \gets 1$ to $G$}
  \STATE $p \gets t.\text{apply\_chat\_template}(q)$ \hfill $\triangleright$ Initial query with chat template

  \FOR{$j \gets 1$ to $\varphi$}
      \IF {$j=1$} 
        \STATE $x \gets p$ \hfill $\triangleright$ Prompt without a summary
      \ELSE
        \STATE $x \gets p \oplus s_{j-1}$ \hfill $\triangleright$ Prompt with a summary
      \ENDIF      
      
      \STATE $o \gets \pi_\theta(x)$ \hfill $\triangleright$ Generate
      \STATE $s_j \gets F(o)$ \hfill $\triangleright$ Extract summary from the generation
      \STATE $\mathcal{O}[(i,j)] \gets o$
      
      \IF {$s_i$ is \texttt{[NONE]}}  
        \STATE break \hfill $\triangleright$ No summary found, break the loop
      \ENDIF
  \ENDFOR
\ENDFOR

\STATE
\STATE // Assign rewards 
\FOR{$i \gets 1$ to $G$}
    \STATE $n = | \{\mathcal{O}[(i,*)]\} |$ \hfill $\triangleright$ Iteration number of trajectory
    \STATE $r_\text{task} = \mathcal{R}_{\text{task}}(\mathcal{O}[(i,n)])$, $r_\text{eff} = \mathcal{R}_{\text{eff}}(\mathcal{O}[(i,n)])$ \hfill $\triangleright$ Reward calculation
    \IF {\texttt{use\_efficiency\_reward}}
            \STATE $r \gets r_\text{task} \cdot r_\text{eff} $
        \ELSE
            \STATE $r \gets r_\text{task}$ 
        \ENDIF
    \FOR{$j \gets 1$ to $n$}
        \STATE $R[(i,j)] \gets r$ \hfill $\triangleright$ Reward broadcast
    \ENDFOR
\ENDFOR

\STATE
\STATE // Estimate advantages 
\STATE $\{\hat A[(i,j)]\}_{i=1,j=1}^{G,n_i} \gets \mathrm{ComputeAdvantage}\!\left(\{R[(i,j)]\}_{i=1,j=1}^{G,n_i}, R\right )$  \hfill $\triangleright$ Compute the advantages in this group

\STATE
\STATE // Updating policy model 
\STATE $J \gets \frac{1}{\sum_{i=1}^G\sum_{j=1}^{n_i}|\mathcal{O}[(i,j)]|}\sum_{i=1}^G\sum_{j=1}^{n_i}\mathcal{U}(\mathcal{O}[(i,j)];\theta) $ \hfill $\triangleright$  Compute the policy gradient loss according to Equation~\ref{eq:inftythink_loss}

\STATE $\theta \gets \theta + \eta_{lr}\,\nabla_{\theta}\,J$  
\end{algorithmic}
\end{algorithm}

We detail the algorithmic components of InftyThink+ RL from four aspects: rollout (Appendix~\ref{appendix:reinforcement_learning_rollout}), reward assignment (Appendix~\ref{appendix:reinforcement_learning_reward}), policy gradient optimization (Appendix~\ref{appendix:reinforcement_learning_pg}), and training stability (Appendix~\ref{appendix:reinforcement_learning_icepop}).

\subsection{Rollout}
\label{appendix:reinforcement_learning_rollout}
\paragraph{Trajectory-Level Rollout.}
For RL training under the InftyThink+ framework, we adopt a \emph{trajectory-level rollout} strategy. Specifically, for each query, a single rollout corresponds to one complete InftyThink-style reasoning trajectory, spanning all iterative reasoning rounds until termination. Unlike tree-based or branching rollouts commonly used in search-based RL methods, we restrict training to a single, linear rollout per query, which substantially simplifies both rollout generation and policy optimization.

Formally, for the $i$-th rollout associated with a query $q$, the resulting trajectory can be represented as
\begin{equation}
    \text{Rollout}(q, i) \rightarrow \mathcal{O}_i = \{o_i^1, o_i^2, \ldots, o_i^{n_i}\},
\end{equation}
where $o_i^j$ denotes the model output at the $j$-th reasoning iteration, and $n_i$ is the total number of iterations in trajectory $\mathcal{O}_i$.

\paragraph{InftyThink-Style Iterative Reasoning.}
Trajectory-level rollouts follow the \emph{InftyThink-style} reasoning paradigm, in which multiple rounds of reasoning are connected via model-generated summaries that serve as compact intermediate state representations.

For the first iteration ($j=1$), the model performs inference directly conditioned on the original query after applying the chat template, without any intermediate summaries:
\begin{equation}
    p = \texttt{apply\_chat\_template}(q),
\end{equation}
\begin{equation}
    o^1 = \pi_\theta(p).
\end{equation}

For subsequent iterations ($j>1$), we apply an \emph{InftyThink-style} structured extraction function $F$ to the output of the previous iteration. This function parses the model output and extracts a summary that abstracts the essential reasoning state required for continuation:
\begin{equation}
    s^{j-1} = F(o^{j-1}).
\end{equation}
If no valid summary can be extracted, or if the iteration index $j$ reaches a predefined maximum reasoning depth $\varphi$, the trajectory is terminated. Otherwise, the extracted summary is concatenated with the original prompt and used as the input context for the next iteration:
\begin{equation}
    o^j = \pi_\theta(p \oplus s^{j-1}).
\end{equation}

\paragraph{RL-Compatible Context Handling.}
To ensure compatibility with existing RL training frameworks, we do not treat the generated summaries as part of the \texttt{input\_ids}. Since summary lengths are inherently variable and not explicitly controllable, directly including them as inputs would cause prompt-length validation failures in standard RL implementations.

Instead, for iterations $j>1$, we prepend the tokenized summary history to the corresponding model output and treat the concatenation as a single sequence:
\begin{equation}
    \{o^j\}_{j>1} = \{s^{j-1} \oplus o^j\}_{j>1}.
\end{equation}
This design allows the RL framework to operate on fixed input prompts while still preserving the full iterative reasoning context within the trajectory.

\paragraph{Loss Masking for History Tokens.}
Crucially, although summary tokens are included in the sequence representation, we do not intend to optimize the policy with respect to these history tokens. To this end, we construct a loss mask for each iteration that blocks gradient propagation through the summary portion:
\begin{equation}
    \mathcal{M}^j = \texttt{concat}([0] \times |s^{j-1}|,\; [1] \times |o^j|).
\end{equation}
During policy optimization, this mask ensures that only newly generated tokens contribute to the loss, preventing unintended updates to the summarized history while maintaining end-to-end compatibility with standard policy gradient training.

\subsection{Reward Assignment}
\label{appendix:reinforcement_learning_reward}
\paragraph{Trajectory-Level Reward Modeling.}
For reward modeling, we compute rewards at the \emph{trajectory level} and broadcast the resulting scalar reward to all outputs along the trajectory. This design enables trajectory-wise credit assignment while avoiding the need for fine-grained, step-level reward annotation, which is often noisy and difficult to define for long-horizon reasoning.
Formally, for the $i$-th trajectory, all outputs $o_i^j \in \mathcal{O}_i$ share the same reward:
\begin{equation}
    r_i^j \equiv r_i.
\end{equation}

\paragraph{Reward Computation.}
The reward $r_i$ is computed solely based on the final outcome of the trajectory, reflecting the overall quality of the completed reasoning process. Concretely, we separately compute a \emph{task reward}, which measures solution correctness or task completion quality, and an optional \emph{efficiency reward}, which encourages concise and efficient reasoning.

When the efficiency reward is enabled, the final reward is defined as the product of these two components:
\begin{equation}
    r_i =
    \begin{cases}
        \mathcal{R}_{\text{task}}(o_i^{-1}) \cdot \mathcal{R}_{\text{eff}}(o_i^{-1}), 
        & \text{if } \texttt{use\_efficiency\_reward}, \\
        \mathcal{R}_{\text{task}}(o_i^{-1}), 
        & \text{otherwise},
    \end{cases}
\end{equation}
where $o_i^{-1}$ denotes the penultimate output of trajectory $\mathcal{O}_i$, i.e., the final reasoning output before termination.

This multiplicative formulation ensures that efficiency is rewarded only when the model produces a correct or high-quality solution, thereby preventing degenerate behaviors where the model overly optimizes efficiency at the expense of task performance.

\subsection{Policy Gradient Optimization}
\label{appendix:reinforcement_learning_pg}
\paragraph{Policy Gradient Optimization with GRPO.}
For policy optimization, we adopt \emph{Group Relative Policy Optimization} (GRPO)~\citep{shao2024deepseekmath}.
Given a query $q$, we sample $G$ trajectories, each consisting of multiple reasoning outputs.
For each output $o_i^j$, we associate a scalar reward $r_i^j$, which is broadcast from the final outcome of its corresponding trajectory.
GRPO performs group-wise normalization over these output-level rewards to construct relative advantages, enabling stable policy optimization without an explicit value function.

Formally, let $\{r_i^j\}_{i=1,j=1}^{G,n_i}$ denote the rewards of all outputs sampled for query $q$.
We compute the within-group mean and standard deviation as
\begin{equation}
    \begin{aligned}
        \mu &= \text{mean}(\{r_i^j\}), \\
        \sigma &= \text{std}(\{r_i^j\}),
    \end{aligned}
\end{equation}
and define the normalized advantage for each output as
\begin{equation}
    \hat{A}_i^j = \frac{r_i^j - \mu}{\sigma}.
\end{equation}

\paragraph{Token-Level Loss Aggregation.}
Following prior work on long-horizon RL for language models~\citep{yu2025dapo}, we employ a \emph{token-level averaging} scheme to aggregate the policy gradient loss.
Specifically, the overall objective is given by
\begin{equation}
\mathcal{J}(\theta)
=
\mathbb{E}_{\{\mathcal{O}_i\}_{i=1}^{G} \sim \pi_{\theta_{\text{old}}}(\cdot \mid q)}
\left[
\underbrace{
\frac{1}{\sum_{i=1}^G \sum_{j=1}^{n_i} |o_i^j|}
}_{\text{token-level mean}}
\sum_{i=1}^{G}
\underbrace{
\sum_{j=1}^{n_i}
\overbrace{
\mathcal{U}(o_i^j, \mathcal{M}_i^j; \theta)
}^{\text{output loss}}
}_{\text{trajectory aggregation}}
\right],
\end{equation}
where $\mathcal{U}(o, \mathcal{M}; \theta)$ denotes the loss contribution of a single output $o$ with its corresponding loss mask $\mathcal{M}$.

\paragraph{Output-Level Objective.}
The output-level objective adopts a clipped policy gradient form:
\begin{equation}
\small
\mathcal{U}(o, \mathcal{M}; \theta)
=
\sum_{t=1}^{|o|}
\min\!\left(
    r_\theta(o_t)\,\hat{A}_t,\;
    \text{clip}\!\left(
        r_\theta(o_t),
        1-\epsilon_{\text{low}},
        1+\epsilon_{\text{high}}
    \right)\hat{A}_t
\right)
\cdot \mathcal{M}_t,
\end{equation}
where $r_\theta(o_t)$ is the importance sampling ratio at token $o_t$:
\begin{equation}
    r_\theta(o_t)
    =
    \frac{\pi_\theta(o_t \mid \text{ctx}_t)}
         {\pi_{\theta_{\text{old}}}(o_t \mid \text{ctx}_t)},
\end{equation}
with $\text{ctx}_t$ denoting the model context at generation step $t$.
We use asymmetric clipping thresholds $\epsilon_{\text{low}}$ and $\epsilon_{\text{high}}$, following prior GRPO-based implementations~\citep{shao2024deepseekmath,yu2025dapo}.

\paragraph{Output-Level Advantage Broadcasting.}
The token-level advantage $\hat{A}_t$ is inherited from the output-level normalized advantage:
\begin{equation}
    \hat{A}_t = \hat{A}_i^j,
    \quad \forall\, t \in o_i^j.
\end{equation}
Thus, all tokens belonging to the same output share the same advantage value.
This design is consistent with our output-level reward assignment while enabling fine-grained token-level policy optimization.

\paragraph{Loss Masking.}
The loss mask $\mathcal{M}_t \in \{0,1\}$ controls whether a token contributes to the policy gradient update.
In particular, $\mathcal{M}_t = 0$ masks out non-optimizable tokens such as history tokens or externally provided context, ensuring that gradients are applied only to newly generated tokens at each reasoning round.

\paragraph{Summary.}
Overall, this objective combines GRPO-style group-relative normalization at the \emph{output level} with token-level loss aggregation and masking, yielding a stable and efficient policy gradient formulation for long-context and iterative reasoning.

\subsection{Stable Training: IcePop}
\label{appendix:reinforcement_learning_icepop}
Although GRPO provides a stable policy optimization objective in principle, we observe that in practice, long-horizon RL training can still suffer from instability when rollout generation and gradient optimization are handled by separate execution engines.
Such instability is particularly pronounced for long chain-of-thought (CoT) reasoning, where small probability discrepancies may accumulate across many iterations.

To mitigate this issue, we adopt IcePop~\citep{team2025every} as a stability-enhanced variant of GRPO.
Rather than modifying the overall training pipeline, IcePop introduces a lightweight calibration mechanism that suppresses noisy gradient updates caused by training--inference probability mismatch.

\paragraph{IcePop-Calibrated GRPO Objective.}
Following IcePop, we introduce a token-level masking function $\mathcal{M}(\cdot)$ based on the probability ratio between the training and inference policies.
For a token $o_t$ generated under the inference policy, we define
\begin{equation}
k_t
=
\frac{\pi_{\text{train}}(o_t \mid \text{ctx}_t;\theta_{\text{old}})}
     {\pi_{\text{infer}}(o_t \mid \text{ctx}_t;\theta_{\text{old}})},
\end{equation}
and apply the masking function
\begin{equation}
\mathcal{M}_{\text{IcePop}}(k_t)
=
\begin{cases}
k_t, & k_t \in [\alpha, \beta], \\
0,   & \text{otherwise},
\end{cases}
\end{equation}
where $\alpha$ and $\beta$ denote the lower and upper bounds of the trusted calibration region.

With this mask, the IcePop-calibrated round-level objective becomes
\begin{equation}
\small
\mathcal{U}_{\text{IcePop}}(o, \mathcal{M}; \theta)
=
\sum_{t=1}^{|o|}
\mathcal{M}_{\text{IcePop}}(k_t)\!
\cdot
\min\!\left(
    r_\theta(o_t)\,\hat{A}_t,\;
    \text{clip}\!\left(
        r_\theta(o_t),
        1-\epsilon_{\text{low}},
        1+\epsilon_{\text{high}}
    \right)\hat{A}_t
\right)
\cdot \mathcal{M}_t,
\end{equation}
where $r_\theta(o_t)$, $\hat{A}_t$, and $\mathcal{M}_t$ follow the same definitions as stated above.

\paragraph{Discussion.}
Intuitively, IcePop restricts policy updates to a region where the training and inference policies remain well aligned, and discards gradient contributions from tokens with excessive probability deviation.
In our setting, IcePop is applied as a drop-in replacement for GRPO, without altering the rollout strategy, advantage normalization, or token-level loss aggregation.
This simple modification substantially improves training stability in long-context reasoning RL, while preserving the efficiency and simplicity of GRPO.

\section{Experimental Details}
\label{appendix:experimental_details}
In this section, we provide a comprehensive description of the experimental settings used throughout the paper, including the configurations of all hyperparameters during both training (Appendix~\ref{appendix:experimental_details_training}) and evaluation (Appendix~\ref{appendix:experimental_details_evaluation}), as well as the hardware environment and the versions of the major third-party libraries employed.
\subsection{Training Details}
\label{appendix:experimental_details_training}
Our model training comprises two complementary paradigms: supervised fine-tuning (SFT) and reinforcement learning (RL). We present the experimental settings and implementation details for SFT (Appendix~\ref{appendix:experimental_details_training_sft}) and RL (Appendix~\ref{appendix:experimental_details_training_rl}) separately in the following sections.

\subsubsection{SFT Experimental Details}
\label{appendix:experimental_details_training_sft}
Our SFT experiments are conducted using the ms-swift~\citep{zhao2025swift} training framework, with Megatron-Core\citep{shoeybi2020megatronlm} serving as the training backend to enable efficient model parallelism and long-sequence training.
To accelerate training and reduce computational waste caused by padding, we apply sample packing to the SFT data.
Within each packed sequence, we leverage FlashAttention~\citep{dao2023flashattention2} with variable-length attention (\texttt{var\_len\_attn}) to preserve the independence of attention computation across individual samples.
All experiments are performed on 8 NVIDIA GPUs with CUDA~12.8.
The hyperparameter configuration for the SFT experiments is summarized in Table~\ref{tab:experimental_details_training_sft_params}.

\begin{table}[h]
\centering
\small
\setlength{\tabcolsep}{6pt}
\caption{Key hyperparameters for supervised fine-tuning.}

\begin{tabular}{lcc}
\toprule
\textbf{Hyperparameter} & \textbf{DeepSeek-R1-Distill-Qwen-1.5B} & \textbf{Qwen3-4B-Base} \\
\midrule
\rowcolor{lightgray!70}\multicolumn{3}{l}{\textit{Data}} \\
max\_epochs & 3 & 3 \\
max\_length & 32,768 & 32,768 \\
packing & true & true \\
\midrule
\rowcolor{lightgray!70}\multicolumn{3}{l}{\textit{Batch Size}} \\
micro\_batch\_size & 1 & 1 \\
global\_batch\_size & 8 & 8 \\
\midrule
\rowcolor{lightgray!70}\multicolumn{3}{l}{\textit{Learning Rate}} \\
lr & 2e-5 & 5e-5 \\
min\_lr & 0 & 0 \\
lr\_warmup\_fraction & 0.03 & 0.03 \\
lr\_decay\_style & cosine & cosine \\
\midrule
\rowcolor{lightgray!70}\multicolumn{3}{l}{\textit{Megatron-Core}} \\
tensor\_model\_parallel\_size & 1 & 2 \\
sequence\_parallel & true & true \\
recompute\_granularity & full & full \\
recompute\_method & uniform & uniform \\
recompute\_num\_layers & 1 & 1 \\
\midrule
\rowcolor{lightgray!70}\multicolumn{3}{l}{\textit{Others}} \\
attention\_backend & flash & flash \\
cross\_entropy\_loss\_fusion & true & true \\
\bottomrule
\end{tabular}

\label{tab:experimental_details_training_sft_params}
\end{table}

\subsubsection{RL Experimental Details}
\label{appendix:experimental_details_training_rl}
Our RL experiments are conducted using verl~\citep{sheng2025hybridflow} as the training framework, with SGLang~\citep{zheng2024sglang} serving as the inference engine and FSDP~\citep{zhao2023pytorch} as the training backend. During rollout, we adopt an \emph{asynchronous} strategy, where different rollout trajectories are processed independently and in parallel. This design is particularly beneficial for InftyThink-style multi-round reasoning, as it significantly improves inference throughput and overall efficiency. Concretely, we leverage the \texttt{AgentLoop} module provided in \texttt{verl}~v0.7.0 to implement the complete InftyThink rollout procedure as well as the corresponding reward assignment pipeline.

For the task reward, we employ the verification function from PRIME Math, which performs exact rule-based validation using symbolic computation libraries such as \texttt{SymPy}. To avoid blocking the training process, we introduce a timeout mechanism in the verification stage: if a model-generated answer cannot be verified as correct within a predefined time limit, it is assigned a reward of zero. The detailed RL hyperparameters are summarized in Table~\ref{tab:experimental_details_training_rl_params}.

\begin{table}[h]
\centering
\small
\setlength{\tabcolsep}{6pt}
\caption{Key hyperparameters for reinforcement learning.}

\begin{tabular}{lcccc}
\toprule

\textbf{Hyperparameter} & \multicolumn{2}{c}{\textbf{DeepSeek-R1-Distill-Qwen-1.5B}} & \multicolumn{2}{c}{\textbf{Qwen3-4B-Base}} \\
& \textit{vanilla} & \textit{InftyThink+} & \textit{vanilla} & \textit{InftyThink+} \\
\midrule
\rowcolor{lightgray!70}\multicolumn{5}{l}{\textit{Trainer}} \\
total\_training\_steps & 1,000 & 1,000 & 500 & 500  \\
\midrule
\rowcolor{lightgray!70}\multicolumn{5}{l}{\textit{Algorithm}} \\
adv\_estimator & grpo & grpo & grpo & grpo \\
rollout\_correction.rollout\_rs & token & token & token & token \\
rollout\_correction.rollout\_rs\_threshold & 5.0 &  5.0 &  5.0 & 5.0 \\
rollout\_correction.rollout\_rs\_threshold\_lower & 0.5 & 0.5 & 0.5 & 0.5 \\
\midrule
\rowcolor{lightgray!70}\multicolumn{5}{l}{\textit{Data}} \\
train\_batch\_size & 128 & 128 & 128 & 128 \\
max\_prompt\_length & 2048 & 2048 & 2048 & 2048 \\
max\_response\_length & 30720 & 10240 & 30720 & 10240 \\
\midrule
\rowcolor{lightgray!70}\multicolumn{5}{l}{\textit{Model}} \\
model.use\_remove\_padding & true & true & true & true \\
model.enable\_gradient\_checkpointing & true & true & true & true \\
actor.ppo\_mini\_batch\_size & 64 & 64 & 64 & 64 \\
actor.use\_dynamic\_bsz & true & true & true & true \\
actor.ppo\_max\_token\_len\_per\_gpu & 32768 & 36864 & 32768 & 73728 \\
actor.clip\_ratio\_low & 0.20 & 0.20 & 0.20 & 0.20 \\
actor.clip\_ratio\_high & 0.26 & 0.26 & 0.26 & 0.26 \\
actor.optim.lr & 1e-6 & 1e-6 & 1e-6 & 1e-6 \\
actor.weight\_decay & 0.0 & 0.0 & 0.0 & 0.0 \\
rollout.name & sglang & sglang & sglang & sglang \\
rollout.mode & async & async & async & async \\
rollout.tensor\_model\_parallel\_size & 1 & 1 & 1 & 1 \\
rollout.gpu\_memory\_utilization & 0.9 & 0.9 & 0.9 & 0.9 \\
rollout.n & 8 & 8 & 8 & 8 \\
rollout.temperature & 1.0 & 1.0 & 1.0 & 1.0 \\
rollout.top\_p & 1.0 & 1.0 & 1.0 & 1.0 \\
rollout.top\_k & -1 & -1 & -1 & -1 \\
ref.log\_prob\_use\_dynamic\_bsz & true & true & true & true \\
ref.log\_prob\_max\_token\_len\_per\_gpu & 32768 & 36864 & 32768 & 147456 \\
\bottomrule
\end{tabular}
\label{tab:experimental_details_training_rl_params}
\end{table}

\subsection{Evaluation Details}
\label{appendix:experimental_details_evaluation}
To ensure a fair and controlled comparison, all models and all trained checkpoints are evaluated under an identical hardware setup and software environment.
Specifically, all experiments are conducted on a single machine equipped with 8 NVIDIA GPUs, using our in-house evaluation framework.
The framework adopts SGLang~\citep{zheng2024sglang} as the inference engine, with \texttt{tensor\_parallel\_size} set to 1 and \texttt{data\_parallel\_size} set to 8.
All inference is performed in an asynchronous manner, with the concurrency control parameter \texttt{Semaphore} set to 1024.

For \emph{vanilla} inference, we set \texttt{max\_new\_tokens} to 32k.
For \emph{InftyThink+}, \texttt{max\_new\_tokens} is set to 8k, and the maximum number of reasoning iterations is capped at 10.
If the model fails to complete the reasoning process and produce a final conclusion within 10 iterations, the generation is forcibly terminated.
For both paradigms, we use a temperature of 0.7 and $\texttt{top\_p}=0.95$, and sample 32 completions per query.

For metric computation, we employ CompassVerifier-7B~\citep{liu2025compassverifier} to judge the correctness of each completion against the reference answer, and report accuracy averaged over the 32 sampled completions.
We compute the token usage (TOK) and end-to-end latency (LAT) using the statistics provided by SGLang.
For InftyThink+, token counts and latencies are aggregated across all reasoning iterations to ensure a fair comparison with the vanilla paradigm.

\section{Training Dynamics of RL Experiments}
\label{appendix:training_dynamics}
To provide a clearer understanding of the reinforcement learning (RL) training process of InftyThink+, we present the training dynamics observed in the main experiments (shown in Section ~\ref{sec:experiments}). Specifically, we analyze how key metrics evolve over the course of RL training, including: (1) the trend of per-step training time (Appendix~\ref{appendix:training_dynamics_timing}); (2) the evolution of internal model metrics (Appendix~\ref{appendix:training_dynamics_actor}); and (3) the dynamics of InftyThink-specific indicators (Appendix~\ref{appendix:training_dynamics_inftythink}). All trend curves correspond to experiments conducted with DeepSeek-R1-Distill-Qwen-1.5B as the base model. The reported statistics are collected using the internal tracking utilities provided by verl.

\subsection{Training-time Metrics}
\label{appendix:training_dynamics_timing}
\begin{figure}[h]
    \centering
    \includegraphics[width=1\linewidth]{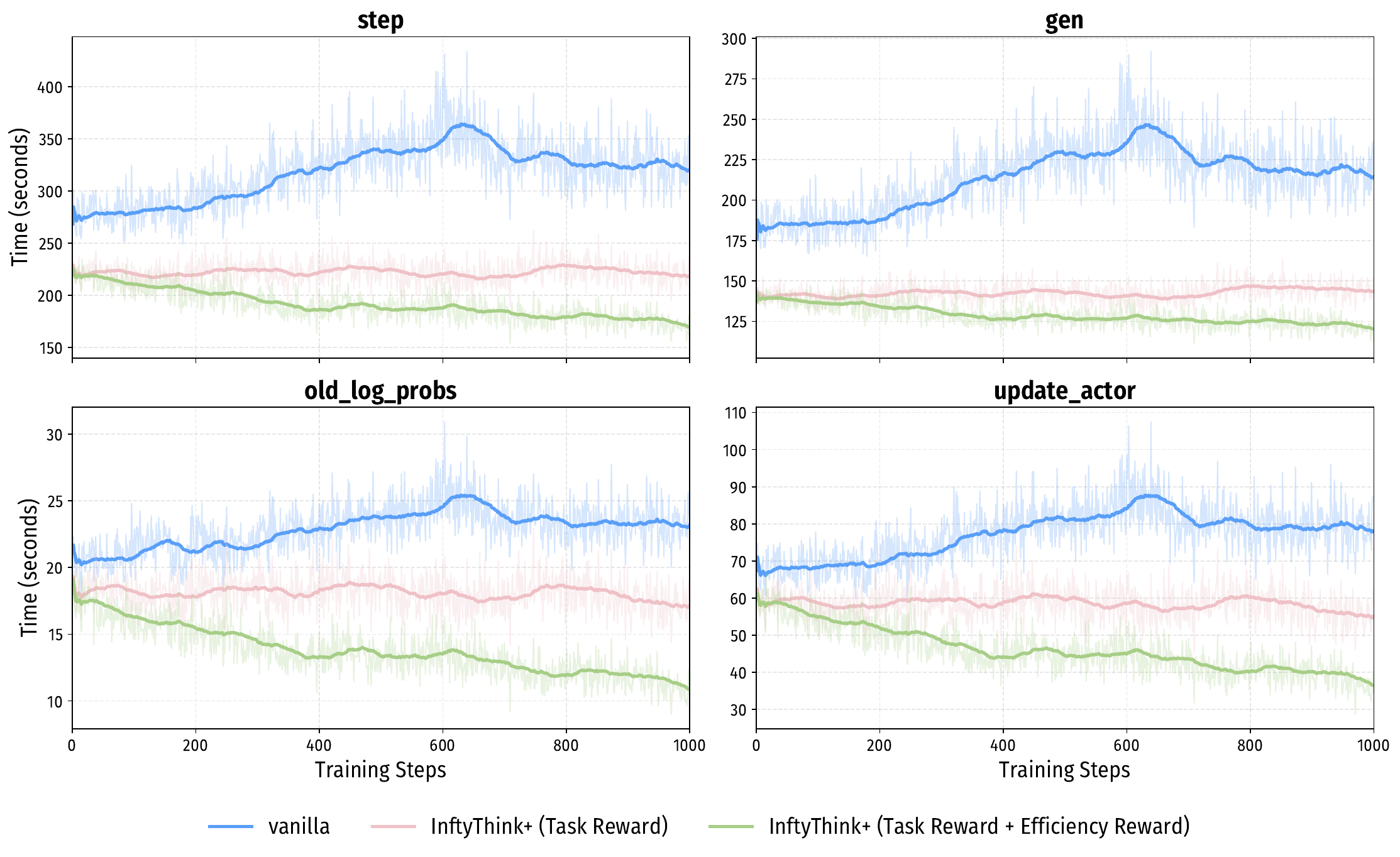}
    \caption{Evolution of per-step training time over the course of training.}
    \label{fig:training_dynamics_timing}
\end{figure}

By decomposing a long reasoning trace into multiple shorter reasoning segments, InftyThink+ mitigates the $O(n^2)$ computational and memory complexity induced by the self-attention mechanism in LLMs. This benefit manifests not only during inference but also translates into substantial acceleration of RL training. Figure~\ref{fig:training_dynamics_timing} illustrates the evolution of per-step training time throughout the training process, from which we draw two key observations.

\paragraph{InftyThink+ is an efficient RL training method.} First, InftyThink+ achieves markedly higher training efficiency than the vanilla approach. Across all major components, including rollout, log-probability computation, and actor updates, InftyThink+ consistently incurs shorter per-step latency. In aggregate, the average per-step training time of InftyThink+ is approximately $225$ seconds, compared to around $325$ seconds for the vanilla method, highlighting its more efficient utilization of training resources.

\paragraph{Efficiency reward push this efficiency into next level.} Second, incorporating the efficiency reward further enhances training efficiency. As shown in the figure, when the efficiency reward is enabled, InftyThink+ exhibits a clear downward trend in training time over the course of learning, with the per-step training time decreasing from roughly $225$ seconds to $175$ seconds. This result empirically demonstrates the effectiveness of the efficiency reward in promoting faster and more resource-efficient RL training.

\subsection{Model-specific Metrics}
\label{appendix:training_dynamics_actor}
In Figure~\ref{fig:training_dynamics_actor}, we present the evolution of several actor-related metrics during RL training, including the reward, advantage, policy gradient loss, and entropy, which collectively characterize the internal training dynamics of the model. As shown in the figure, InftyThink+ exhibits a substantially faster reward growth compared to the vanilla baseline. Specifically, over 1{,}000 training steps, the reward of the vanilla method increases from 0.55 to 0.62, whereas InftyThink+ improves from 0.37 to 0.53. This pronounced acceleration in reward improvement highlights the higher training efficiency of InftyThink+.

\begin{figure}[h]
    \centering
    \includegraphics[width=1\linewidth]{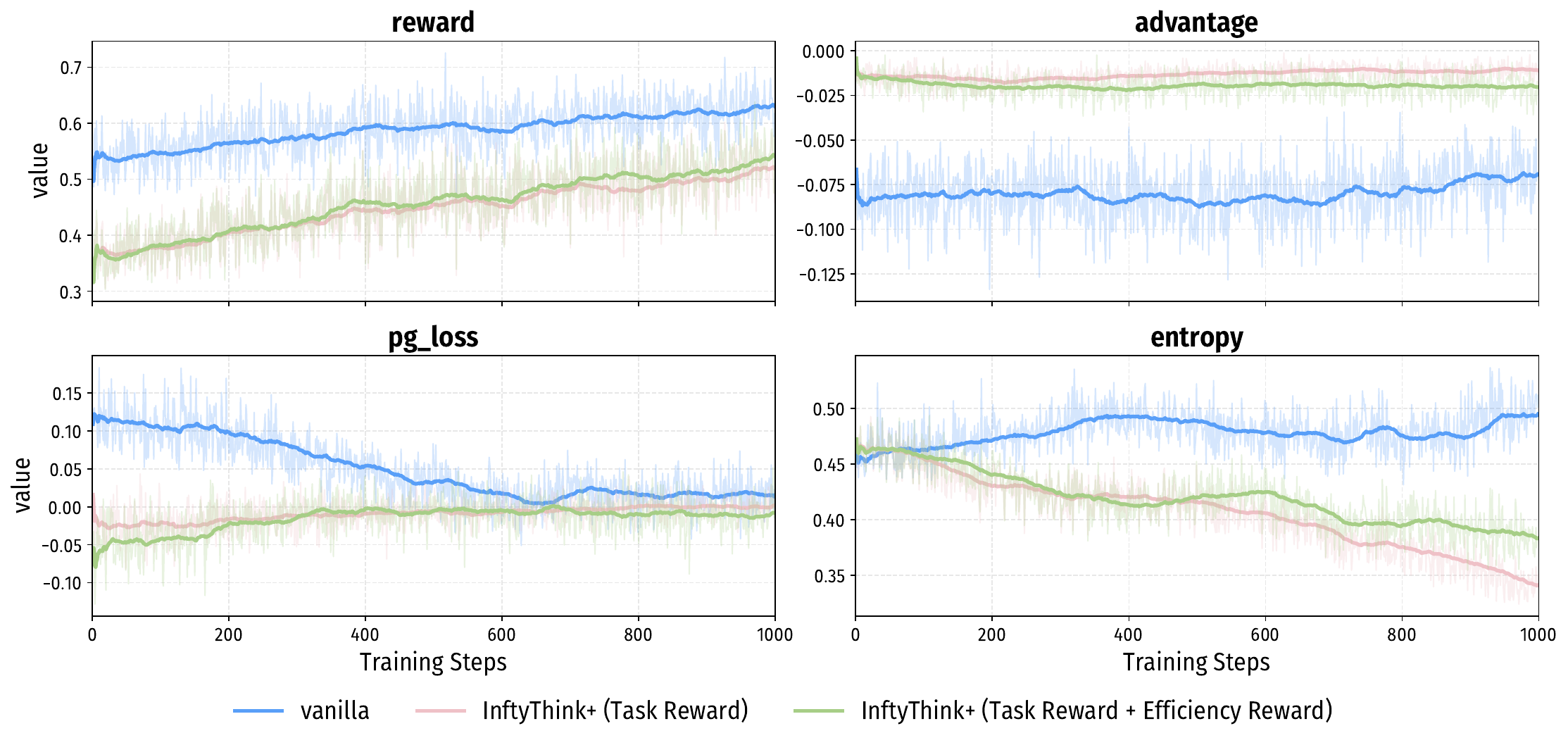}
    \caption{Actor-specific metrics over the course of training.}
    \label{fig:training_dynamics_actor}
\end{figure}

\subsection{InftyThink-specific Metrics}
\label{appendix:training_dynamics_inftythink}
In Figure~\ref{fig:training_dynamics_inftythink}, we present the evolution of InftyThink-related metrics during RL training, including the task reward, efficiency reward, and the number of iteration rounds. Two key observations can be drawn from these results: (1) InftyThink+ consistently achieves higher task rewards compared to the vanilla RL baseline; and (2) incorporating the efficiency reward substantially reduces the number of iteration rounds required by InftyThink, indicating more efficient reasoning trajectories.

\begin{figure}[h]
    \centering
    \includegraphics[width=1\linewidth]{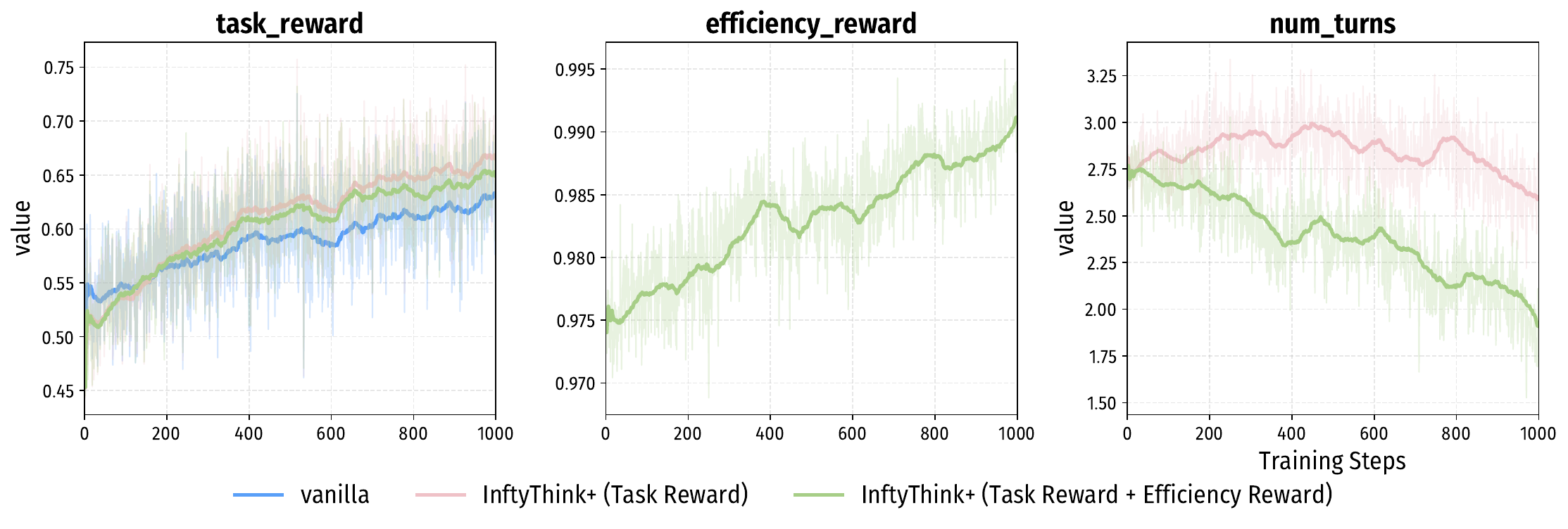}
    \caption{InftyThink-specific metrics over the course of training.}
    \label{fig:training_dynamics_inftythink}
\end{figure}

\section{Detailed Evaluation Results}
\label{appendix:evaluation_results}
Due to space limitations in the main text, we present additional evaluation results in this section. Specifically, this section consists of two parts. First, we report results on a broader range of domains to analyze the effectiveness of InftyThink+ under out-of-distribution (OOD) settings (Appendix~\ref{appendix:evaluation_results_across_domains}). Second, we examine how model performance evolves over the course of RL training, providing a more intuitive and fine-grained view of the model’s capability progression (Appendix~\ref{appendix:evaluation_results_dynamics}).

\subsection{Evaluation Across More Domains}
\label{appendix:evaluation_results_across_domains}
We further conduct a comprehensive evaluation of the 1.5B model used in the main experiments across three representative reasoning domains: mathematical reasoning, scientific reasoning, and code reasoning. For each evaluation benchmark, we report not only the final accuracy, but also the average number of generated tokens and the end-to-end inference latency, enabling a more fine-grained analysis of both effectiveness and efficiency. Among these domains, mathematical reasoning corresponds to the in-domain setting, as it aligns with the primary training distribution, whereas scientific reasoning and code reasoning are treated as out-of-domain (OOD) evaluations. This experimental design allows us to assess the generalization capability of the model beyond the training domain, and to examine whether the performance and efficiency gains observed in-domain can consistently transfer to more diverse and challenging reasoning scenarios.

\begin{table}[htbp]
  \centering
  \small
  \caption{Evaluation results on in-distribution mathematical reasoning benchmarks. The results are obtained by sampling the model 32 times with a temperature of 0.7. ACC stands for average accuracy (\%), TOK stands for average number of generated tokens (K), and LAT stands for average inference time in seconds. \ding{55} denotes the setting with cold start only, without RL.
\ding{51}~T denotes the RL setting where only the task reward is used.
\ding{51}~T+E denotes the RL setting where both the task reward and the efficiency reward are used.}
\begin{tabular}{lrrrrrrrrr|rrr}
\toprule
\multirow{2}[4]{*}{\textbf{RL}} & \multicolumn{3}{c}{\textbf{MATH500}} & \multicolumn{3}{c}{\textbf{AIME24}} & \multicolumn{3}{c}{\textbf{AIME25}} & \multicolumn{3}{c}{\textbf{AMC23}} \\
\cmidrule{2-13}      & \multicolumn{1}{c}{ACC↑} & \multicolumn{1}{c}{TOK} & \multicolumn{1}{c}{LAT↓} & \multicolumn{1}{c}{ACC↑} & \multicolumn{1}{c}{TOK} & \multicolumn{1}{c}{LAT↓} & \multicolumn{1}{c}{ACC↑} & \multicolumn{1}{c}{TOK} & \multicolumn{1}{c}{LAT↓} & \multicolumn{1}{c}{ACC↑} & \multicolumn{1}{c}{TOK} & \multicolumn{1}{c}{LAT↓} \\
\midrule
\rowcolor{lightgray!70}\multicolumn{13}{l}{\textit{vanilla}} \\
\midrule
\ding{55} & 86.20 & 5.32  & 48.71 & 26.67 & 17.08 & 158.95 & 24.48 & 48.71 & \multicolumn{1}{r}{134.34} & 70.31 & 48.71 & 63.68 \\
\ding{51} T   & 89.63 & 5.93  & 56.05 & 38.75 & 18.26 & 175.00 & 31.04 & 56.05 & \multicolumn{1}{r}{169.38} & 81.25 & 56.05 & 64.91 \\
\rowcolor{lightblue!50} $\Delta$  & \textcolor{Green}{+3.43} & \textcolor{Green}{+0.62} & \textcolor{Green}{+7.34} & \textcolor{Green}{+12.08} & \textcolor{Green}{+1.18} & \textcolor{Green}{+16.05} & \textcolor{Green}{+6.56} & \textcolor{Green}{+7.34} & \multicolumn{1}{r}{\textcolor{Green}{+35.04}} & \textcolor{Green}{+10.94} & \textcolor{Green}{+7.34} & \textcolor{Green}{+1.23} \\
\midrule
\rowcolor{lightgray!70}\multicolumn{13}{l}{\textit{InftyThink+}} \\
\midrule
\ding{55} & 86.54 & 5.77  & 34.82 & 29.48 & 20.23 & 103.04 & 27.92 & 48.71 & \multicolumn{1}{r}{98.1} & 71.64 & 48.71 & 50.85 \\
\ding{51} T   & 91.56 & 6.10  & 34.26 & 50.94 & 23.36 & 102.85 & 35.83 & 56.05 & \multicolumn{1}{r}{113.78} & 85.86 & 56.05 & 44.57 \\
\rowcolor{lightblue!50} $\Delta$ & \textcolor{Green}{+5.02} & \textcolor{Green}{+0.33} & \textcolor{Red}{-0.56} & \textcolor{Green}{+21.46} & \textcolor{Green}{+3.13} & \textcolor{Red}{-0.19} & \textcolor{Green}{+7.91} & \textcolor{Green}{+7.34} & \multicolumn{1}{r}{\textcolor{Green}{+15.68}} & \textcolor{Green}{+14.22} & \textcolor{Green}{+7.34} & \textcolor{Red}{-6.28} \\
\cmidrule{2-13}\ding{51} T+E  & 89.96 & 3.36  & 17.71 & 43.96 & 13.13 & 57.5  & 32.92 & 56.05 & \multicolumn{1}{r}{68.39} & 82.97 & 56.05 & 25.14 \\
\rowcolor{lightblue!50} $\Delta$ & \textcolor{Green}{+3.42} & \textcolor{Red}{-2.41} & \textcolor{Red}{-17.11} & \textcolor{Green}{+14.48} & \textcolor{Red}{-7.10} & \textcolor{Red}{-45.54} & \textcolor{Green}{+5.00} & \textcolor{Green}{+7.34} & \multicolumn{1}{r}{\textcolor{Red}{-29.71}} & \textcolor{Green}{+11.33} & \textcolor{Green}{+7.34} & \textcolor{Red}{-25.71} \\
\midrule
\multirow{2}[4]{*}{\textbf{RL}} & \multicolumn{3}{c}{\textbf{MathOdyssey}} & \multicolumn{3}{c}{\textbf{HMMT Feb 25}} & \multicolumn{3}{c|}{\textbf{HMMT Nov 25}} & \multicolumn{3}{c}{\textbf{Average}} \\
\cmidrule{2-13}      & \multicolumn{1}{c}{ACC↑} & \multicolumn{1}{c}{TOK} & \multicolumn{1}{c}{LAT↓} & \multicolumn{1}{c}{ACC↑} & \multicolumn{1}{c}{TOK} & \multicolumn{1}{c}{LAT↓} & \multicolumn{1}{c}{ACC↑} & \multicolumn{1}{c}{TOK} & \multicolumn{1}{c|}{LAT↓} & \multicolumn{1}{c}{ACC↑} & \multicolumn{1}{c}{TOK} & \multicolumn{1}{c}{LAT↓} \\
\midrule
\rowcolor{lightgray!70}\multicolumn{13}{l}{\textit{vanilla}} \\
\midrule
\ding{55} & 60.99 & 9.08  & 105.24 & 14.48 & 17.45 & 168.07 & 11.87 & 18.59 & 182.87 & 42.14 & 23.56 & 123.12 \\
\ding{51} T   & 64.83 & 10.54 & 138.21 & 16.67 & 20.82 & 234.16 & 16.35 & 21.48 & 240.14 & 48.36 & 27.02 & 153.98 \\
\rowcolor{lightblue!50} $\Delta$ & \textcolor{Green}{+3.84} & \textcolor{Green}{+1.46} & \textcolor{Green}{+32.97} & \textcolor{Green}{+2.19} & \textcolor{Green}{+3.37} & \textcolor{Green}{+66.09} & \textcolor{Green}{+4.48} & \textcolor{Green}{+2.89} & \textcolor{Green}{+57.27} & \textcolor{Green}{+6.22} & \textcolor{Green}{+3.46} & \textcolor{Green}{+30.86} \\
\midrule
\rowcolor{lightgray!70}\multicolumn{13}{l}{\textit{InftyThink+}} \\
\midrule
\ding{55} & 61.56 & 10.54 & 66.17 & 14.17 & 21.07 & 115.23 & 12.19 & 23.28 & 123.91 & 43.36 & 25.47 & 84.59 \\
\ding{51} T   & 68.91 & 12.07 & 77.11 & 18.75 & 31.82 & 150.40 & 21.67 & 43.00 & 195.63 & 53.36 & 32.64 & 102.66 \\
\rowcolor{lightblue!50} $\Delta$ & \textcolor{Green}{+7.35} & \textcolor{Green}{+1.53} & \textcolor{Green}{+10.94} & \textcolor{Green}{+4.58} & \textcolor{Green}{+10.75} & \textcolor{Green}{+35.17} & \textcolor{Green}{+9.48} & \textcolor{Green}{+19.72} & \textcolor{Green}{+71.72} & \textcolor{Green}{+10.00} & \textcolor{Green}{+7.16} & \textcolor{Green}{+18.07} \\
\cmidrule{2-13}\ding{51} T+E  & 67.25 & 6.43  & 37.94 & 17.08 & 18.20 & 79.73 & 18.44 & 21.89 & 97.35 & 50.37 & 25.02 & 54.82 \\
\rowcolor{lightblue!50} $\Delta$ & \textcolor{Green}{+5.69} & \textcolor{Red}{-4.11} & \textcolor{Red}{-28.23} & \textcolor{Green}{+2.91} & \textcolor{Red}{-2.87} & \textcolor{Red}{-35.5} & \textcolor{Green}{+6.25} & \textcolor{Red}{-1.39} & \textcolor{Red}{-26.56} & \textcolor{Green}{+7.01} & \textcolor{Red}{-0.46} & \textcolor{Red}{-29.77} \\
\bottomrule
\end{tabular}%

  \label{tab:extended_results_math}%
\end{table}%

\paragraph{Mathematical Reasoning.}
For mathematical reasoning, we evaluate the model on a diverse set of widely adopted benchmarks, including MATH500, AIME24, AIME25, AMC23, MathOdyssey and HMMT. These benchmarks span a broad spectrum of difficulty levels, from competition-style problems to advanced olympiad-level mathematics, and require precise multi-step reasoning, symbolic manipulation, and rigorous logical deduction. As mathematical reasoning constitutes the primary training domain of our model, these benchmarks are treated as in-domain evaluations and serve as a reference point for assessing both reasoning accuracy and inference efficiency under the target distribution.

As shown in Table~\ref{tab:extended_results_math}, on in-distribution mathematical reasoning benchmarks, InftyThink+ consistently amplifies the effectiveness of RL. For example, on AIME24, vanilla RL improves accuracy from 26.67\% to 38.75\% at the cost of higher latency, whereas InftyThink+ raises accuracy to 50.94\% with almost no increase in inference time. This advantage persists in the average in-domain results, where InftyThink+ achieves larger accuracy gains than vanilla RL and, when combined with efficiency reward, even substantially reduces inference latency. These results indicate that InftyThink+ enables RL to translate more effectively into accurate and efficient in-distribution mathematical reasoning.

\paragraph{Scientific Reasoning.}
To assess out-of-domain generalization in scientific reasoning, we conduct evaluations on GPQA\_diamond, MMLU\_redux, and PHYBench. These benchmarks cover a wide range of scientific disciplines, including physics, chemistry, biology, and interdisciplinary scientific knowledge, and emphasize factual grounding, conceptual understanding, and multi-hop reasoning over technical content. Compared to mathematical benchmarks, scientific reasoning often involves more heterogeneous knowledge sources and less formalized solution structures, providing a complementary perspective on the model’s robustness and its ability to transfer reasoning skills beyond the mathematical domain.

\begin{table}[htbp]
  \centering
  \small
  \caption{Evaluation results on out-of-distribution scientific reasoning benchmarks. The results are obtained by sampling the model 32 times with a temperature of 0.7. ACC stands for average accuracy (\%), TOK stands for average number of generated tokens (K), and LAT stands for average inference time in seconds. \ding{55} denotes the setting with cold start only, without RL.
\ding{51}~T denotes the RL setting where only the task reward is used.
\ding{51}~T+E denotes the RL setting where both the task reward and the efficiency reward are used.}
\begin{tabular}{lrrrrrrrrr|rrr}
\toprule
\multirow{2}[4]{*}{\textbf{RL}} & \multicolumn{3}{c}{\textbf{GPQA\_diamond}} & \multicolumn{3}{c}{\textbf{MMLU\_redux}} & \multicolumn{3}{c|}{\textbf{PHYBench}} & \multicolumn{3}{c}{\textbf{Average}} \\
\cmidrule{2-13}      & \multicolumn{1}{c}{ACC↑} & \multicolumn{1}{c}{TOK} & \multicolumn{1}{c}{LAT↓} & \multicolumn{1}{c}{ACC↑} & \multicolumn{1}{c}{TOK} & \multicolumn{1}{c}{LAT↓} & \multicolumn{1}{c}{ACC↑} & \multicolumn{1}{c}{TOK} & \multicolumn{1}{c|}{LAT↓} & \multicolumn{1}{c}{ACC↑} & \multicolumn{1}{c}{TOK} & \multicolumn{1}{c}{LAT↓} \\
\midrule
\rowcolor{lightgray!70} \multicolumn{13}{l}{\textit{vanilla}} \\
\midrule
\ding{55} & 29.40 & 10.45 & 101.84 & 50.37 & 3.45  & 28.54 & 16.25 & 13.44 & 149.38 & 32.01 & 9.12  & 93.25 \\
\ding{51} T   & 29.81 & 15.48 & 197.33 & 51.10 & 5.85  & 69.27 & 20.11 & 20.46 & 320.06 & 33.67 & 13.93 & 195.55 \\
\rowcolor{lightblue!50} $\Delta$ & \textcolor{Green}{+0.41} & \textcolor{Green}{+5.03} & \textcolor{Green}{+95.49} & \textcolor{Green}{+0.73} & \textcolor{Green}{+2.4} & \textcolor{Green}{+40.73} & \textcolor{Green}{+3.86} & \textcolor{Green}{+7.02} & \textcolor{Green}{+170.68} & \textcolor{Green}{+1.67} & \textcolor{Green}{+4.82} & \textcolor{Green}{+102.3} \\
\midrule
\rowcolor{lightgray!70} \multicolumn{13}{l}{\textit{InftyThink+}} \\
\midrule
\ding{55} & 32.31 & 11.77 & 74.31 & 51.34 & 3.12  & 17.61 & 19.14 & 14.60 & 94.55 & 34.26 & 9.83  & 62.16 \\
\ding{51} T   & 37.50 & 24.27 & 149.93 & 54.64 & 4.81  & 30.42 & 30.34 & 30.34 & 219.01 & 40.83 & 19.81 & 133.12 \\
\rowcolor{lightblue!50} $\Delta$ & \textcolor{Green}{+5.19} & \textcolor{Green}{+12.5} & \textcolor{Green}{+75.62} & \textcolor{Green}{+3.3} & \textcolor{Green}{+1.69} & \textcolor{Green}{+12.81} & \textcolor{Green}{+11.2} & \textcolor{Green}{+15.74} & \textcolor{Green}{+124.46} & \textcolor{Green}{+6.56} & \textcolor{Green}{+9.98} & \textcolor{Green}{+70.96} \\
\cmidrule{2-13}\ding{51} T+E  & 35.46 & 8.69  & 49.87 & 53.94 & 2.29  & 12.55 & 29.62 & 14.63 & 97.57 & 39.67 & 8.54  & 53.33 \\
\rowcolor{lightblue!50} $\Delta$ & \textcolor{Green}{+3.15} & \textcolor{Red}{-3.08} & \textcolor{Red}{-24.44} & \textcolor{Green}{+2.6} & \textcolor{Red}{-0.83} & \textcolor{Red}{-5.06} & \textcolor{Green}{+10.48} & \textcolor{Green}{+0.02} & \textcolor{Green}{+3.02} & \textcolor{Green}{+5.41} & \textcolor{Red}{-1.29} & \textcolor{Red}{-8.83} \\
\bottomrule
\end{tabular}%

  \label{tab:extended_results_scientific}%
\end{table}%

As shown in Table~\ref{tab:extended_results_scientific}, on out-of-distribution scientific reasoning benchmarks, InftyThink+ remains effective but exhibits a more nuanced behavior compared to in-domain mathematical tasks. As shown by the average results, vanilla RL yields only marginal accuracy improvements (+1.67) while substantially increasing inference cost. In contrast, InftyThink+ with task reward (\ding{51} T) achieves a noticeably larger accuracy gain (+6.56 on average), indicating improved generalization of RL-trained reasoning strategies to unseen scientific domains, albeit with increased token usage and latency. When incorporating the efficiency reward (\ding{51} T+E), InftyThink+ preserves most of the accuracy gains while significantly reducing inference cost, suggesting that efficiency-oriented RL is particularly beneficial for controlling overlong reasoning in OOD scientific tasks. Overall, these results highlight that while OOD scientific reasoning is inherently more challenging, InftyThink+ enables RL to generalize more effectively and maintain a better balance between accuracy and efficiency.

\paragraph{Code Reasoning.}
For code reasoning, we evaluate the model on HumanEval and MBPP~\citep{austin2021program, chen2021evaluating, liu2023your}, which collectively test program synthesis, algorithmic problem solving, and general-purpose coding ability across multiple programming tasks. These benchmarks require the model to generate executable code that satisfies functional correctness constraints, often under implicit edge cases and efficiency considerations. As code reasoning differs substantially from mathematical reasoning in both output structure and evaluation criteria, these benchmarks serve as challenging out-of-domain tests, enabling us to examine whether the reasoning strategies learned during training can effectively generalize to programming-centric scenarios.

Table~\ref{tab:extended_results_code} shows that \textit{InftyThink+} achieves a substantially better accuracy--efficiency trade-off than \textit{vanilla}. Without RL, InftyThink+ already reduces average latency from $64.17$ to $26.93$s while improving accuracy. With task-only RL (\ding{51}~T), vanilla obtains limited accuracy gains but incurs a large cost in TOK/LAT, whereas InftyThink+ yields much larger improvements (average $\Delta$ACC $+7.61$, $\Delta\text{ACC}^{+}$ $+6.79$). Adding the efficiency reward (\ding{51}~T+E) preserves most of the accuracy gains (average $\Delta$ACC $+6.48$) while slightly \emph{reducing} TOK and LAT relative to cold start, indicating that the efficiency reward effectively regularizes generation length and improves the Pareto frontier.

\begin{table}[htbp]
  \centering
  \small
  \caption{Evaluation results on out-of-distribution code reasoning benchmarks. The results are obtained by sampling the model 32 times with a temperature of 0.7. ACC stands for average accuracy (\%) on base test case set, $\text{ACC}^{+}$ stands for average accuracy (\%) on extended test case set, TOK stands for average number of generated tokens (K), and LAT stands for average inference time in seconds. \ding{55} denotes the setting with cold start only, without RL.
\ding{51}~T denotes the RL setting where only the task reward is used.
\ding{51}~T+E denotes the RL setting where both the task reward and the efficiency reward are used.}
\begin{tabular}{lrrrrrrrr|rrrr}
\toprule
\multirow{2}[4]{*}{\textbf{RL}} & \multicolumn{4}{c}{\textbf{HumanEval}} & \multicolumn{4}{c|}{\textbf{MBPP}} & \multicolumn{4}{c}{\textbf{Average}} \\
\cmidrule{2-13}      & \multicolumn{1}{c}{\textbf{ACC↑}} & \multicolumn{1}{c}{\textbf{$\text{ACC}^{+}$↑}} & \multicolumn{1}{c}{\textbf{TOK}} & \multicolumn{1}{c}{\textbf{LAT}} & \multicolumn{1}{c}{\textbf{ACC↑}} & \multicolumn{1}{c}{\textbf{$\text{ACC}^{+}$↑}} & \multicolumn{1}{c}{\textbf{TOK}} & \multicolumn{1}{c|}{\textbf{LAT}} & \multicolumn{1}{c}{\textbf{ACC↑}} & \multicolumn{1}{c}{\textbf{$\text{ACC}^{+}$↑}} & \multicolumn{1}{c}{\textbf{TOK}} & \multicolumn{1}{c}{\textbf{LAT}} \\
\midrule
\rowcolor{lightgray!70} \multicolumn{13}{l}{\textit{vanilla}} \\
\midrule
\ding{55} & 57.03 & 52.52 & 6.58  & 65.89 & 48.07 & 41.01 & 5.48  & 62.45 & 52.55 & 46.76 & 6.03  & 64.17 \\
\ding{51} T   & 60.44 & 56.27 & 8.17  & 90.10 & 50.09 & 45.43 & 6.69  & 80.36 & 55.26 & 50.85 & 7.43  & 85.23 \\
\rowcolor{lightblue!50} $\Delta$ & \textcolor{Green}{+3.41} & \textcolor{Green}{+3.75} & \textcolor{Green}{+1.58} & \textcolor{Green}{+24.21} & \textcolor{Green}{+2.01} & \textcolor{Green}{+4.42} & \textcolor{Green}{+1.21} & \textcolor{Green}{+17.91} & \textcolor{Green}{+2.71} &   \textcolor{Green}{+4.09}    & \textcolor{Green}{+1.39} & \textcolor{Green}{+21.06} \\
\midrule
\rowcolor{lightgray!70} \multicolumn{13}{l}{\textit{InftyThink+}} \\
\midrule
\ding{55} & 59.03 & 54.34 & 5.02  & 27.50 & 49.28 & 42.05 & 4.73  & 26.36 & 54.16 & 48.20 & 4.88  & 26.93 \\
\ding{51} T   & 67.70 & 62.60 & 8.21  & 42.22 & 55.83 & 47.38 & 9.27  & 49.19 & 61.77 & 54.99 & 8.74  & 45.71 \\
\rowcolor{lightblue!50} $\Delta$ & \textcolor{Green}{+8.67} & \textcolor{Green}{+8.25} & \textcolor{Green}{+3.19} & \textcolor{Green}{+14.72} & \textcolor{Green}{+6.55} & \textcolor{Green}{+5.33} & \textcolor{Green}{+4.54} & \textcolor{Green}{+22.83} & \textcolor{Green}{+7.61} & \textcolor{Green}{+6.79} & \textcolor{Green}{+3.86} & \textcolor{Green}{+18.78} \\
\cmidrule{2-13}\ding{51} T+E  & 67.42 & 61.91 & 4.66  & 23.9  & 53.85 & 45.92 & 4.64  & 23.62 & 60.63 & 53.91 & 4.65  & 23.76 \\
\rowcolor{lightblue!50} $\Delta$ & \textcolor{Green}{+8.38} & \textcolor{Green}{+7.56} & \textcolor{Red}{-0.36} & \textcolor{Red}{-3.60} & \textcolor{Green}{+4.57} & \textcolor{Green}{+3.87} & \textcolor{Red}{-0.09} & \textcolor{Red}{-2.74} & \textcolor{Green}{+6.48} &   \textcolor{Green}{+5.72}    & \textcolor{Red}{-0.22} & \textcolor{Red}{-3.17} \\
\bottomrule
\end{tabular}%
  \label{tab:extended_results_code}%
\end{table}%

\subsection{Evaluation Dynamics}
\label{appendix:evaluation_results_dynamics}

To better understand how RL shapes the evolution of reasoning ability over time, we analyze the performance trajectories of intermediate model checkpoints throughout the RL training process. Rather than focusing solely on the final converged model, we periodically evaluate model checkpoints on a suite of benchmarks and visualize their performance as training progresses. This allows us to characterize not only the final gains brought by RL, but also the dynamics of learning, e.g., how quickly different capabilities emerge, whether improvements are monotonic, and how stable the training process is across domains. By plotting benchmark performance as curves over RL iterations, we obtain a more fine-grained view of how reasoning accuracy, generalization, and efficiency evolve during training.

\begin{figure}[h]
    \centering
    \includegraphics[width=1\linewidth]{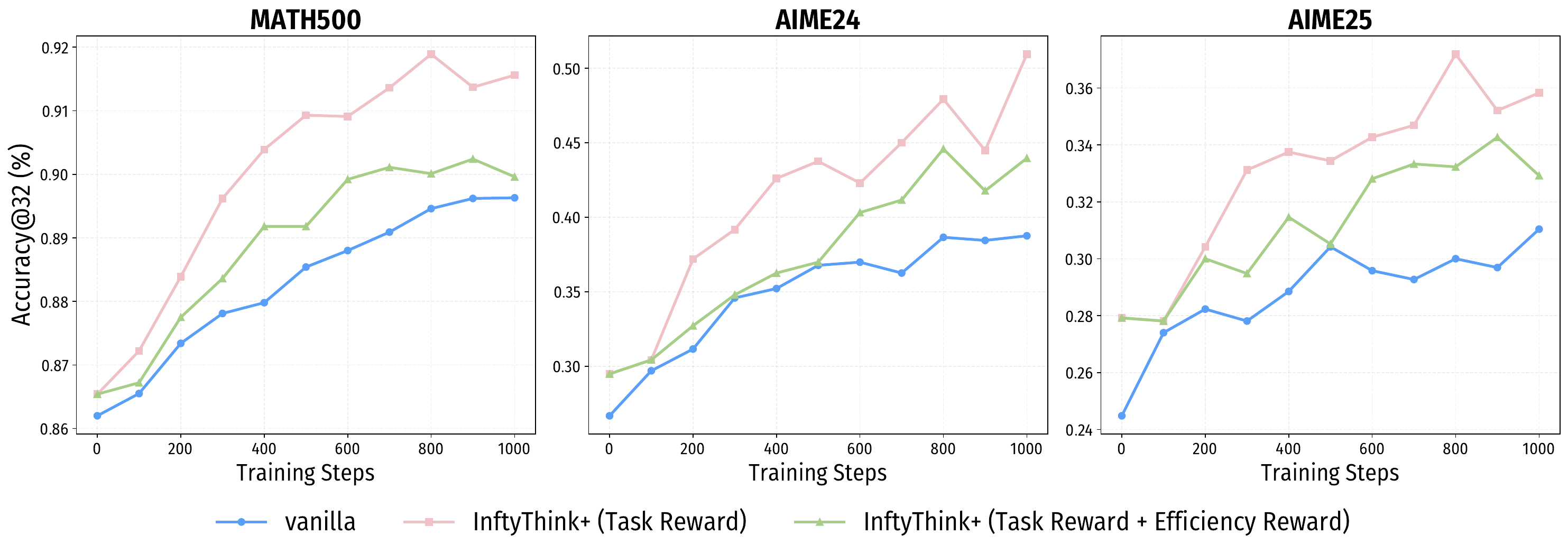}
    \caption{Performance evolution of the actor during RL training under different RL settings.}
    \label{fig:evaluation_results_dynamics_perf}
\end{figure}

As shown in Figure~\ref{fig:evaluation_results_dynamics_perf}, the actor’s performance consistently improves over the course of RL training across all three benchmarks, while exhibiting clear differences among RL strategies. On MATH500, task-reward RL (InftyThink+ T) leads to a rapid and stable accuracy gain, consistently outperforming the vanilla baseline throughout training, indicating effective learning of improved mathematical reasoning strategies. On the more challenging AIME24 and AIME25 benchmarks, this gap becomes more pronounced: vanilla RL shows slower and less stable improvement, whereas InftyThink+ with task reward achieves substantially higher final accuracy. Notably, incorporating the efficiency reward (T+E) slightly slows early-stage accuracy growth but yields smoother training dynamics and competitive late-stage performance, suggesting a better trade-off between reasoning quality and efficiency. Overall, these results demonstrate that InftyThink+ enables more effective and robust policy optimization during RL, particularly on harder reasoning tasks.

\begin{figure}[h]
    \centering
    \includegraphics[width=1\linewidth]{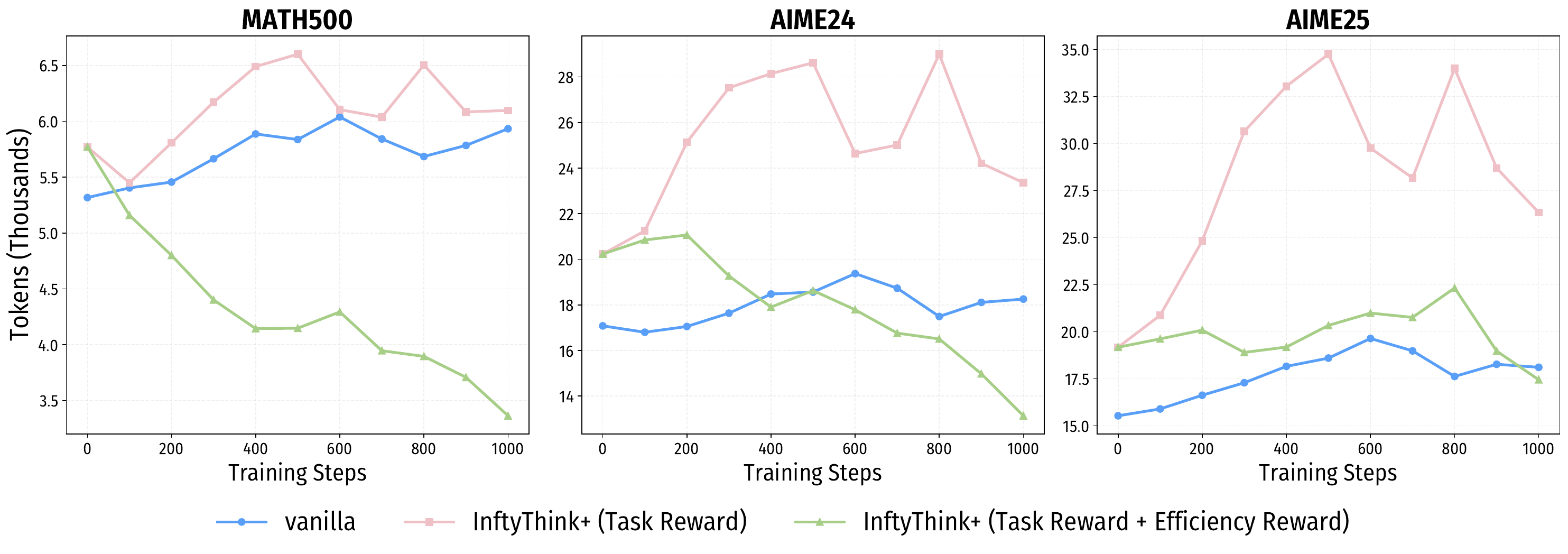}
    \caption{Generated tokens (K) of the actor on benchmarks during RL training under different RL settings.}
    \label{fig:evaluation_results_dynamics_tok}
\end{figure}

As shown in Figure~\ref{fig:evaluation_results_dynamics_tok}, different RL strategies lead to markedly different generation-length dynamics during training. Under task-reward-only RL (InftyThink+ T), the actor consistently increases its generated tokens across all benchmarks, especially on AIME24 and AIME25, indicating a strong tendency toward longer chains-of-thought as a means to improve accuracy. In contrast, incorporating the efficiency reward (T+E) substantially suppresses token growth and even drives a monotonic reduction in generation length on MATH500, while maintaining competitive or improving accuracy as shown in Figure 10. Notably, the vanilla baseline exhibits relatively stable but suboptimal token usage, failing to adapt its reasoning length effectively. These results suggest that efficiency reward enables the actor to internalize a length-aware reasoning policy, achieving better accuracy–efficiency trade-offs by discouraging unnecessary verbosity during RL optimization.

\begin{figure}[h]
    \centering
    \includegraphics[width=1\linewidth]{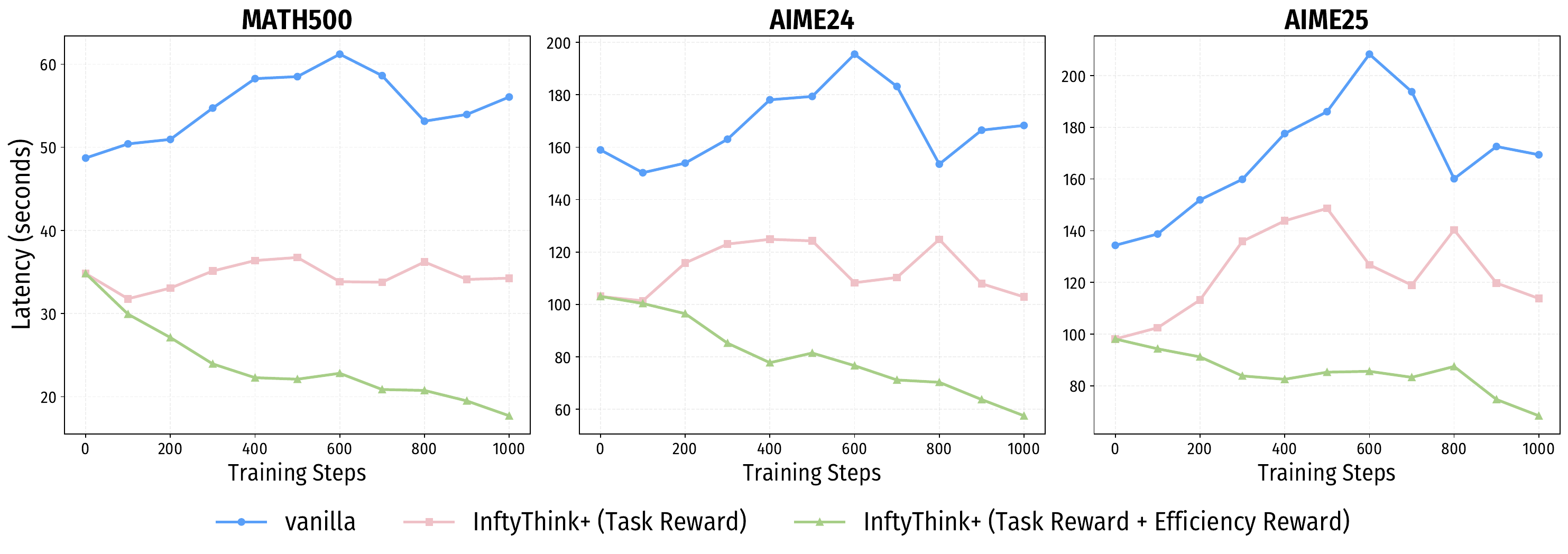}
    \caption{Inference latency (s) of the actor on benchmarks during RL training under different RL settings.}
    \label{fig:evaluation_results_dynamics_lat}
\end{figure}

As shown in Figure~\ref{fig:evaluation_results_dynamics_lat}, inference latency exhibits trends that closely mirror the generation-length dynamics in Figure~\ref{fig:evaluation_results_dynamics_tok}. Under the vanilla setting, latency steadily increases with training steps across all benchmarks, reflecting the model’s inability to control reasoning length during RL optimization. Task-reward-only RL (InftyThink+ T) substantially reduces latency compared to the vanilla baseline, but still shows noticeable fluctuations, especially on the harder AIME24 and AIME25 benchmarks. In contrast, incorporating the efficiency reward (T+E) leads to a consistent and monotonic reduction in latency throughout training, achieving the lowest inference time across all benchmarks. Notably, this latency reduction does not come at the expense of accuracy (Figure~\ref{fig:evaluation_results_dynamics_perf}), indicating that efficiency reward enables the actor to learn more concise yet effective reasoning strategies. Together, these results demonstrate that InftyThink+ with efficiency-aware RL internalizes inference efficiency as a first-class optimization objective, rather than a by-product of shorter generation.

\clearpage

\section{Stability Analysis}
\label{appendix:stability}
In practical RL systems, both training and evaluation inevitably exhibit noticeable fluctuations due to various sources of nondeterminism in the computational environment. Specifically, even when random seeds and algorithmic configurations are fixed, factors such as hardware-dependent execution paths, parallel computation order, and the non-associativity of floating-point arithmetic can introduce subtle numerical perturbations. These perturbations may accumulate over long-horizon RL training and long-context reasoning, leading to divergent optimization trajectories and substantial variance in the performance of intermediate checkpoints. Similarly, during evaluation, reasoning models remain sensitive to decoding order, parallel execution, and low-level kernel implementations, such that repeated evaluations under nominally identical settings may yield inconsistent results. As a consequence, single-run training or evaluation results are often insufficient to faithfully characterize a method’s true performance. Motivated by this observation, we conduct a dedicated stability analysis in this section, examining both training stability (Appendix~\ref{appendix:stability_training}) and evaluation stability (Appendix~\ref{appendix:stability_evaluation}), in order to assess the robustness and reproducibility of our approach under realistic computational perturbations.

\subsection{Training Stability}
\label{appendix:stability_training}
To examine the stability of the RL training dynamics, we conduct a controlled study on the main 1.5B-parameter model under the task-reward-only setting. Specifically, we repeat the RL training process three times with identical configurations, including the same hyperparameters, data pipeline, optimization settings, and hardware environment. By holding all experimental conditions constant, any observed divergence across runs can be attributed to intrinsic nondeterminism arising from the underlying computational system rather than differences in configuration.
As shown in the following figures, we track the evolution of key training signals, including the training reward, policy gradient loss, and entropy, throughout the RL process, and further evaluate intermediate checkpoints on AIME24, AIME25, and MATH500 to monitor the variance in downstream reasoning accuracy. This analysis enables a fine-grained characterization of how stable the optimization trajectory and performance gains are across repeated runs under nominally identical conditions.

\begin{figure}[h]
    \centering
    \includegraphics[width=1\linewidth]{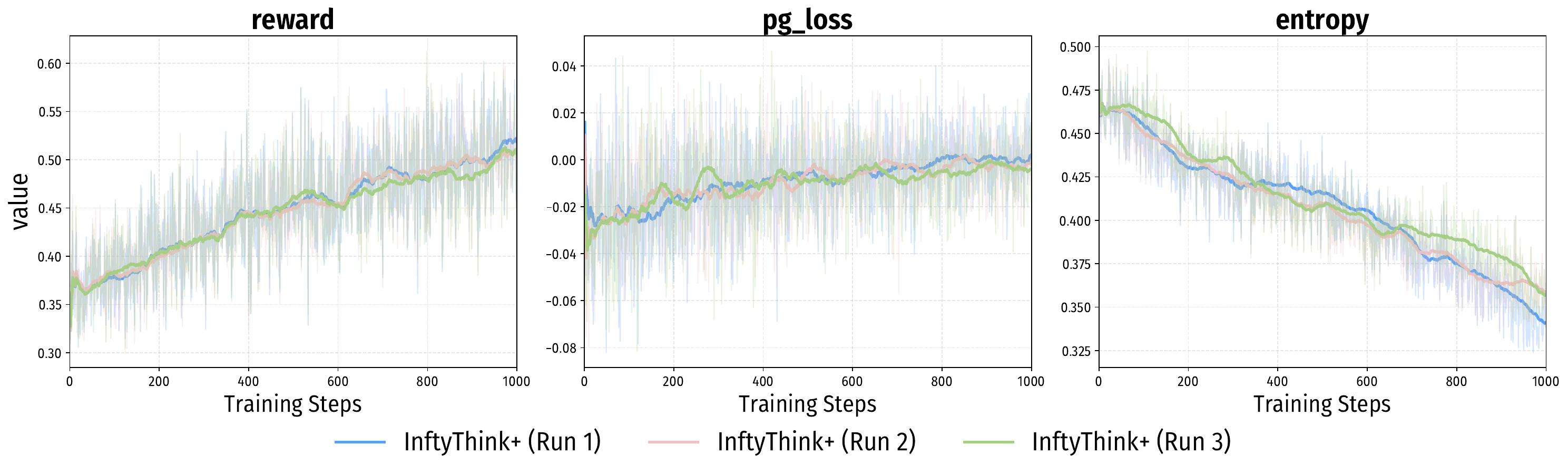}
    \caption{Training dynamics of \methodname under the task-reward-only setting across three independent runs with identical configurations.}
    \label{fig:stab_training_1.5b}
\end{figure}

\paragraph{Consistent training dynamics.} As shown in Figure~\ref{fig:stab_training_1.5b}, despite noticeable short-term fluctuations induced by computational nondeterminism, the three runs exhibit highly consistent global training trends. The training reward increases steadily across all runs, indicating stable policy improvement with only limited variance in convergence speed and final reward magnitude. Similarly, the policy gradient loss follows a closely aligned trajectory, gradually stabilizing as training progresses, suggesting that the optimization dynamics are robust to run-to-run perturbations. The entropy curves consistently decrease over time, reflecting a coherent transition from exploration to more deterministic policies across all runs. Overall, while local noise is unavoidable, the strong alignment in both directionality and scale of these signals demonstrates that the proposed RL training procedure yields stable and reproducible optimization behavior under identical experimental settings.

\begin{figure}[h]
    \centering
    \includegraphics[width=1\linewidth]{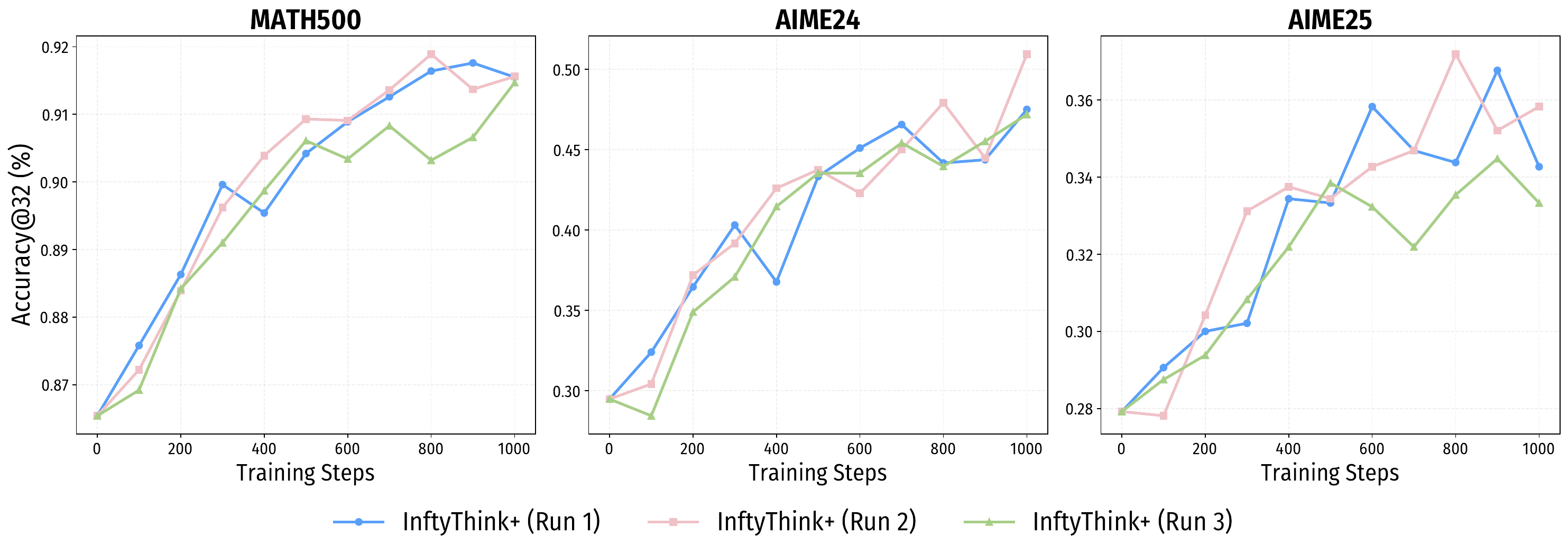}
    \caption{Accuracy of intermediate checkpoints evaluated on MATH500, AIME24, and AIME25 during \methodname RL training under the task-reward-only setting.}
    \label{fig:stab_training_1.5b_perf}
\end{figure}

\paragraph{Consistent evaluation trending.} As shown in Figure~\ref{fig:stab_training_1.5b_perf}, evaluation performance across the three benchmarks is largely consistent across repeated runs, with only moderate run-to-run variance. Notably, the accuracy curves on MATH500 are substantially smoother than those on AIME24 and AIME25. This difference can be attributed to the size of the evaluation sets: MATH500 contains 500 problems, whereas AIME24 and AIME25 each consist of only 30 problems. As a result, accuracy estimates on the AIME benchmarks are inherently more sensitive to small changes in model behavior, where a few solved or unsolved instances can lead to noticeable fluctuations in the reported accuracy. Despite this higher variance, all three runs exhibit aligned upward trends and converge to comparable final performance on AIME24 and AIME25, indicating that the observed improvements are stable rather than artifacts of evaluation noise.

\subsection{Evaluation Stability}
\label{appendix:stability_evaluation}
To assess the stability of the evaluation process, we conduct a controlled study by re-evaluating all checkpoints produced during the task-reward RL training. Specifically, we perform the evaluation procedure three times under identical configurations, including the same decoding parameters, evaluation protocol, and hardware environment. By holding all evaluation conditions constant, this setup isolates the variance introduced by computational nondeterminism during inference and metric computation. As shown in Figure~\ref{fig:stab_eval}, we report the accuracy and latency trajectories on AIME24, AIME25, and MATH500 across training steps, enabling a systematic examination of the consistency of evaluation outcomes across repeated runs.

\begin{figure}[h]
    \centering
    \includegraphics[width=1\linewidth]{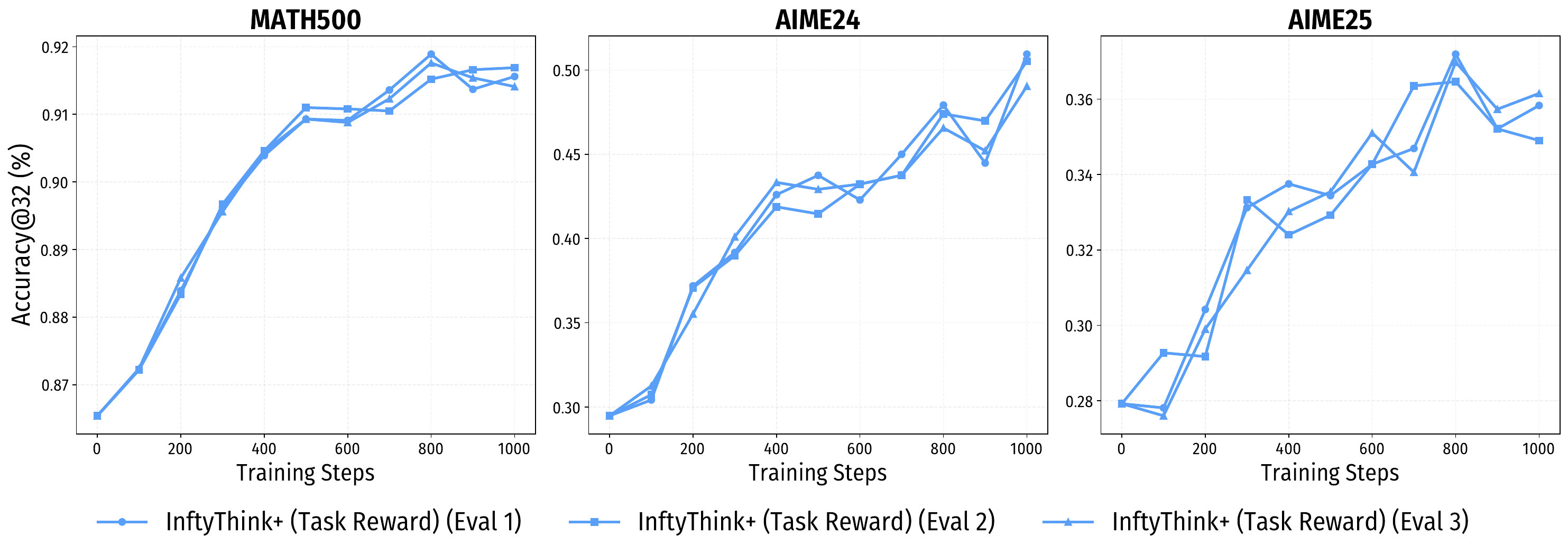}
    \caption{Evaluation stability (accuracy) across repeated runs.}
    \label{fig:stab_eval}
\end{figure}

\paragraph{Accuracy.} Across all three benchmarks, the two evaluation trajectories closely overlap throughout training, indicating that the measured performance trends are highly reproducible under repeated runs.
The agreement is particularly tight on MATH500, while AIME24/25 exhibit slightly larger pointwise fluctuations, which is expected given their much smaller test sizes (30 problems each), making the accuracy estimate more sensitive to minor nondeterminism in inference and metric computation.
Overall, the consistency between the two runs supports that the observed accuracy gains across RL steps reflect genuine model improvements rather than evaluation noise.

\paragraph{Latency.} Figure~\ref{fig:stab_eval_lat} reports the inference latency measured on three benchmarks (MATH500, AIME24, and AIME25) across three repeated evaluation runs using identical checkpoints and evaluation settings. Overall, the latency curves exhibit highly consistent trends across runs, with only minor fluctuations at individual training steps. Importantly, the relative ordering and temporal evolution of latency remain stable throughout training, indicating that the observed variations are dominated by inherent system-level noise (e.g., runtime scheduling and decoding stochasticity) rather than evaluation instability. This consistency across repeated runs suggests that our latency measurements are robust and reproducible, and thus the reported efficiency comparisons and trends can be regarded as reliable. Together with the stable accuracy evaluations reported elsewhere, these results support the credibility of our evaluation protocol.

\begin{figure}[h]
    \centering
    \includegraphics[width=1\linewidth]{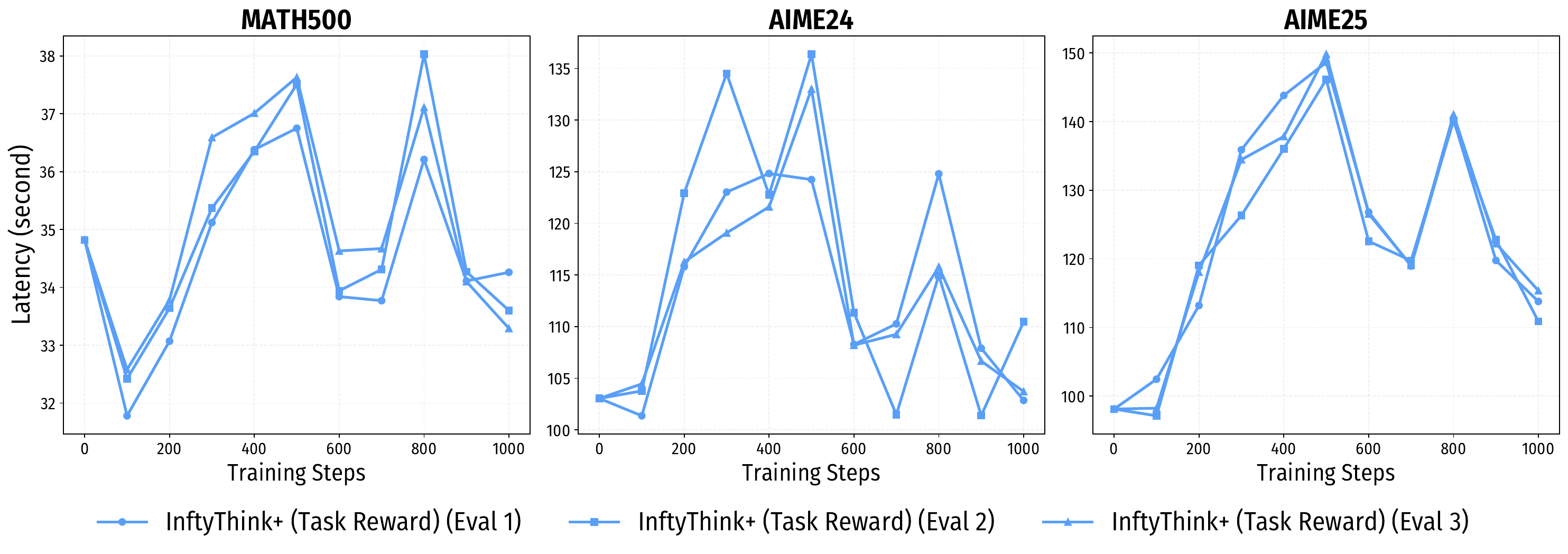}
    \caption{Evaluation stability (latency) across repeated runs.}
    \label{fig:stab_eval_lat}
\end{figure}

\section{Theoretical Analysis}
\label{appendix:theoretical_analysis}

In this section, we provide theoretical justification for the design of InftyThink+. We first analyze why supervised learning is insufficient for iterative reasoning from an information-theoretic perspective (Section~\ref{appendix:information_bottleneck}), then establish the computational benefits of iterative reasoning over vanilla long-context generation (Section~\ref{appendix:complexity}).

\subsection{Information Bottleneck Analysis of Summary Quality}
\label{appendix:information_bottleneck}

A fundamental question in iterative reasoning is: what constitutes a good summary? We formalize this using the Information Bottleneck framework~\citep{tishby2000information}, which reveals why supervised learning is insufficient for learning optimal summaries.

\subsubsection{Problem Setup}

Let $Q$ denote the query, $R_{\leq i} = (R_1, \ldots, R_i)$ the reasoning history up to iteration $i$, $S_i$ the summary at iteration $i$, and $A \in \{0, 1\}$ the correctness of the final answer. A summary must balance two competing objectives: it should be \emph{compressed} to fit within the context budget, yet \emph{informative} enough to support correct subsequent reasoning.

\subsubsection{Optimal Summary via Information Bottleneck}

\begin{definition}[Optimal Summary]
\label{def:optimal_summary_appendix}
The optimal summary $S_i^*$ at iteration $i$ is defined as the solution to the following Information Bottleneck optimization problem:
\begin{equation}
S_i^* = \arg\min_{S_i} \mathcal{L}_{\mathrm{IB}}(S_i)
\end{equation}
where the Information Bottleneck objective is:
\begin{equation}
\mathcal{L}_{\mathrm{IB}}(S_i) = I(S_i; R_{\leq i} \mid Q) - \beta \cdot I(S_i; A \mid Q)
\end{equation}
Here $I(X; Y \mid Z)$ denotes the conditional mutual information between $X$ and $Y$ given $Z$, and $\beta > 0$ is a Lagrange multiplier controlling the tradeoff between compression and informativeness.
\end{definition}

\paragraph{Interpretation.} The two terms in the objective capture the fundamental tradeoff in summarization:

\begin{itemize}
    \item \textbf{Compression term} $I(S_i; R_{\leq i} \mid Q)$: This measures how much information the summary $S_i$ retains about the full reasoning history $R_{\leq i}$, given the query $Q$. Minimizing this term encourages the summary to discard redundant details and retain only essential information, yielding a more compressed representation.
    
    \item \textbf{Informativeness term} $I(S_i; A \mid Q)$: This measures how much information the summary preserves about the final answer correctness $A$, given the query $Q$. Maximizing this term (equivalently, minimizing its negation) ensures that the summary retains information critical for reaching the correct answer in subsequent iterations.
\end{itemize}

The parameter $\beta$ controls the relative importance of these objectives. When $\beta$ is large, the optimization prioritizes answer-relevant information; when $\beta$ is small, it prioritizes compression.

\subsubsection{Limitation of Supervised Learning}

We now establish that supervised fine-tuning cannot optimize the Information Bottleneck objective, providing theoretical justification for the necessity of reinforcement learning.

\begin{proposition}[Limitation of Supervised Learning]
\label{prop:sft_limitation_appendix}
Let $\mathcal{D} = \{(q^{(k)}, r^{(k)}, s^{(k)})\}_{k=1}^N$ be a training dataset where summaries $s^{(k)}$ are generated by an external model $M$ using fixed rules. Let $\pi_{\mathrm{SFT}}$ be the policy obtained by maximizing the log-likelihood objective:
\begin{equation}
\mathcal{L}_{\mathrm{SFT}}(\theta) = \mathbb{E}_{(q, r, s) \sim \mathcal{D}}\left[\log p_\theta(s \mid r, q)\right]
\end{equation}
Then $\pi_{\mathrm{SFT}}$ does not optimize the Information Bottleneck objective in Definition~\ref{def:optimal_summary_appendix}.
\end{proposition}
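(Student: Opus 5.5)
The plan is to characterize the SFT optimum explicitly and then show that it is generically not the Information Bottleneck optimum of Definition~\ref{def:optimal_summary_appendix}. First, I will rewrite the objective. Maximizing $\mathcal{L}_{\mathrm{SFT}}(\theta)$ is equivalent to minimizing $\mathbb{E}_{(q,r)}\bigl[\mathrm{KL}\bigl(p_M(\cdot\mid r,q)\,\|\,p_\theta(\cdot\mid r,q)\bigr)\bigr]$ plus a constant, where $p_M$ is the summarization distribution induced by the external model $M$ and its fixed rules. Assuming a sufficiently expressive policy class, the unique maximizer is therefore $\pi_{\mathrm{SFT}}(s\mid r,q)=p_M(s\mid r,q)$ on the support of $\mathcal{D}$. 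Consequently, the SFT summary variable has exactly the joint law of $M$'s summaries with $(Q,R_{\le i})$, and its values of $I(S_i;R_{\le i}\mid Q)$ and $I(S_i;A\mid Q)$ are those of $M$.

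Second, I will show that neither term depends on the learner. The compression term depends only on the map $p_M(s\mid r,q)$. The informativeness term involves $A$, and $A$ is determined by the downstream continuation of the policy, which SFT never observes: the log-likelihood contains no $A$ and does not depend on the correctness of the final answer. Hence the gradient of $\mathcal{L}_{\mathrm{SFT}}$ carries no signal about $I(S_i;A\mid Q)$. More formally, take two tasks sharing the same $(Q,R_{\le i})$ distribution and the same $M$ but having different conditional laws of $A$ given the summary. SFT returns the same $\pi_{\mathrm{SFT}}$ in both, while the IB minimizer $S_i^*$, which depends on $\beta$ and on $p(A\mid S_i,Q)$, differs. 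So $\pi_{\mathrm{SFT}}$ cannot coincide with the IB optimum in general.

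Third, I will give a concrete witness so that the claim reads as ``does not optimize'' rather than only ``is not guaranteed to.'' I will use a toy model: $R_{\le i}=(X,Y)$ with independent bits, the answer $A$ determined by $Y$, and a fixed rule in which $M$ records only $X$ (or records both). In the first case $I(S;A\mid Q)=0$, while $S=Y$ achieves $I(S;A\mid Q)=H(A)$ at the same compression cost of one bit, so $\mathcal{L}_{\mathrm{IB}}$ is strictly lower for every $\beta>0$. In the second case, $S=Y$ pays one bit less compression cost with no loss in informativeness. Either way $\pi_{\mathrm{SFT}}=p_M$ is strictly suboptimal. By contrast, RL on a reward correlated with $A$ depends on $A$ and can move toward $S_i^*$.

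The main obstacle is the quantifier and well-posedness. As stated, the proposition would be false if $M$ happened to produce IB-optimal summaries. I would therefore make precise that the claim means SFT is not an IB-optimizing procedure: its fixed point is invariant to $p(A\mid\cdot)$ and to $\beta$, so it fails to attain $S_i^*$ except by coincidence. The toy example establishes strict suboptimality for some valid $M$. I also need to be careful that $A$ depends on the policy itself, which makes the IB problem a fixed-point problem. I will handle this by fixing the continuation policy when evaluating $I(S_i;A\mid Q)$.
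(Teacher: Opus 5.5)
Your first two steps are the paper's Steps 1--2: you rewrite $\mathcal{L}_{\mathrm{SFT}}$ as a KL projection onto the data distribution, then observe that nothing in it involves $A$.

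You diverge at the final step. The paper closes with a \emph{distribution-mismatch} argument: even a perfect fit $p_\theta=p_{\mathcal{D}}$ only reproduces summaries suited to $M$, and in general $S_M^*\neq S_\theta^*$ for the policy that must actually continue from them. You argue by invariance instead: the SFT fixed point is unchanged when $\beta$ or the law of $A$ changes, while the IB minimizer is not. You then give an explicit toy instance, $R_{\le i}=(X,Y)$ with $A$ determined by $Y$, in which $p_M$ is strictly IB-suboptimal; I checked both of its cases and they hold.

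Your route buys rigor. It replaces the paper's ill-defined ``$\partial\mathcal{L}_{\mathrm{SFT}}/\partial I(S;A\mid Q)=0$'' and its unquantified ``in general'' with a concrete witness. It also correctly flags that the literal claim fails whenever $M$ happens to output IB-optimal summaries, so the proposition can only mean that SFT is not an IB-optimizing procedure.

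The paper's Step 3 captures a second, policy-dependent reason: a summary must suit the learner's own continuation. That point is what motivates the closed-loop RL remark that follows the proof. Your toy model fixes the continuation and makes $A$ a function of $R_{\le i}$, so it deliberately sets this effect aside. Consequently your argument establishes the $A$-independence point but not the mismatch point.

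One small precision: with finite $N$, the SFT maximizer is the empirical conditional of $\mathcal{D}$, which equals $p_M$ only in the population limit.
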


\begin{proof}
We prove this by showing that the SFT objective is independent of the answer correctness $A$.

\paragraph{Step 1: Characterizing the SFT objective.}
The SFT objective can be rewritten as:
\begin{align}
\mathcal{L}_{\mathrm{SFT}}(\theta) &= \mathbb{E}_{(q, r, s) \sim \mathcal{D}}\left[\log p_\theta(s \mid r, q)\right] \\
&= -H_{\mathcal{D}}(S \mid R, Q) - D_{\mathrm{KL}}\left(p_{\mathcal{D}}(S \mid R, Q) \,\|\, p_\theta(S \mid R, Q)\right)
\end{align}
where $H_{\mathcal{D}}(S \mid R, Q)$ is the conditional entropy of summaries in the dataset, and $D_{\mathrm{KL}}(\cdot \| \cdot)$ denotes the Kullback-Leibler divergence.

Since $H_{\mathcal{D}}(S \mid R, Q)$ is a constant with respect to $\theta$, maximizing $\mathcal{L}_{\mathrm{SFT}}(\theta)$ is equivalent to minimizing:
\begin{equation}
D_{\mathrm{KL}}\left(p_{\mathcal{D}}(S \mid R, Q) \,\|\, p_\theta(S \mid R, Q)\right)
\end{equation}

\paragraph{Step 2: Independence from answer correctness.}
The data distribution $p_{\mathcal{D}}(S \mid R, Q)$ is determined entirely by the external model $M$ and the fixed transformation rules used to construct $\mathcal{D}$. Crucially, this distribution does not depend on the final answer correctness $A$ because:
\begin{enumerate}
    \item The summaries in $\mathcal{D}$ are generated by $M$ based solely on $(Q, R)$, without access to whether the reasoning will ultimately lead to a correct answer.
    \item The transformation rules are deterministic functions of the reasoning text, independent of answer correctness.
\end{enumerate}

Therefore, the SFT objective can be written as:
\begin{equation}
\mathcal{L}_{\mathrm{SFT}}(\theta) = f(p_{\mathcal{D}}, p_\theta)
\end{equation}
where $f$ is some function that does not involve $A$. This means $\frac{\partial \mathcal{L}_{\mathrm{SFT}}}{\partial I(S; A \mid Q)} = 0$, so SFT does not optimize the informativeness term $I(S_i; A \mid Q)$ in the Information Bottleneck objective.

\paragraph{Step 3: Distribution mismatch.}
Even if SFT perfectly fits the data distribution (i.e., $p_\theta = p_{\mathcal{D}}$), the resulting policy may still produce suboptimal summaries for the current policy $\pi_\theta$. This is because the summaries in $\mathcal{D}$ were generated by $M$, whose internal representations and continuation capabilities may differ from $\pi_\theta$. Formally, let $S_M^*$ denote the optimal summary for model $M$ and $S_\theta^*$ denote the optimal summary for policy $\pi_\theta$. In general:
\begin{equation}
S_M^* \neq S_\theta^*
\end{equation}
because the information required for $M$ to continue reasoning correctly may differ from what $\pi_\theta$ requires.

\paragraph{Conclusion.}
Combining Steps 2 and 3, we conclude that $\pi_{\mathrm{SFT}}$ optimizes neither the informativeness term (due to independence from $A$) nor produces summaries aligned with its own continuation capabilities (due to distribution mismatch). Therefore, $\pi_{\mathrm{SFT}}$ does not optimize the Information Bottleneck objective.
\end{proof}

\begin{remark}[How RL Addresses These Limitations]
Reinforcement learning with outcome-based rewards addresses both limitations identified in Proposition~\ref{prop:sft_limitation_appendix}:
\begin{enumerate}
    \item \textbf{Optimizing informativeness}: By using final answer correctness as the reward signal, RL directly optimizes for summaries that lead to correct answers. This implicitly maximizes $I(S_i; A \mid Q)$, as summaries that preserve answer-relevant information will receive higher rewards on average.
    
    \item \textbf{Aligning with policy capabilities}: During RL training, the policy generates its own summaries and must continue reasoning from them. This closed-loop optimization naturally aligns the compression strategy with the policy's continuation capabilities, ensuring $S_\theta^*$ is optimized for $\pi_\theta$ rather than some external model $M$.
\end{enumerate}
\end{remark}

\subsection{Computational Complexity Analysis}
\label{appendix:complexity}

We briefly analyze the computational benefits of InftyThink compared to vanilla long-context reasoning.

\begin{proposition}[Complexity Reduction]
\label{prop:complexity}
Let $L$ denote the total reasoning length under vanilla reasoning, and suppose InftyThink decomposes this into $n$ iterations, each generating at most $\ell$ reasoning tokens and $m$ summary tokens, where $L \approx n\ell$. Under the standard Transformer architecture with $O(L^2)$ self-attention complexity, the computational cost satisfies:
\begin{equation}
\frac{\mathrm{Cost}_{\mathrm{InftyThink}}}{\mathrm{Cost}_{\mathrm{Vanilla}}} \approx \frac{n(\ell + m)^2}{L^2} = \frac{(\ell + m)^2}{n\ell^2}
\end{equation}
When $m \ll \ell$ and $n > 1$, this ratio is strictly less than 1, indicating reduced computational cost.
\end{proposition}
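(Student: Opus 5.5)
We model the cost of generating a sequence of length $T$ with causal self-attention as $c\,T^2$ for a constant $c$ that depends only on the architecture. This holds because producing token $t$ attends over $t$ previous positions, and $\sum_{t=1}^{T} t = T(T+1)/2 = \Theta(T^2)$. Lower-order terms from the query prompt and per-token feed-forward work are linear in $T$, and we drop them. The vanilla cost is then $\mathrm{Cost}_{\mathrm{Vanilla}} \approx c L^2$. The plan is to bound the InftyThink cost iteration by iteration, using the bounded-context structure of the paradigm, and then take the ratio.

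For InftyThink, iteration $j$ has context consisting of the query, the previous summary $s_{j-1}$ (at most $m$ tokens), the reasoning $r_j$ (at most $\ell$ tokens), and the new summary $s_j$ (at most $m$ tokens). The earlier summary is prefilled, and we absorb its cost into the same quadratic bound. The per-iteration sequence length is therefore $\ell + O(m)$. Following the statement, we treat the effective per-iteration length as $\ell + m$, taking the carried-in and emitted summary together as one summary budget. This is the step where care is needed. A strict accounting would give $(\ell+2m)^2$, and I would remark that this only changes lower-order terms in $m/\ell$ and not the conclusion. Summing over the $n$ iterations, and using that the contexts are reset between iterations so there is no cross-iteration attention, gives $\mathrm{Cost}_{\mathrm{InftyThink}} \approx c\,n(\ell+m)^2$.

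Dividing the two costs and substituting $L \approx n\ell$ gives
\[
\frac{\mathrm{Cost}_{\mathrm{InftyThink}}}{\mathrm{Cost}_{\mathrm{Vanilla}}} \approx \frac{n(\ell+m)^2}{L^2} = \frac{n(\ell+m)^2}{n^2\ell^2} = \frac{(1+m/\ell)^2}{n}.
\]
For the strict inequality, the ratio is less than 1 exactly when $(1+m/\ell)^2 < n$. If $n \ge 2$ and $m/\ell < \sqrt{2}-1$, which is what "$m \ll \ell$" is meant to capture, then $(1+m/\ell)^2 < 2 \le n$ and the ratio is below 1. I would state this explicit threshold so the condition in the statement is made precise. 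This also matches the paper's experimental setting, where $\eta = 6\mathrm{k}$ and $\gamma = 1\mathrm{k}$ give $m/\ell \approx 1/6$. The main obstacle is justifying the $\approx$ steps, namely dropping the linear terms and the prompt length and treating $L \approx n\ell$ as exact. I would handle this by stating the result asymptotically as $\ell \to \infty$ with $m/\ell$ and $n$ fixed.
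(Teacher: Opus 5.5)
Your proposal is correct and follows essentially the same route as the paper's proof: bound each iteration's context (query, carried-in summary, new reasoning, new summary), charge a quadratic attention cost per iteration, sum over the $n$ iterations, and divide by $L^2$ with $L = n\ell$. The paper's proof uses the strict $(\ell+2m)^2$ accounting you flag, so its derivation does not literally reproduce the $(\ell+m)^2$ in its own statement, and it simply concludes the ratio is $\approx 1/n$; under that accounting your explicit threshold becomes $m/\ell < (\sqrt{2}-1)/2 \approx 0.21$ rather than $\sqrt{2}-1$, which the $m/\ell \approx 1/6$ setting still satisfies.
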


\begin{proof}
Under vanilla reasoning, the model generates $L$ tokens in a single forward pass. Due to the $O(L^2)$ complexity of self-attention, the total computational cost scales as:
\begin{equation}
\mathrm{Cost}_{\mathrm{Vanilla}} = O(L^2)
\end{equation}

Under InftyThink, the model performs $n$ iterations. At iteration $j$, the context consists of the query (length $|q|$), the previous summary (length $m$), and the current generation (up to $\ell + m$ tokens including reasoning and new summary). The per-iteration cost is:
\begin{equation}
\mathrm{Cost}_{\mathrm{iter}} = O((|q| + m + \ell + m)^2) = O((|q| + \ell + 2m)^2)
\end{equation}

Assuming $|q| \ll \ell$ and summing over $n$ iterations:
\begin{equation}
\mathrm{Cost}_{\mathrm{InftyThink}} = O(n(\ell + 2m)^2)
\end{equation}

Taking the ratio and using $L = n\ell$:
\begin{equation}
\frac{\mathrm{Cost}_{\mathrm{InftyThink}}}{\mathrm{Cost}_{\mathrm{Vanilla}}} = \frac{n(\ell + 2m)^2}{(n\ell)^2} = \frac{(\ell + 2m)^2}{n\ell^2}
\end{equation}

When $m \ll \ell$, we have $(\ell + 2m)^2 \approx \ell^2$, so the ratio simplifies to approximately $1/n < 1$ for $n > 1$.
\end{proof}

\clearpage

\section{Reinforcement Learning without Cold Start}
\label{sec:rl_wo_cold_start}

\paragraph{Why cold start is necessary.}
Although RL can effectively shape long-context reasoning behaviors, starting from a purely pretrained base model often leads to unstable optimization and poor sample efficiency, especially when the target behavior involves a \emph{structured} multi-iteration reasoning protocol.
In InftyThink+, the policy is required to (i) follow a strict iterative format, (ii) produce summaries that serve as valid intermediate-state representations, and (iii) continue reasoning conditioned on compressed context across iterations.
Without an explicit warm-up, the initial policy typically fails to reliably emit well-formed summaries or to align its continuation strategy with the InftyThink+ paradigm; consequently, the reward signal becomes sparse/noisy and the policy-gradient updates are dominated by exploration artifacts rather than meaningful credit assignment.
These issues make a cold-start stage (e.g., a brief SFT warm-up) practically indispensable for InftyThink+, as it anchors the model to the desired trajectory structure and ensures that RL operates on \emph{on-manifold} rollouts.

\paragraph{A fair RL-only baseline without cold start.}
Nevertheless, to ensure a fair comparison and to quantify the role of cold start, we conduct an additional experiment where we remove cold start entirely and apply RL directly on the vanilla model.
This \emph{RL-from-scratch} baseline uses the same training configuration and system environment as our main experiments (e.g., identical rollout setup, reward computation, optimization hyperparameters, and infrastructure), with the sole difference being the absence of cold-start initialization.
We report both the training dynamics, trajectory reward, policy-gradient loss (\texttt{pg\_loss}), and entropy, as well as downstream evaluation on \textsc{MATH500}, \textsc{AIME24}, and \textsc{AIME25}.
Together, these results isolate the effect of cold start and validate our claim that InftyThink+ requires cold start for stable and effective RL, while providing an apples-to-apples RL-only baseline for comparison.

\begin{figure}[h]
    \centering
    \includegraphics[width=1\linewidth]{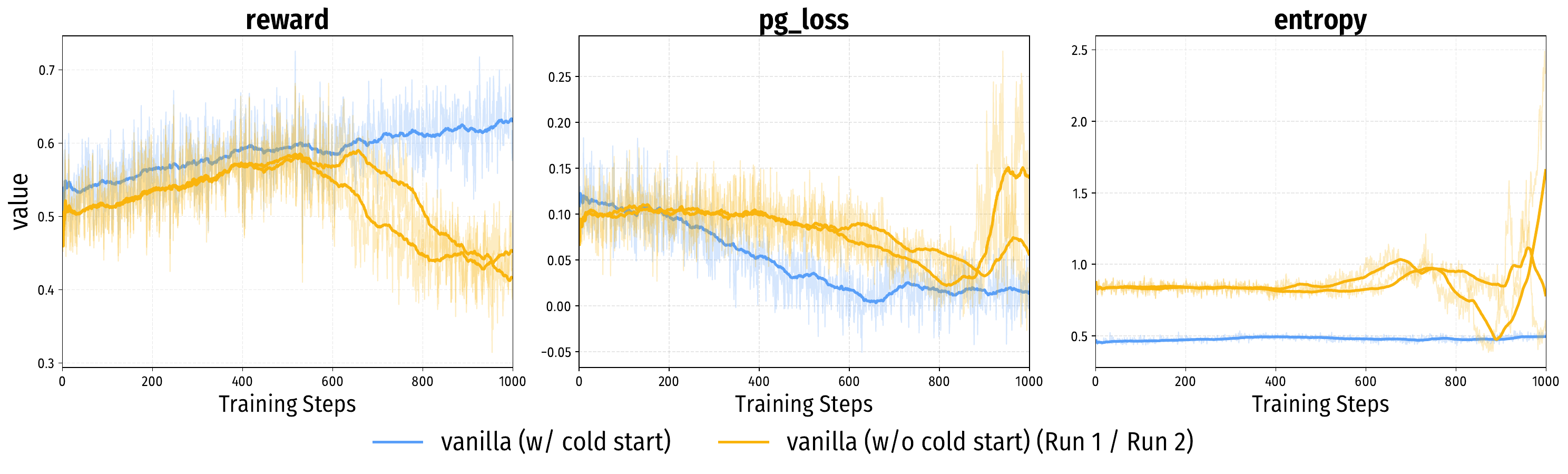}
    \caption{Training dynamics of vanilla RL with vs. without cold start.}
    \label{fig:rl_wo_cold_start_dynamics}
\end{figure}

Figure~\ref{fig:rl_wo_cold_start_dynamics} shows that cold start is critical for stabilizing vanilla RL.
With cold start (blue), training is well-behaved: reward increases steadily, \texttt{pg\_loss} decreases smoothly, and entropy remains low and stable, indicating consistent policy improvement under a controlled exploration regime.
In contrast, removing cold start (yellow) leads to a characteristic \emph{collapse}: reward initially improves but then drops sharply after mid-training, accompanied by a pronounced rise and spike in \texttt{pg\_loss} and a rapid entropy explosion near the end, suggesting that optimization becomes unstable and the policy drifts into high-variance, overly stochastic behaviors.
Importantly, to rule out that this collapse is a one-off artifact, we rerun the same no-cold-start experiment under identical configurations (Run~1/Run~2); both runs reproduce the same failure pattern, confirming that the instability is systematic rather than accidental.

Figure~\ref{fig:rl_wo_cold_start_benchmarks} further corroborates the collapse observed in Figure~15 from a downstream evaluation perspective.
With cold start (blue), performance improves monotonically (or near-monotonically) on all three benchmarks as training proceeds, indicating stable policy refinement.
In contrast, the no-cold-start setting (yellow) shows a deceptive early gain, accuracy rises during the first half of training, but then undergoes a sharp degradation after the collapse point, with substantial drops on \textsc{MATH500} and even more pronounced failures on the harder \textsc{AIME24}/\textsc{AIME25}.
This synchronized late-stage decline aligns with the simultaneous reward drop, \texttt{pg\_loss} spike, and entropy explosion in Figure~\ref{fig:rl_wo_cold_start_dynamics}, suggesting that once training becomes unstable, the policy drifts away from effective reasoning behaviors and loses generalization.
Together, these results highlight that cold start is not merely a convenience but a prerequisite for preventing systematic RL collapse under our setting.

\begin{figure}
    \centering
    \includegraphics[width=1\linewidth]{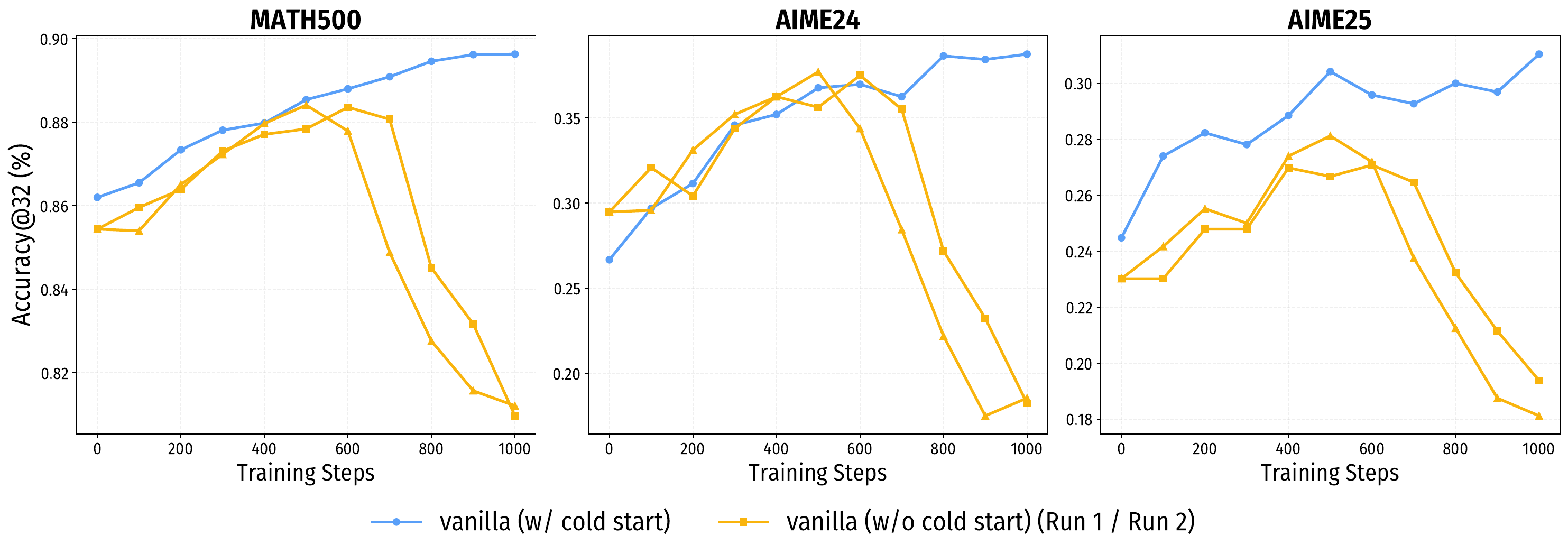}
    \caption{Benchmark performance along training for vanilla RL with vs. without cold start.}
    \label{fig:rl_wo_cold_start_benchmarks}
\end{figure}

\section{Performance across Reasoning Iteration Rounds}
\label{appendix:iter_perf}

Figure~\ref{fig:iter_perf} characterizes how test-time iterative reasoning translates into measurable accuracy gains.
Across all three benchmarks, increasing the number of reasoning rounds consistently improves performance, indicating that the intermediate summaries serve as effective state abstractions that enable progressive refinement rather than redundant continuation.
Notably, the gains are highly front-loaded: the first few rounds yield the largest improvements, after which the curves gradually saturate, reflecting diminishing returns as additional rounds mainly polish already-correct trajectories or revisit similar subgoals.

\begin{figure}[h]
    \centering
    \includegraphics[width=1\linewidth]{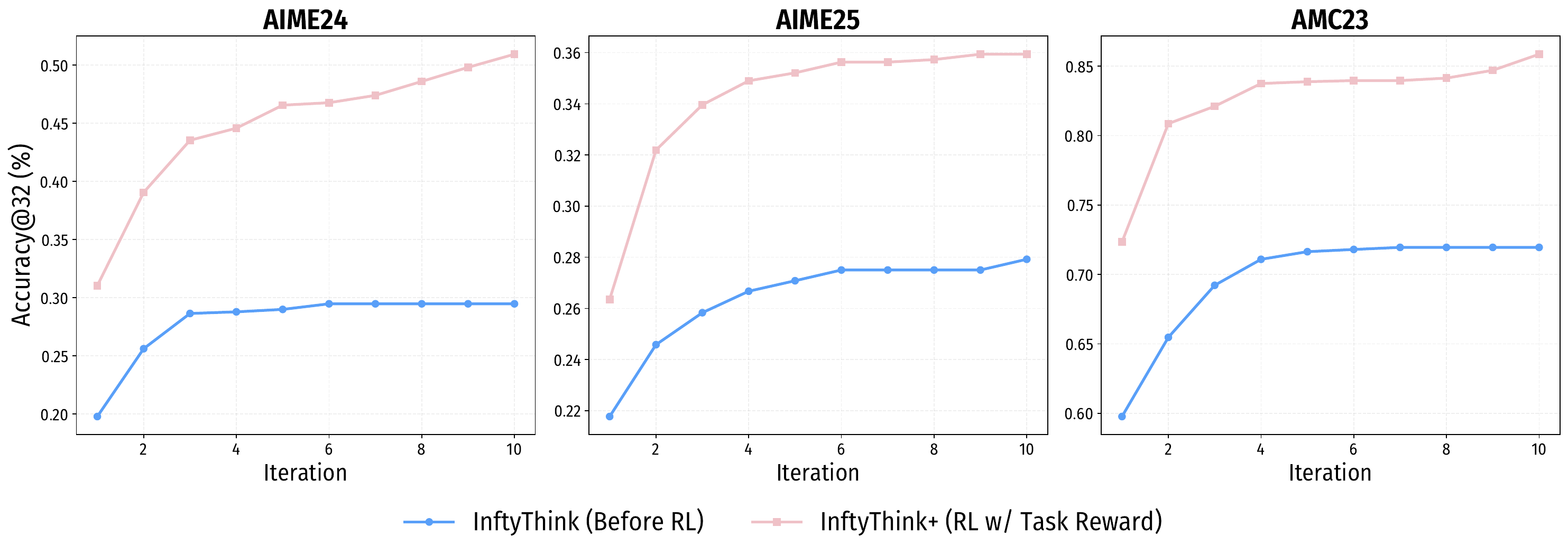}
    \caption{Accuracy across of reasoning iterations.
    We report Accuracy@32 on AIME24, AIME25, and AMC23 when allowing the model to perform up to $j$ InftyThink-style reasoning rounds (x-axis), where each round is connected by the model-generated summary.}
    \label{fig:iter_perf}
\end{figure}

A clear gap emerges between the two settings.
Before RL, the model quickly reaches a plateau (especially on AIME24/25), suggesting that it cannot reliably exploit deeper iterative compute beyond a small number of rounds.
In contrast, InftyThink+ trained with task-reward RL exhibits both \emph{higher} accuracy at every iteration and a \emph{stronger} scaling trend with more rounds, with improvements continuing into later iterations.
This implies that RL not only boosts per-round competence, but also improves the model's ability to utilize additional iteration budget effectively, turning extra rounds into meaningful progress rather than unproductive repetition.
Overall, these results motivate treating the iteration cap as an explicit test-time compute knob: larger budgets provide higher accuracy, but with diminishing marginal utility, and RL makes this knob substantially more effective.

\clearpage

\section{Inference Latency Distribution}
\label{appendix:inference_latency}
Beyond average token budgets, practical deployment depends critically on the \emph{distribution} of per-instance inference time.
Figure~\ref{fig:latency_dist} reports end-to-end latency (in seconds, log-scale) on MATH500, AIME24, and AIME25 for vanilla decoding and InftyThink+ under different training settings.
Across all benchmarks, InftyThink+ consistently shifts the latency distribution left and reduces the heavy right tail, indicating fewer extremely long rollouts.

\begin{figure}[h]
    \centering
    \includegraphics[width=\linewidth]{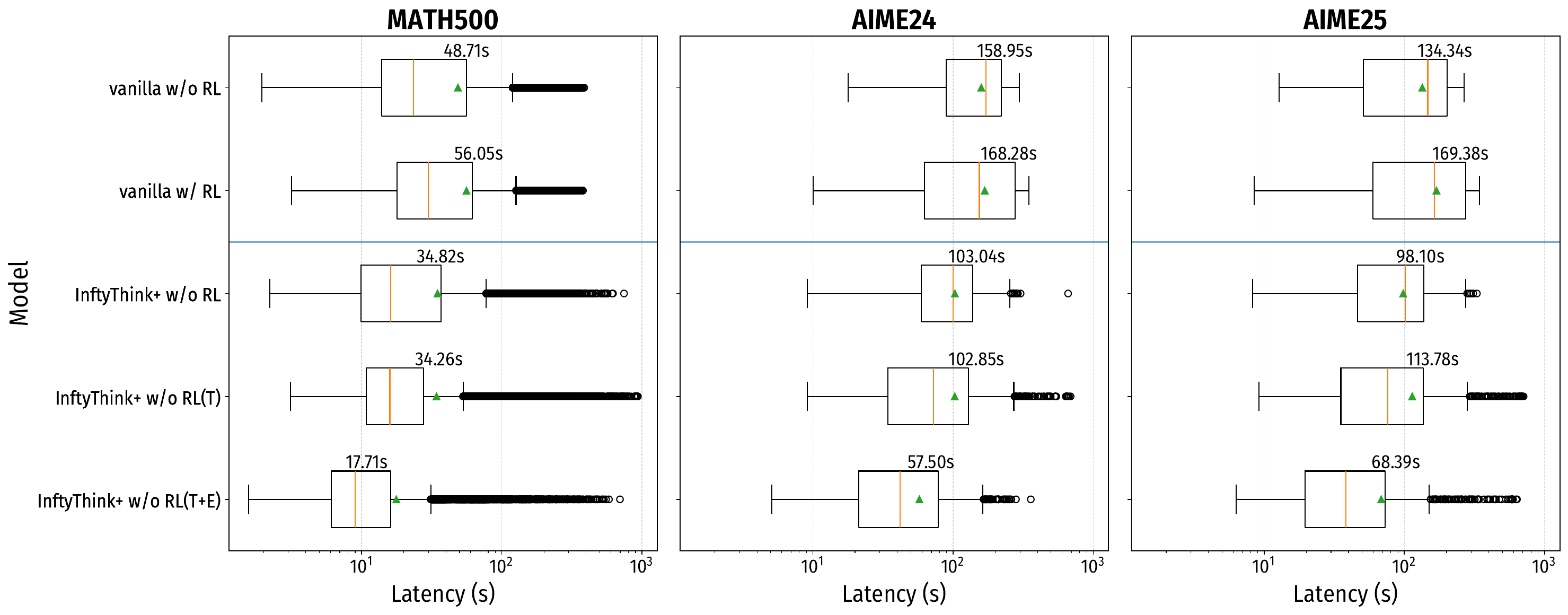}
    \caption{\textbf{Inference latency distribution} on MATH500, AIME24, and AIME25 (log-scale x-axis).
    Each row corresponds to one model variant. Boxes denote the interquartile range (IQR) with the median marked in orange; green triangles indicate the mean; whiskers extend to 1.5$\times$IQR; dots are outliers.
    InftyThink+ shifts the distribution left and suppresses the long-latency tail, while adding the efficiency reward (RL(T+E)) yields the largest latency reduction.}
    \label{fig:latency_dist}
\end{figure}

A notable observation is that vanilla RL slightly \emph{increases} latency (e.g., mean latency from $48.71$s to $56.05$s on MATH500, and from $134.34$s to $169.38$s on AIME25), suggesting that RL alone can encourage longer CoT without explicitly accounting for efficiency.
In contrast, InftyThink+ achieves substantially lower latency even without RL (e.g., $34.82$s on MATH500 and $103.04$s on AIME24), reflecting the benefit of iterative summarization in constraining context growth and stabilizing generation length.
When RL is applied with task reward only (RL(T)), the latency remains comparable to the InftyThink+ w/o RL setting (e.g., $34.26$s vs.\ $34.82$s on MATH500), implying that the InftyThink+ inference format already provides a strong inductive bias toward shorter rollouts.

Finally, incorporating the efficiency reward (RL(T+E)) yields the largest improvement, producing both lower central tendency and a visibly shorter tail.
In particular, RL(T+E) reduces mean latency to $17.71$s (MATH500), $57.50$s (AIME24), and $68.39$s (AIME25), corresponding to $\sim$2.75--3.16$\times$ speedup over vanilla on MATH500, $\sim$2.76--2.93$\times$ on AIME24, and $\sim$1.96--2.48$\times$ on AIME25.
This aligns with our design: smaller iteration counts (and thus higher efficiency reward) are explicitly preferred, yielding faster and more predictable inference.

\section{Hyper-parameter Ablation Study}
\label{appendix:params_ablation}
In this section, we conduct an ablation study of the key hyperparameters in \methodname , aiming to disentangle how each design choice affects both effectiveness and efficiency. Concretely, we vary (i) the maximum number of iterative reasoning rounds $\varphi$, which directly controls the long-horizon depth of InftyThink-style rollouts (Appendix~\ref{appendix:hyp_ablation_varphi}) and (ii) the summarization context window size $\eta$, which determines how many recent tokens are retained when compressing intermediate reasoning states (Appendix~\ref{appendix:hyp_ablation_eta}).

\subsection{Ablation of Iteration Cap Parameter $\varphi$.}
\label{appendix:hyp_ablation_varphi}

The iteration cap $\varphi$ is a central hyperparameter in \methodname , as it directly bounds the maximum number of InftyThink-style reasoning rounds within a single rollout, thereby controlling the effective reasoning horizon and the associated computation budget.
To isolate its impact, we conduct an ablation study under the same setup as our main experiment with DeepSeek-R1-Distill-Qwen-1.5B, while only varying $\varphi \in \{3, 5, 10\}$.
For each setting, we visualize the full training dynamics, including the overall reward, policy-gradient loss (pg\_loss), and entropy, together with InftyThink-specific signals such as the task reward, efficiency reward, and the realized number of turns per rollout (\texttt{num\_turns}).
We further report performance trajectories on MATH500, AIME24, and AIME25 throughout training, enabling a direct comparison of how different iteration budgets shape both optimization behavior and downstream reasoning capability.

\begin{figure}[h]
    \centering
    \begin{subfigure}[t]{\linewidth}
        \centering
        \includegraphics[width=\linewidth]{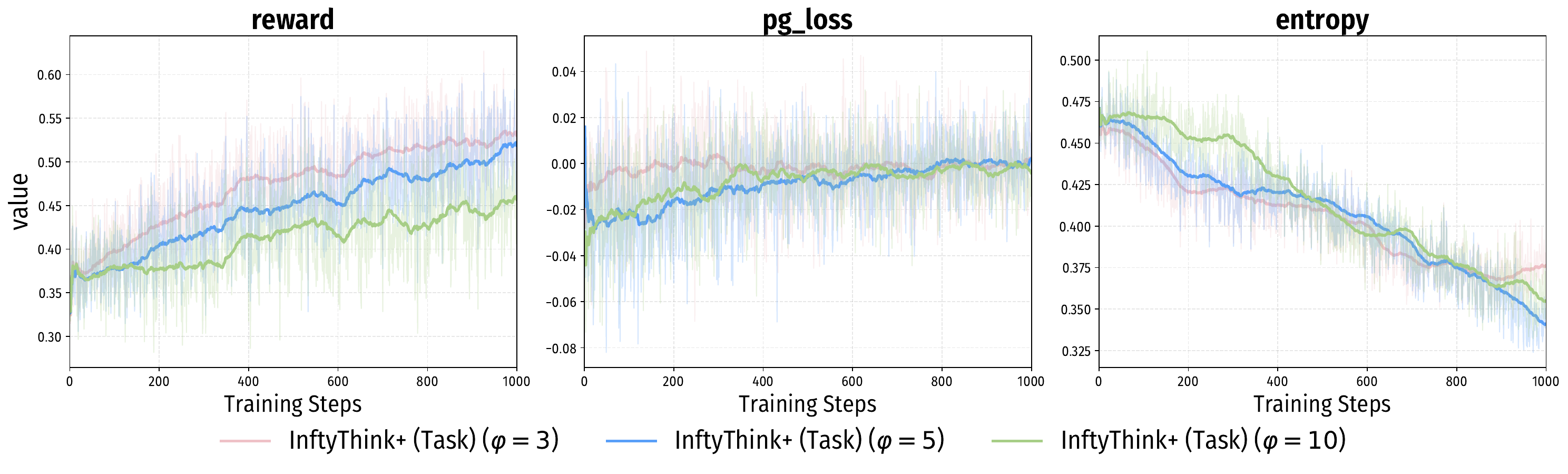}
        \caption{Training dynamics of \methodname with \textbf{task reward only} under different iteration caps $\varphi \in \{3,5,10\}$.}
        \label{fig:training_varphi_T}
    \end{subfigure}

    \vspace{2mm} %

    \begin{subfigure}[t]{\linewidth}
        \centering
        \includegraphics[width=\linewidth]{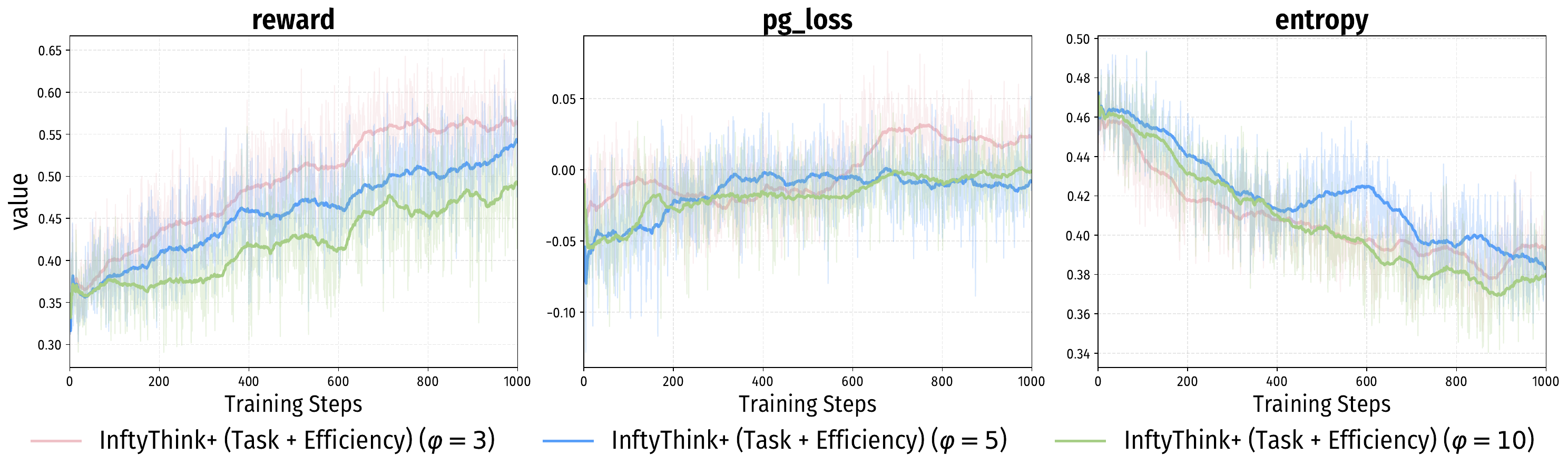}
        \caption{Training dynamics of \methodname with \textbf{task + efficiency} rewards under different iteration caps $\varphi \in \{3,5,10\}$.}
        \label{fig:training_varphi_TE}
    \end{subfigure}

    \caption{Effect of the iteration cap $\varphi$ on RL optimization dynamics in \methodname . The top row uses task reward only, while the bottom row additionally incorporates an efficiency reward. Across settings, we compare reward, \texttt{pg\_loss}, and entropy over training steps.}
    \label{fig:training_varphi}
\end{figure}

\paragraph{RL optimization dynamics.} Figure~\ref{fig:training_varphi} shows that $\varphi$ substantially affects both learning speed and the final optimization level.
In both reward settings, smaller iteration caps consistently yield higher rewards throughout training: $\varphi=3$ converges fastest and reaches the best reward, $\varphi=5$ is a close second, while $\varphi=10$ lags behind with a persistent gap.
This trend is consistent with the intuition that larger $\varphi$ induces longer-horizon rollouts, which increases trajectory variance and makes credit assignment harder, thereby slowing down reward improvement.
The \texttt{pg\_loss} curves further suggest stable optimization across all $\varphi$ values, as the loss magnitude gradually shrinks toward $0$.
However, $\varphi=10$ exhibits more conservative (closer-to-zero) updates for a longer portion of training, aligning with its slower reward gains.
Entropy decreases monotonically in all cases, indicating reduced exploration as training proceeds; notably, the decay is more pronounced for larger $\varphi$ (especially in the \texttt{Task+Efficiency} setting), which is consistent with the policy being increasingly pressured to terminate earlier and avoid unproductive long rollouts.
Overall, these results indicate a clear performance--horizon trade-off: allowing too many iterations ($\varphi=10$) can hurt optimization efficiency and yield diminishing returns, while moderate iteration budgets ($\varphi\in\{3,5\}$) strike a better balance between stable training and effective improvement.

\begin{figure}[h]
    \centering
    \begin{subfigure}[t]{\linewidth}
        \centering
        \includegraphics[width=\linewidth]{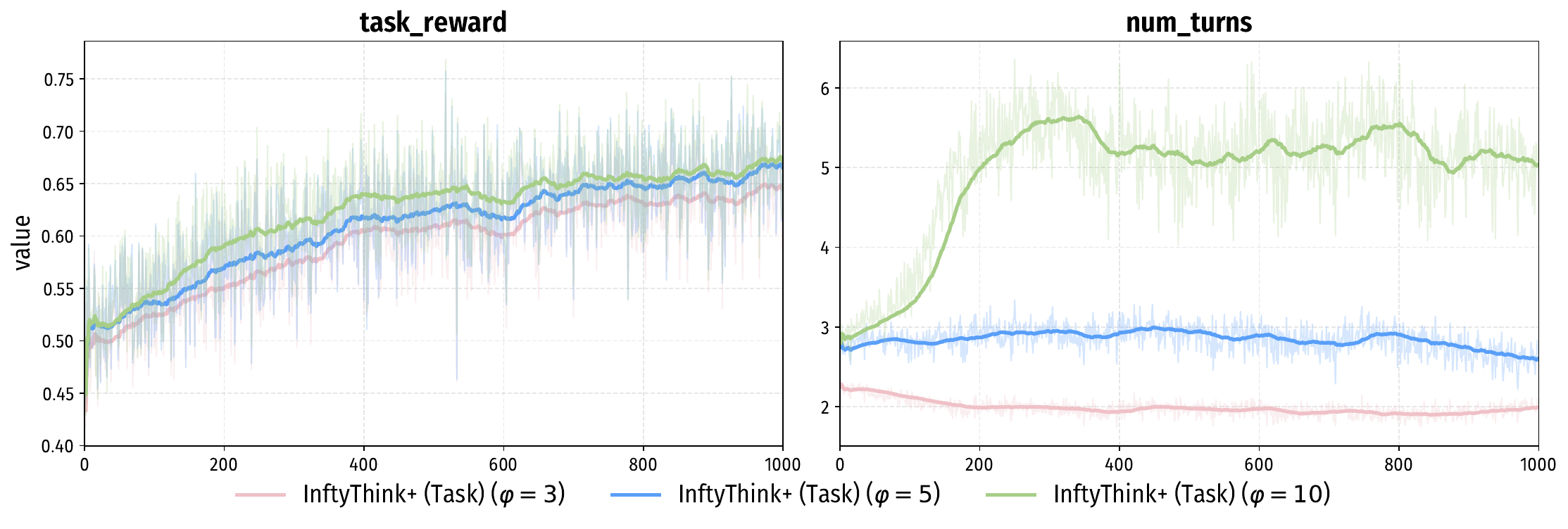}
        \caption{InftyThink-related metrics of \methodname trained with \textbf{task reward only} under different iteration caps $\varphi \in \{3,5,10\}$.}
        \label{fig:varphi_inftythink_metrics_t}
    \end{subfigure}

    \vspace{2mm} %

    \begin{subfigure}[t]{\linewidth}
        \centering
        \includegraphics[width=\linewidth]{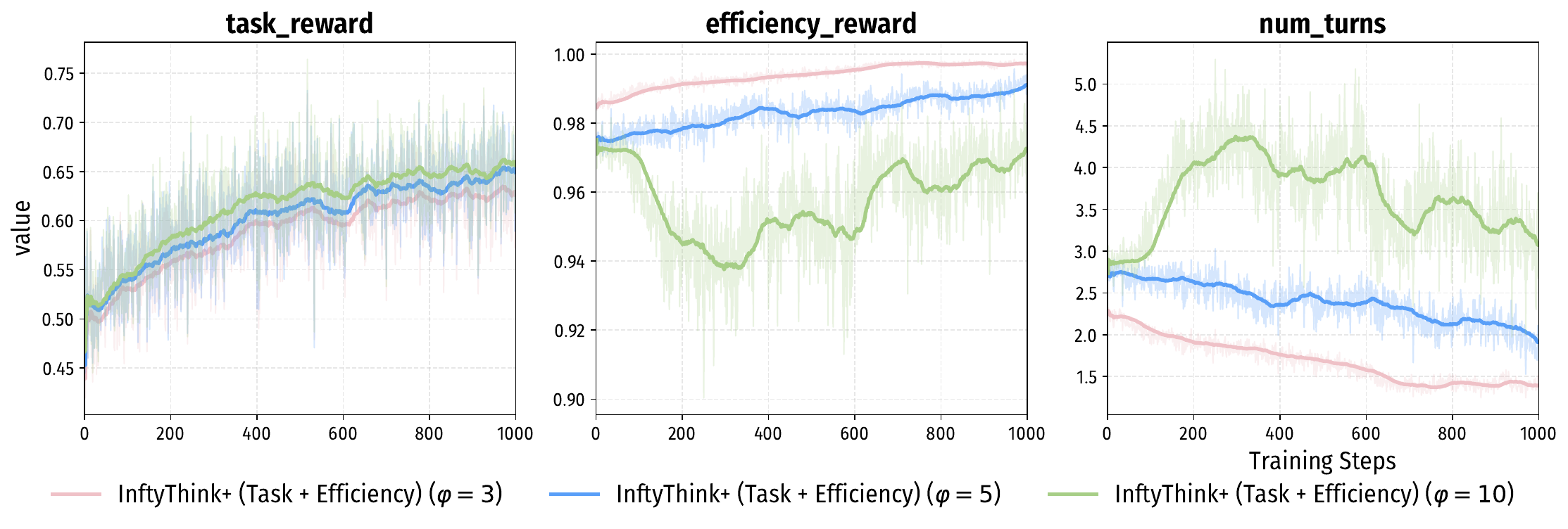}
        \caption{InftyThink-related metrics of \methodname trained with \textbf{task + efficiency} rewards under different iteration caps $\varphi \in \{3,5,10\}$.}
        \label{fig:varphi_inftythink_metrics_te}
    \end{subfigure}

    \caption{Impact of the iteration cap $\varphi$ on InftyThink-specific training metrics. The top row isolates task reward and realized turns, while the bottom row additionally includes the efficiency reward, revealing how $\varphi$ controls the rollout horizon and consequently the attainable efficiency signal.}
    \label{fig:varphi_inftythink_metrics}
\end{figure}

\paragraph{Effect on InftyThink-specific behavior.} 
Figure~\ref{fig:varphi_inftythink_metrics} clarifies the mechanism behind the reward differences observed in the training dynamics.
\textbf{First}, $\varphi$ behaves as an effective \emph{compute budget knob}: the realized number of turns (\texttt{num\_turns}) strongly correlates with the cap.
With task reward only (top), $\varphi{=}3$ quickly saturates at $\approx 2$ turns, $\varphi{=}5$ stabilizes around $\approx 3$ turns, while $\varphi{=}10$ climbs to $\approx 5$--$6$ turns, indicating that the model indeed exploits the larger iteration budget to perform more iterative refinement.
\textbf{Second}, the task reward itself is \emph{not} maximized by smaller $\varphi$.
Across both settings, larger iteration budgets achieve slightly higher \texttt{task\_reward} (most noticeably $\varphi{=}10$), suggesting that allowing additional rounds can improve task-level outcomes by enabling more opportunities for correction and refinement.
This observation is crucial: the higher \emph{overall} reward seen for small $\varphi$ in Figure~\ref{fig:training_varphi} should not be interpreted as better task competence.
\textbf{Third}, when the efficiency reward is enabled (bottom), the trade-off becomes explicit.
Smaller $\varphi$ yields consistently higher \texttt{efficiency\_reward}, because fewer turns translate into shorter rollouts and lower latency; conversely, $\varphi{=}10$ incurs a clear penalty in \texttt{efficiency\_reward} as it produces longer trajectories.
Meanwhile, the \texttt{num\_turns} curves decrease mildly over training (especially for $\varphi{=}3,5$), indicating that the efficiency shaping encourages the policy to learn earlier termination once sufficient progress is made.

Taken together, Figure~\ref{fig:varphi_inftythink_metrics} demonstrates that $\varphi$ primarily governs the \emph{efficiency--performance frontier}: increasing $\varphi$ can modestly improve task reward by enabling deeper iterative reasoning, but this comes at a direct cost in efficiency reward due to longer rollouts.
Therefore, comparing settings solely via the aggregated training reward can be misleading; a fair assessment must jointly consider \texttt{task\_reward}, \texttt{efficiency\_reward}, and the realized \texttt{num\_turns}, and ultimately validate the trade-off on downstream benchmarks.

\begin{figure}[h]
    \centering
    \begin{subfigure}[h]{\linewidth}
        \centering
        \includegraphics[width=\linewidth]{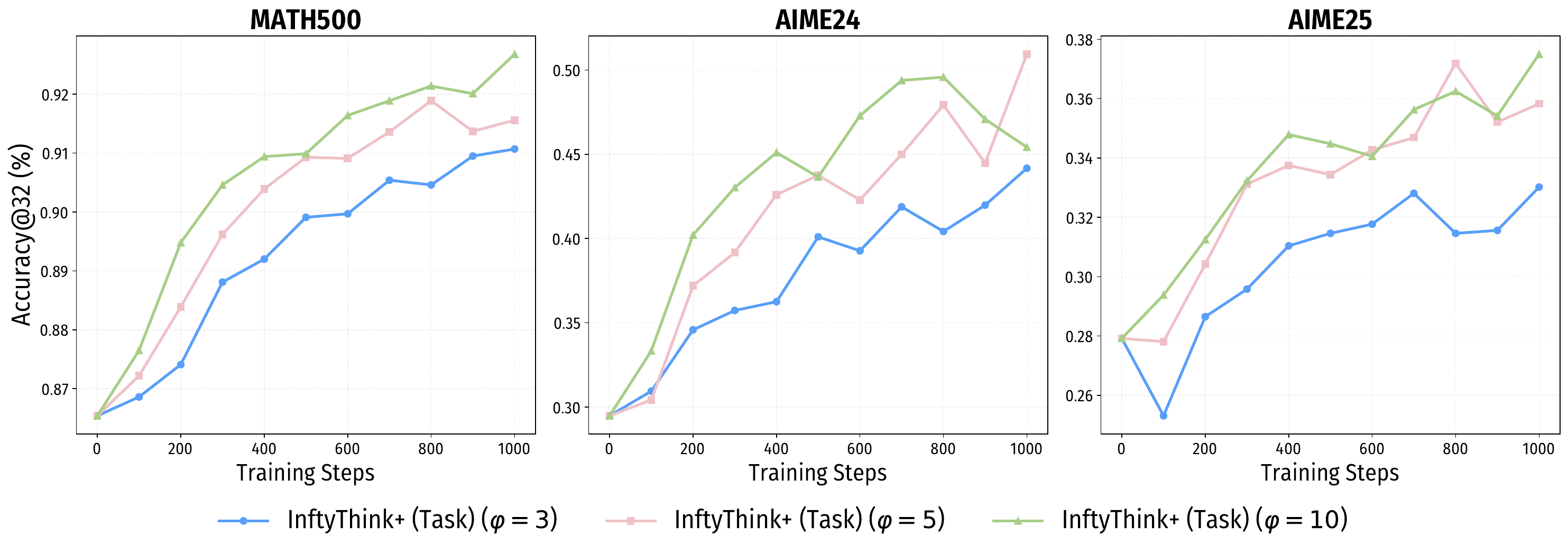}
        \caption{Evaluation trajectories of \methodname trained with \textbf{task reward only} under different iteration caps $\varphi \in \{3,5,10\}$.}
        \label{fig:varphi_perf_t}
    \end{subfigure}

    \vspace{2mm} %

    \begin{subfigure}[h]{\linewidth}
        \centering
        \includegraphics[width=\linewidth]{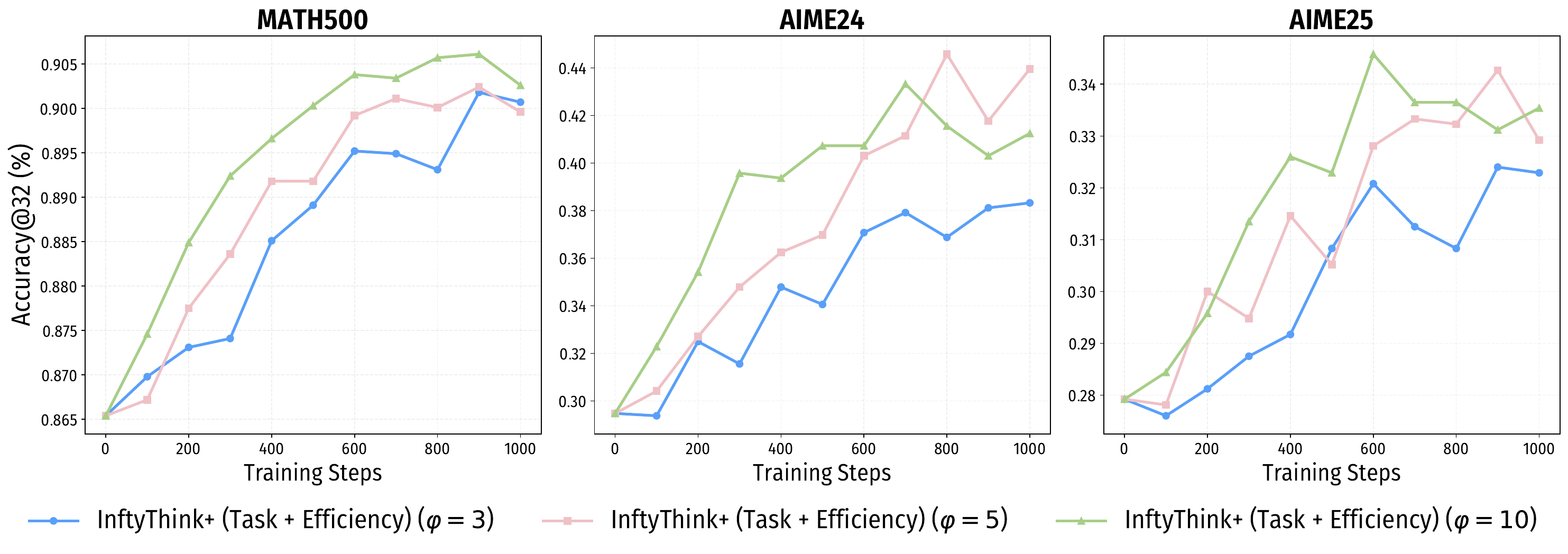}
        \caption{Evaluation trajectories of \methodname trained with \textbf{task + efficiency} rewards under different iteration caps $\varphi \in \{3,5,10\}$.}
        \label{fig:varphi_perf_te}
    \end{subfigure}

    \caption{Impact of the iteration cap $\varphi$ on downstream mathematical reasoning performance throughout RL training. The top row uses task reward only, while the bottom row additionally includes an efficiency reward. Each curve tracks avg@32 accuracy on MATH500/AIME24/AIME25 over training steps.}
    \label{fig:varphi_perf}
\end{figure}

\paragraph{Inference latency and efficiency trade-offs.}
Figure~\ref{fig:varphi_perf} demonstrates that increasing the iteration cap generally improves downstream accuracy, especially on the more challenging AIME benchmarks.
In the \texttt{Task} setting (top), $\varphi{=}10$ consistently achieves the strongest performance on MATH500 and reaches higher accuracy earlier, indicating that a larger iteration budget better supports long-horizon refinement for hard math problems.
The gap is more pronounced on AIME24/AIME25: while $\varphi{=}3$ exhibits a slower and lower improvement curve, both $\varphi{=}5$ and $\varphi{=}10$ steadily climb to higher final accuracies, suggesting that additional reasoning rounds materially benefit difficult contest-style problems.

When enabling the efficiency reward (bottom), all settings incur a mild accuracy drop compared to task-only training, reflecting the expected trade-off between correctness and compute.
Nevertheless, the relative ranking across $\varphi$ remains largely consistent: $\varphi{=}10$ still dominates most of training on MATH500 and remains competitive (often best) on AIME24/AIME25, whereas $\varphi{=}3$ tends to underperform.
Interestingly, the margin between $\varphi{=}5$ and $\varphi{=}10$ becomes smaller under \texttt{Task+Efficiency}, which aligns with the efficiency shaping discouraging excessive turns and partially neutralizing the advantage of a very large iteration cap.

Taken together with Figures~\ref{fig:training_varphi}--\ref{fig:varphi_inftythink_metrics}, these results suggest that (i) smaller $\varphi$ can achieve higher \emph{training reward} mainly due to higher efficiency reward, but (ii) larger $\varphi$ delivers more consistent \emph{task-level gains} on downstream benchmarks by allowing deeper iterative refinement.
Therefore, $\varphi$ primarily controls the accuracy--efficiency frontier: $\varphi{=}3$ favors cheaper rollouts with weaker final accuracy, while $\varphi{\in}\{5,10\}$ provides stronger reasoning performance, with $\varphi{=}5$ often offering a better balance when efficiency shaping is enabled.

\clearpage

\subsection{Ablation of Context Window Size Parameter $\eta$.}
\label{appendix:hyp_ablation_eta}

We study the effect of the context window size parameter $\eta$, which bounds the amount of summarized historical context available to each reasoning iteration.
Analogous to the iteration cap $\varphi$, $\eta$ acts as a fundamental control variable in \methodname, but along a complementary dimension: while $\varphi$ limits \emph{how long} the reasoning process can unfold, $\eta$ determines \emph{how much past information} each iteration can condition on.
We evaluate $\eta \in \{4\text{k}, 6\text{k}, 8\text{k}\}$ under otherwise identical RL settings.

\paragraph{RL optimization dynamics.}
Figure~\ref{fig:eta_training_T} compares reward, policy gradient loss, and entropy across different $\eta$ values.
Larger context windows consistently achieve higher rewards throughout training, with $\eta=8\text{k}$ exhibiting the strongest asymptotic performance.
This mirrors the trend observed when increasing $\varphi$: providing the model with a larger effective reasoning budget allows RL to more effectively convert additional capacity into improved task performance.
Importantly, the policy gradient loss remains well-behaved across all settings, indicating that enlarging the context window does not introduce optimization instability.
Entropy decreases steadily during training, while larger $\eta$ maintains higher entropy at comparable steps, suggesting delayed policy collapse due to an expanded effective action space induced by richer contextual conditioning.

\begin{figure}[h]
    \centering
    \includegraphics[width=1\linewidth]{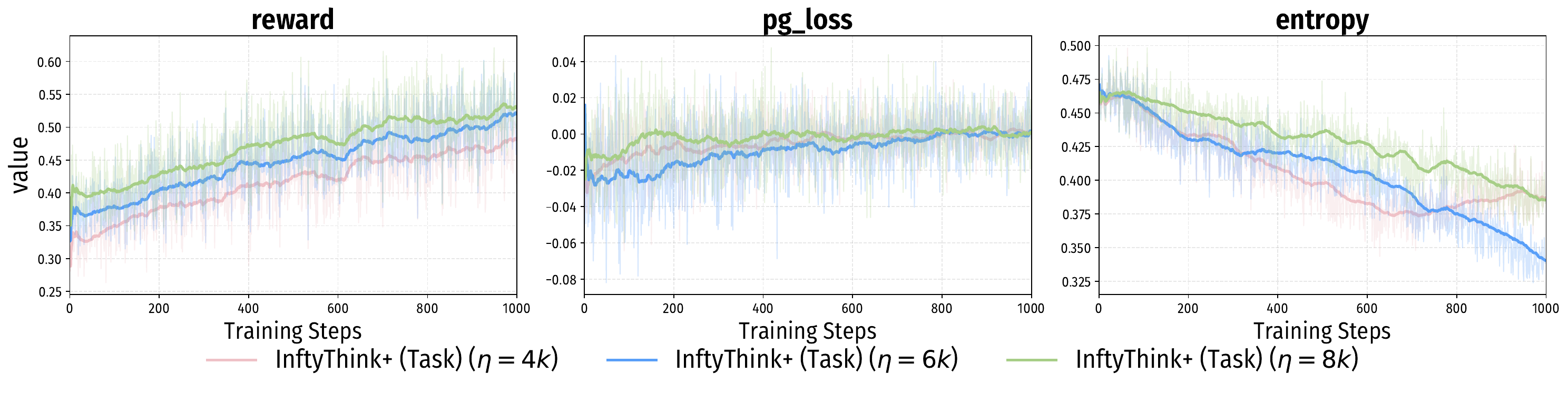}
    \caption{Effect of the context window size $\eta \in \{4k,6k,8k\}$ on RL optimization dynamics in \methodname. Across settings, we compare reward, pg\_loss, and entropy over training steps.}
    \label{fig:eta_training_T}
\end{figure}

\paragraph{Effect on InftyThink-specific behavior.}
Figure~\ref{fig:eta_inftythink_metrics_t} examines task reward and the average number of reasoning turns.
While task reward improves across all settings, larger $\eta$ yields marginally higher final task rewards, indicating improved per-iteration reasoning quality.
More notably, $\eta$ strongly regulates the rollout horizon: increasing $\eta$ substantially reduces the number of reasoning turns.
This behavior is complementary to the effect of $\varphi$.
Rather than explicitly allowing more iterations, a larger context window enables each iteration to absorb more summarized history, resulting in more information-dense reasoning steps and reducing the need for prolonged rollouts.
As a consequence, the attainable efficiency reward becomes increasingly constrained for larger $\eta$, reflecting an inherent trade-off between context capacity and iteration-level efficiency incentives.
\begin{figure}[h]
    \centering
    \includegraphics[width=1\linewidth]{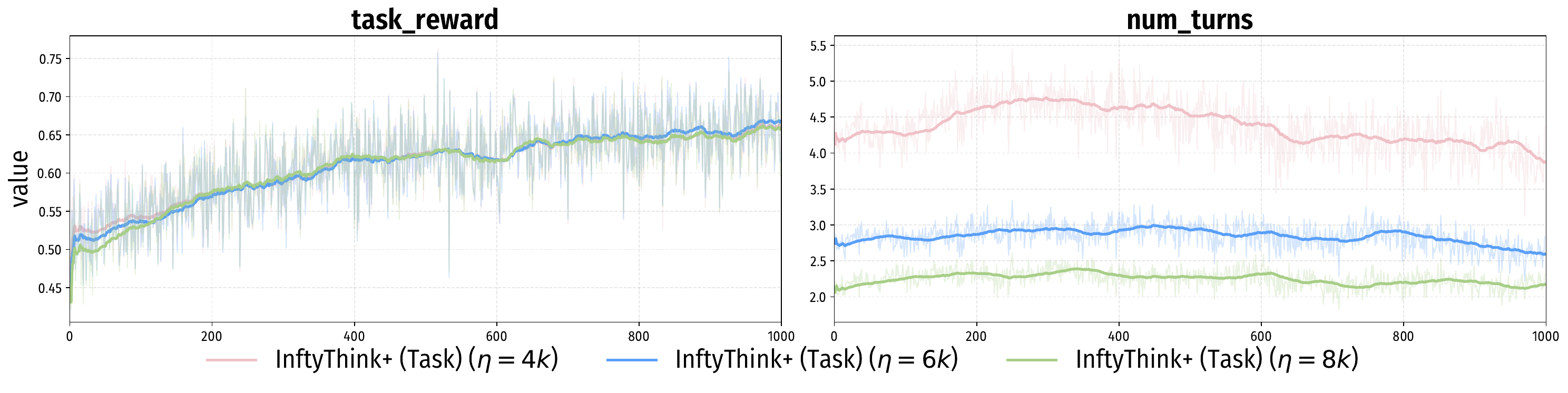}
    \caption{Impact of the context window size $\eta \in \{4k,6k,8k\}$ on InftyThink-specific training metrics, revealing how $\eta$ controls the rollout horizon and consequently the attainable efficiency signal.}
    \label{fig:eta_inftythink_metrics_t}
\end{figure}

\paragraph{Downstream reasoning performance.}
Figure~\ref{fig:eta_perf_t} reports avg@32 accuracy on MATH500, AIME24, and AIME25 throughout RL training.
Consistent with the training dynamics, larger context windows generally lead to stronger and more stable performance, particularly in later training stages.
Smaller $\eta$ settings occasionally achieve competitive early gains but plateau earlier and exhibit higher variance.
This parallels the behavior observed for smaller $\varphi$, reinforcing the conclusion that insufficient reasoning capacity—whether in iteration depth or contextual coverage—limits the model’s ability to sustain long-context reasoning improvements.
\begin{figure}[h]
    \centering
    \includegraphics[width=1\linewidth]{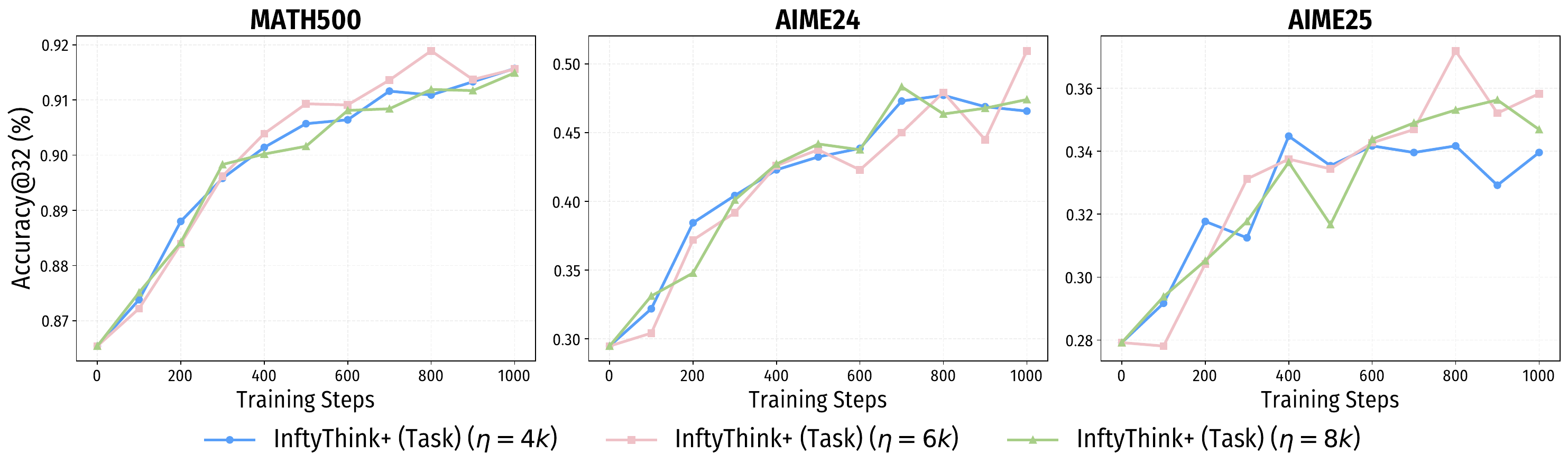}
    \caption{Impact of the context window size $\eta \in \{4k,6k,8k\}$ on downstream mathematical reasoning performance throughout RL training. Each curve tracks avg@32 accuracy on MATH500/AIME24/AIME25 over training steps.}
    \label{fig:eta_perf_t}
\end{figure}

\paragraph{Inference latency and efficiency trade-offs.}
Figure~\ref{fig:eta_perf_lat_t} shows inference latency across benchmarks.
As expected, increasing $\eta$ leads to consistently higher latency due to increased per-step token processing costs.
Compared to $\eta=8\text{k}$, $\eta=6\text{k}$ often achieves comparable accuracy with substantially lower latency, indicating a favorable efficiency--performance trade-off.
Together with the reduced number of reasoning turns observed for larger $\eta$, these results highlight a nuanced interaction between context capacity and system-level efficiency.
\begin{figure}[h]
    \centering
    \includegraphics[width=1\linewidth]{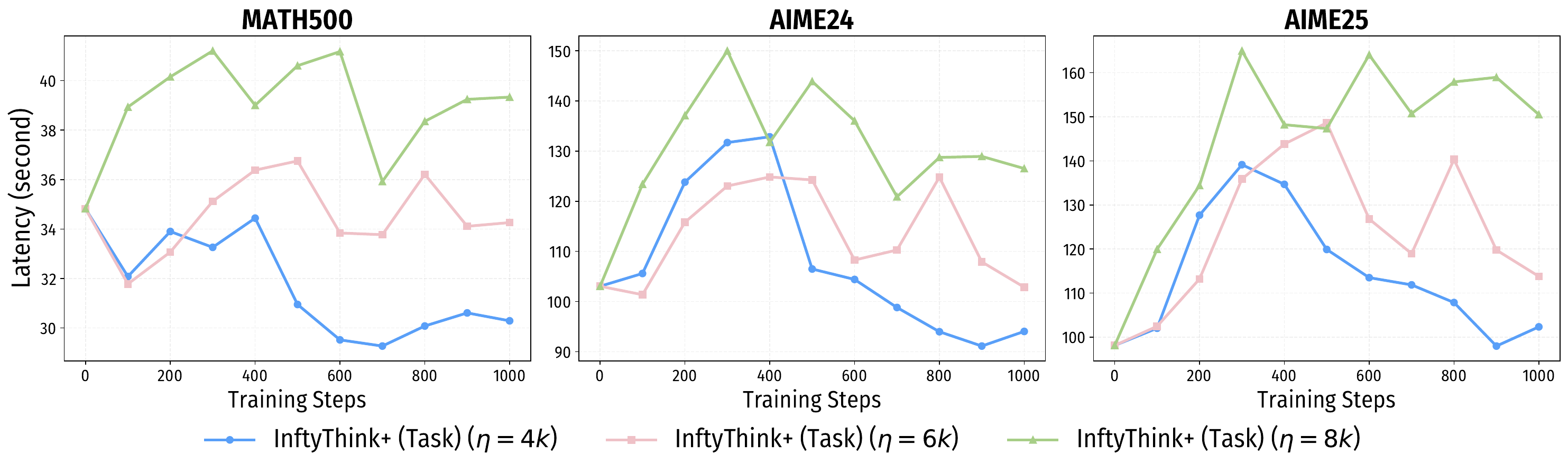}
    \caption{Impact of the context window size $\eta \in \{4k,6k,8k\}$ on inference latency (second) throughout RL training. Each curve tracks avg@32 latency on MATH500/AIME24/AIME25 over training steps.}
    \label{fig:eta_perf_lat_t}
\end{figure}

\paragraph{Discussion.}
Taken together, $\eta$ serves as an explicit test-time and training-time compute knob, complementary to $\varphi$.
While $\varphi$ governs the maximum reasoning depth, $\eta$ controls the breadth of historical context accessible at each step.
Larger values of $\eta$ improve long-context reasoning and final accuracy, but at the cost of increased inference latency and diminished efficiency reward headroom.
In practice, moderate context sizes offer a strong balance between reasoning strength and efficiency, whereas larger windows are most beneficial when maximizing reasoning performance is the primary objective.

\section{Ablation on Efficiency Reward Decay}
\label{appendix:efficiency_decay}

We further study how the choice of the efficiency-reward decay affects the training dynamics of InftyThink+. 
Let $\tau$ denote a reasoning trajectory, and let $K(\tau)$ be the number of reasoning iterations used before the trajectory terminates. 
Given the maximum turn budget $\phi$, we define the normalized turn ratio as
\begin{equation}
    x(\tau) = \frac{K(\tau)-1}{\phi}.
\end{equation}
We subtract one from $K(\tau)$ so that a trajectory completed within the first reasoning iteration receives no efficiency penalty. 
The efficiency reward is applied as a multiplicative factor to the task reward:
\begin{equation}
    R(\tau) = R_{\mathrm{task}}(\tau) \cdot R_{\mathrm{eff}}(\tau),
\end{equation}
which ensures that incorrect trajectories cannot obtain positive reward merely by terminating early. 
We compare three monotonic decay functions for $R_{\mathrm{eff}}(\tau)$:
\begin{align}
    R_{\mathrm{eff}}^{\mathrm{quad}}(\tau) 
    &= 1 - x(\tau)^2, \label{eq:eff-quad} \\
    R_{\mathrm{eff}}^{\mathrm{lin}}(\tau) 
    &= 1 - x(\tau), \label{eq:eff-linear} \\
    R_{\mathrm{eff}}^{\mathrm{log}}(\tau) 
    &= 1 - \log_{2}(1+x(\tau)). \label{eq:eff-log}
\end{align}

\begin{figure}[h]
    \centering
    \includegraphics[width=0.6\linewidth]{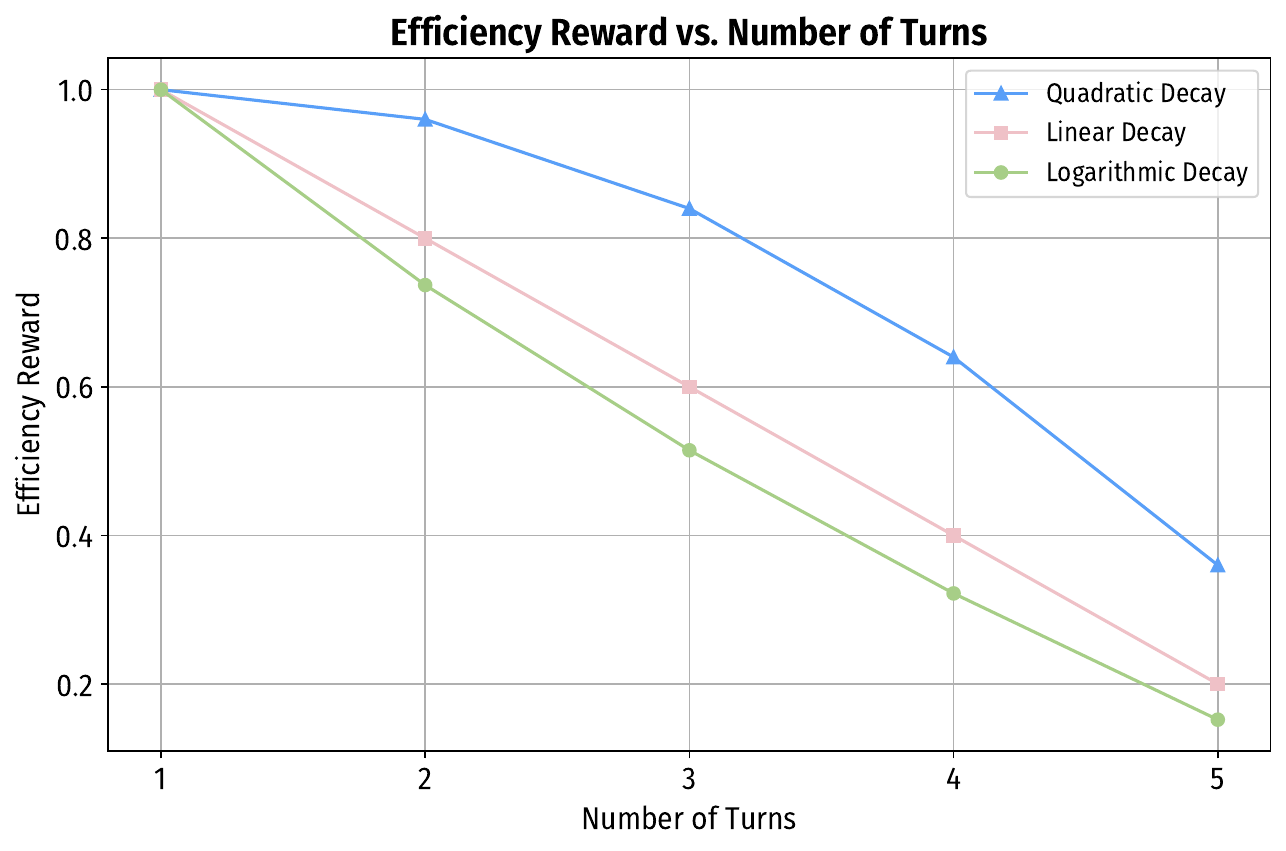}
    \caption{Illustration of how the efficiency reward varies with the number of reasoning iterations in a trajectory. We show the quadratic-style decay used in this paper, together with two comparison variants: linear-style decay and logarithmic-style decay.}
    \label{fig:efficiency_reward_decay}
\end{figure}

All three variants assign the maximum efficiency reward to one-turn trajectories and decrease as more reasoning iterations are used. 
As illustrated in Figure~\ref{fig:efficiency_reward_decay}, the quadratic decay is the most conservative in the early and middle stages of a trajectory, while the logarithmic decay imposes the strongest penalty. 
For $x \in (0,1)$, we have
\begin{equation}
    R_{\mathrm{eff}}^{\mathrm{quad}}(\tau)
    >
    R_{\mathrm{eff}}^{\mathrm{lin}}(\tau)
    >
    R_{\mathrm{eff}}^{\mathrm{log}}(\tau),
\end{equation}
indicating that the quadratic decay encourages efficiency without aggressively suppressing additional reasoning iterations.

Figure~\ref{fig:efficiency_reward_decay_perf} compares the downstream performance of the three decay functions on MATH500, AIME24, and AIME25. 
The quadratic decay consistently yields the most stable training curve and the strongest final performance. 
In contrast, the linear and logarithmic variants can improve performance during the early and middle stages of training, but they become noticeably less stable in the later stage. 
In particular, the linear decay exhibits a sharp performance collapse near the end of training on all three benchmarks, and the logarithmic decay also shows clear degradation on AIME24 and AIME25. 
These results suggest that overly aggressive efficiency pressure may cause the policy to prefer shorter trajectories before sufficient reasoning has been completed.

\begin{figure}[h]
    \centering
    \includegraphics[width=\linewidth]{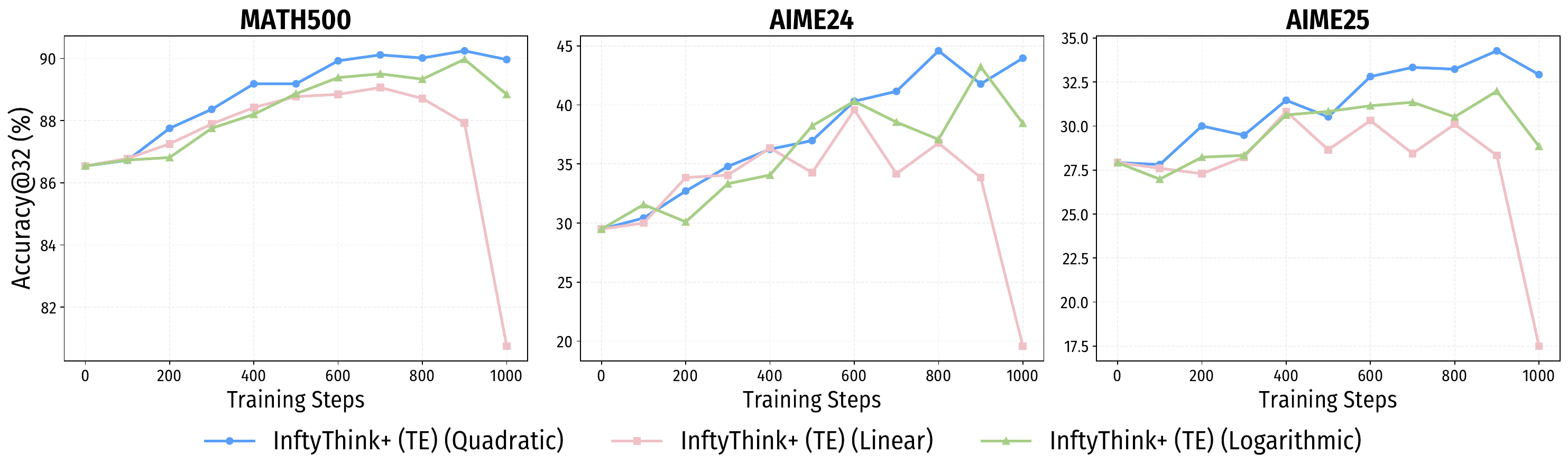}
    \caption{Benchmark performance of InftyThink+ trained with task reward and efficiency reward under quadratic-style decay, linear-style decay and logarithmic-style decay.}
    \label{fig:efficiency_reward_decay_perf}
\end{figure}

We further analyze the training dynamics in Figure~\ref{fig:efficiency_reward_decay_metrics}. 
Under the linear and logarithmic decays, the average number of reasoning turns decreases much faster than under the quadratic decay. 
Although this leads to higher efficiency rewards, it is accompanied by a decline in task reward and benchmark accuracy in the later training stage. 
This indicates that the policy is partially optimizing the efficiency term by prematurely reducing the number of reasoning iterations, rather than by learning more effective reasoning strategies. 
By contrast, the quadratic decay reduces the number of turns more gradually while maintaining a higher and more stable task reward, leading to a better accuracy--efficiency trade-off.

\begin{figure}[h!]
    \centering
    \includegraphics[width=\linewidth]{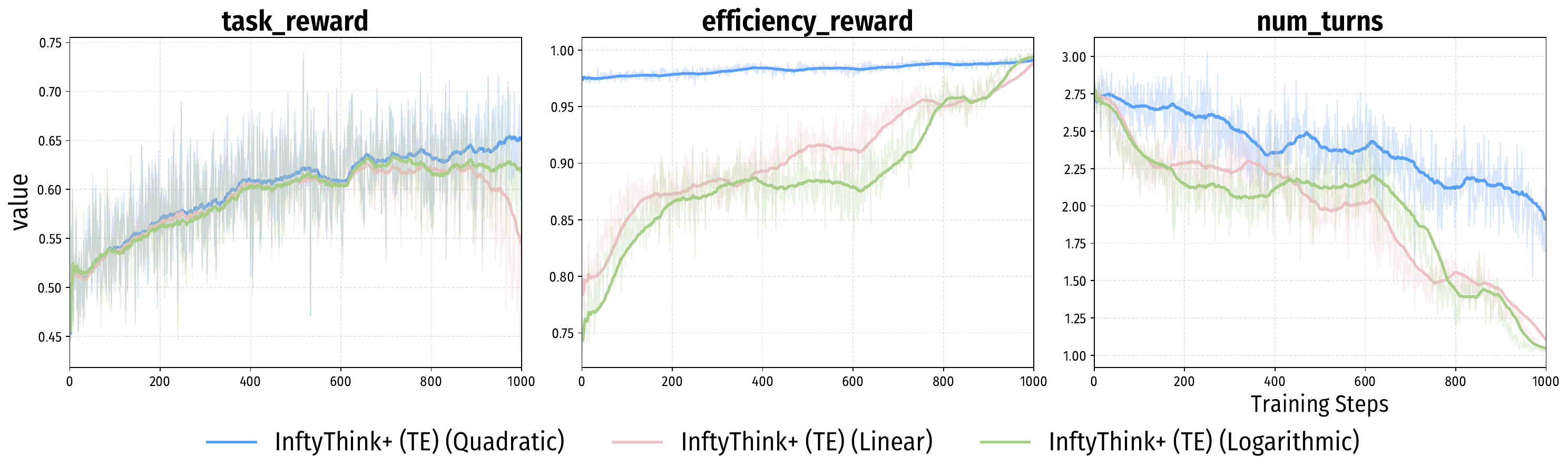}
    \caption{InftyThink-related metrics of InftyThink+ trained with task reward and efficiency reward under quadratic-style decay, linear-style decay and logarithmic-style decay.}
    \label{fig:efficiency_reward_decay_metrics}
\end{figure}

Overall, this ablation shows that the efficiency reward should act as a soft regularizer rather than a dominant optimization target. 
A decay function that is too steep may bias the model toward premature termination and harm reasoning performance. 
The quadratic decay provides a more appropriate inductive bias: it mildly penalizes moderate additional reasoning turns, while still discouraging unnecessarily long trajectories. 
Therefore, we adopt the quadratic efficiency-reward decay in our main experiments.

\section{Discussion: Comparison with Delethink.}
\label{appendix:delethink_comparision}
\citet{aghajohari2025markovian} proposes a Markovian reasoning framework, Delethink, to address the inefficiency of long-chain reasoning in reinforcement learning-based large language models, where computation and memory grow rapidly with reasoning length. To resolve this, Delethink reformulate the reasoning process as a sequence of steps operating on a fixed-size state, decoupling reasoning depth from context length. Delethink introduce an RL environment that segments long reasoning into multiple short blocks, where the model periodically resets its context while preserving essential state information to maintain reasoning continuity. Under this formulation, the model learns to carry out extended reasoning across iterations.

\subsection{Paradigm Design Comparison} 
We first analyze the differences between the InftyThink reasoning paradigm and the Delethink reasoning paradigm, followed by a discussion of their respective strengths and limitations. 
In the following, we provide an overview of the Delethink reasoning paradigm.

\figdelethinkmethod

The primary distinction between InftyThink and Delethink lies in how intermediate information is carried across reasoning iterations. 
InftyThink explicitly generates a summary at each reasoning round; these summaries are represented as textual tokens and are injected as contextual inputs for the subsequent round. 
In contrast, Delethink does not produce explicit summaries. Instead, it directly reuses the trailing segment of reasoning tokens from the current iteration as the context for the next iteration.

Formally, DeleThink introduces three hyperparameters: the single-iteration reasoning budget $\mathcal{C}$, the number of tokens $m$ retained for context propagation, and the maximum number of reasoning iterations $\mathcal{I}$. 
For the first reasoning iteration ($i=1$), the model takes the query $q$ as input and performs reasoning with the maximum generation length constrained to $\mathcal{C}$, producing the first reasoning segment $r_1$, which can be expressed as:
\begin{equation}
r_1 \sim \pi_\theta(\,\cdot \mid q;\ \texttt{max\_tokens}=\mathcal{C}).
\end{equation}
For subsequent reasoning iterations, the model takes the last $m$ tokens from the previous round as its context and continues the reasoning process based on this truncated history, until it autonomously produces a \emph{conclusion} and the \texttt{eos} token.
\begin{equation}
r_i \sim \pi_\theta(\,\cdot \mid q,r_{i-1}[\text{-m:}];\ \texttt{max\_tokens}=\mathcal{C}).
\end{equation}

We compare InftyThink and Delethink from the perspectives of state representation, cross-iteration information flow, and training–inference behavior. Each paradigm exhibits complementary strengths.
Delethink offers several advantages primarily stemming from its minimalistic and implicit state transition design.
\begin{itemize}
    \item \textbf{Delethink adopts a more Markovian formulation of the reasoning process.} By retaining only a short suffix of the reasoning tokens from the previous iteration as the context for the next one, each reasoning step depends on a compact, local state. This design better aligns with the Markov decision process assumption commonly used in reinforcement learning, which can facilitate more stable policy optimization and value estimation.
    \item \textbf{Delethink avoids the potential information bias introduced by explicit summaries.} In InftyThink, summaries are generated via lossy compression and may omit critical details or introduce abstraction noise. Delethink directly reuses raw reasoning tokens, thereby eliminating errors induced by imperfect summary generation and preventing the accumulation of summary-level distortions.
    \item \textbf{Delethink significantly simplifies the reasoning pipeline.} It removes the need for an auxiliary summarization module, as well as the associated prompt engineering, length constraints, and quality control mechanisms. This simplification reduces system complexity and lowers the engineering overhead in large-scale training and deployment. Besides, Delethink does not require the cold start phase.
    \item \textbf{Delethink preserves stronger local continuity in the reasoning trajectory.} By directly extending the tail of the previous reasoning, it maintains syntactic and semantic coherence at the token level, resulting in a more natural continuation of thought without abrupt abstraction shifts.
    \item \textbf{Delethink can reduce inference overhead in practice.} Since it \textit{does not generate additional summary tokens}, it avoids extra generation steps, which may lower overall token usage and latency, especially in settings where summary generation requires retries or strict length control.
\end{itemize}

In contrast, InftyThink excels in scenarios that require explicit information structuring, long-range dependency preservation, and controllability.
\begin{itemize}
    \item \textbf{InftyThink provides an explicit and interpretable mechanism for cross-iteration information transfer.} By generating a textual summary at the end of each reasoning iteration and using it as the primary context for the next step, InftyThink maintains a clear, high-level representation of the current progress. This explicit state is easier to analyze, debug, and regulate than implicitly truncated reasoning tokens.
    \item \textbf{InftyThink exhibits stronger long-range dependency retention.} Summaries serve as high-level abstractions that selectively preserve globally important information, mitigating the gradual information loss caused by repeatedly truncating token-level reasoning. This makes InftyThink more suitable for tasks requiring global consistency across many reasoning steps.
    \item \textbf{The summarization mechanism in InftyThink acts as an effective noise filter.} Raw reasoning traces often contain redundant explorations or incorrect intermediate branches. By compressing reasoning into summaries, InftyThink suppresses such noise and allows subsequent iterations to condition primarily on validated and task-relevant information, reducing error propagation.
    \item \textbf{InftyThink is particularly well-suited for complex and long-horizon reasoning tasks}, such as mathematical and logical reasoning. The summary at each iteration functions as a semantic milestone, helping the model maintain a clear notion of global objectives and progress rather than being overwhelmed by low-level token details.
    \item \textbf{Explicit summaries provide a natural interface for controllability and extensibility.} They enable the incorporation of length constraints, structured formats, external verification, and reward shaping in both SFT and RL settings, allowing fine-grained control over reasoning behavior during training.
    \item \textbf{The ``reason–summarize–reason'' loop in InftyThink closely mirrors human problem-solving strategies}, where intermediate conclusions are periodically abstracted before further reasoning. This facilitates reasoning at a higher semantic level rather than relying solely on local token continuation.
\end{itemize}

\subsection{Experimental Comparison}
Since our experiments adopt the same base model, training configuration, and dataset as ~\citet{aghajohari2025markovian}, we can directly conduct an apples-to-apples comparison in the RL setting. DeleThink reports checkpoint-level evaluation results on AIME24 and AIME25; therefore, we align our evaluation protocol accordingly and compare the two methods under identical conditions. The resulting performance curves are summarized in Figure~\ref{fig:delethink_comparison}.

\begin{figure}[h]
    \centering
    \includegraphics[width=1\linewidth]{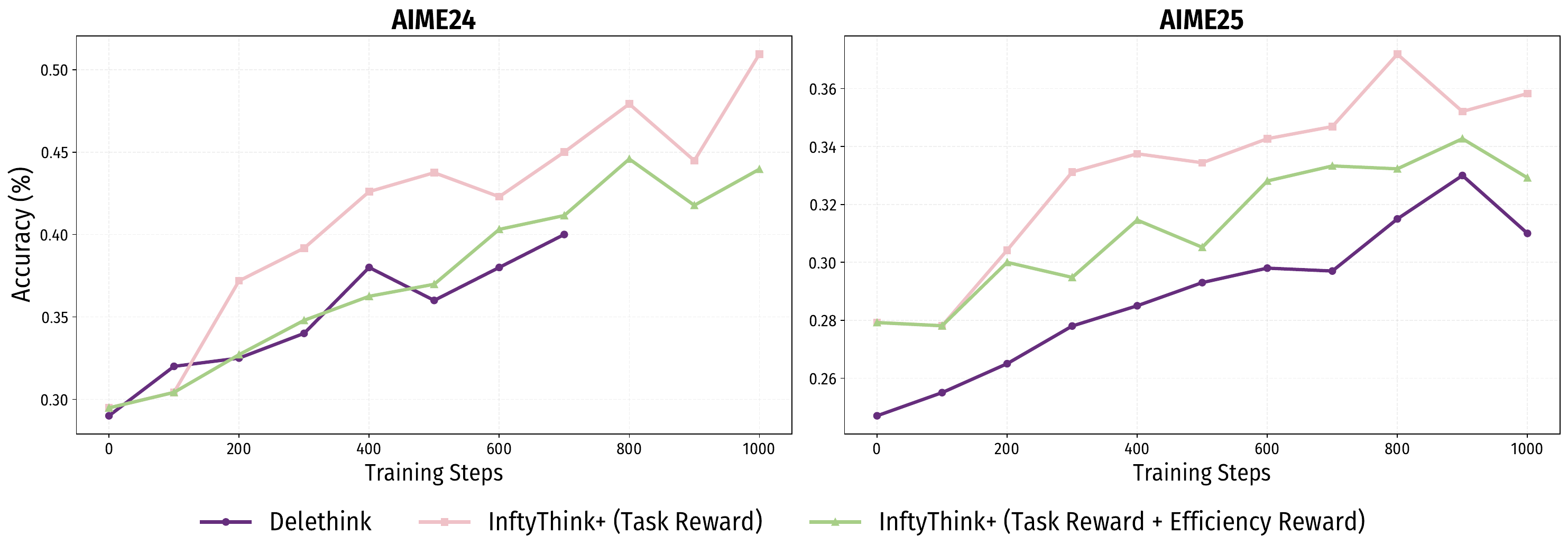}
    \caption{Checkpoint-level comparison with DeleThink on AIME24/25.We report accuracy (\%) along RL training steps for DeleThink and our \methodname under matched base model, training configuration, and dataset. DeleThink results are copied from their paper and are available only up to 700 training steps on AIME24, hence the truncated curve in the left panel. \methodname is shown in two variants: \textit{task reward only} and \textit{task reward + efficiency reward}.}
    \label{fig:delethink_comparison}
\end{figure}

As shown in Figure~\ref{fig:delethink_comparison}, \methodname consistently outperforms DeleThink throughout training on both AIME24 and AIME25.
On AIME24 (left), the \textit{task-reward-only} variant achieves the strongest gains, reaching substantially higher accuracy than DeleThink at comparable steps, while the \textit{task+efficiency} variant also remains above DeleThink despite being optimized with an additional efficiency objective.
Notably, the DeleThink curve on AIME24 ends at 700 steps because their paper only reports checkpoint evaluations up to this point.
On AIME25 (right), both \methodname variants yield clear improvements over DeleThink across the full training horizon, with \textit{task reward only} again providing the best overall accuracy and \textit{task+efficiency} tracking closely, indicating that incorporating efficiency optimization preserves most of the capability gains while improving runtime efficiency.

\section{Discussion: Why not Format Reward?}
A natural question is whether we should explicitly introduce an additional \emph{format reward} to encourage the model to follow the InftyThink protocol.
In our framework, however, the reasoning process is \emph{format-dependent} by design: each iteration relies on correctly parsing the model output into structured fields (e.g., \texttt{reasoning\_content} and \texttt{summary}) to construct the next-round prompt.
Once the format breaks, the parser fails, the iterative reasoning loop cannot proceed, and the rollout inevitably terminates without completing the task.

Consequently, \emph{format compliance is already implicitly enforced by the task reward}.
Any trajectory with malformed format is unable to reach a valid terminal answer and thus receives a low (often near-zero) task reward.
In other words, the task reward naturally subsumes the supervision signal for format correctness, making a separate format reward largely redundant.
Moreover, introducing an explicit format reward may skew optimization toward superficial template matching (over-emphasizing syntactic correctness) rather than improving the underlying reasoning quality, which is the primary goal of InftyThink+.

\end{document}